\newtheorem{theorem}{Theorem}[section]
\newtheorem{lemma}[theorem]{Lemma}
\newtheorem{proposition}[theorem]{Proposition}
\newtheorem{definition}{Definition}[section]
\Crefname{equation}{Eqn.}{Eqn.}
\definecolor{wine}{HTML}{830E0D}
\definecolor{nicered}{HTML}{B22222}
\definecolor{niceblue}{HTML}{0000FF}
\newcommand{\fancynumber}[1]{%
  \tikz[baseline=(char.base)]{
    \node[shape=circle,draw=black,fill=wine,inner sep=1pt](char){\color{white}#1};
  }%
}
\newcommand{\methodshort}{GAGA\xspace}
\begin{document}

%
\runningtitle{Geometry-Aware Generative Autoencoders}
\runningauthor{Sun, Liao, MacDonald, Zhang, Liu, Huguet, Wolf, Adelstein, Rudner, Krishnaswamy}

%

\twocolumn[

\aistatstitle{Geometry-Aware Generative Autoencoders for Warped Riemannian Metric Learning and Generative Modeling on Data Manifolds}
\vspace*{-10pt}
\aistatsauthor{
Xingzhi Sun$\footnotetext{Equal contrbution.}^{*\heartsuit}$ \hfill
Danqi Liao$^{*\heartsuit}$ \hfill
Kincaid MacDonald$^{\heartsuit}$ \hfill
Yanlei Zhang$^{\diamondsuit}$ \hfill
Chen Liu$^{\heartsuit}$ \hfill
\textbf{Guillaume Huguet}$^{\diamondsuit}$\hspace{2pt}\\[2pt]
\textbf{Guy Wolf}$^{\diamondsuit}$\hspace{2pt}
\textbf{Ian Adelstein}$^{\heartsuit \dag}$\hspace{2pt}
\textbf{Tim G. J. Rudner}$^{\clubsuit \dag}$\hspace{2pt}
\textbf{Smita Krishnaswamy}$^{\heartsuit \diamondsuit \dag}$\footnotetext{Corresponding authors: \url{ian.adelstein@yale.edu}, \url{tim.rudner@nyu.edu}, \url{smita.krishnaswamy@yale.edu}.}
\\[2pt]
}

\aistatsaddress{
$^{*}$Equal contribution. $^{\dag}$Corresponding authors.\\[2pt]
$^{\heartsuit}$Yale University \quad
$^{\spadesuit}$New York University \quad 
$^{\diamondsuit}$Mila - Quebec AI Institute and Universite de Montr\'eal}
]

\begin{abstract}
\vspace*{-10pt}Rapid growth of high-dimensional datasets in fields such as single-cell RNA sequencing and spatial genomics has led to unprecedented opportunities for scientific discovery, but it also presents unique computational and statistical challenges. Traditional methods struggle with geometry-aware data generation, interpolation along meaningful trajectories, and transporting populations via feasible paths. 
To address these issues, we introduce Geometry-Aware Generative Autoencoder (\methodshort), a novel framework that combines extensible manifold learning with generative modeling. \methodshort constructs a neural network embedding space that respects the intrinsic geometries discovered by manifold learning and learns a novel \emph{warped} Riemannian metric on the data space.
This warped metric is derived from both the points on the data manifold and negative samples off the manifold, allowing it to characterize a meaningful geometry across the entire latent space.
Using this metric, \methodshort can uniformly sample points on the manifold, generate points along geodesics, and interpolate between populations across the learned manifold. \methodshort shows competitive performance in simulated and real-world datasets, including a 30\% improvement over SOTA in single-cell population-level trajectory inference.

\end{abstract}

\begin{figure}[tb!]
    \centering
    \includegraphics[width=\columnwidth]{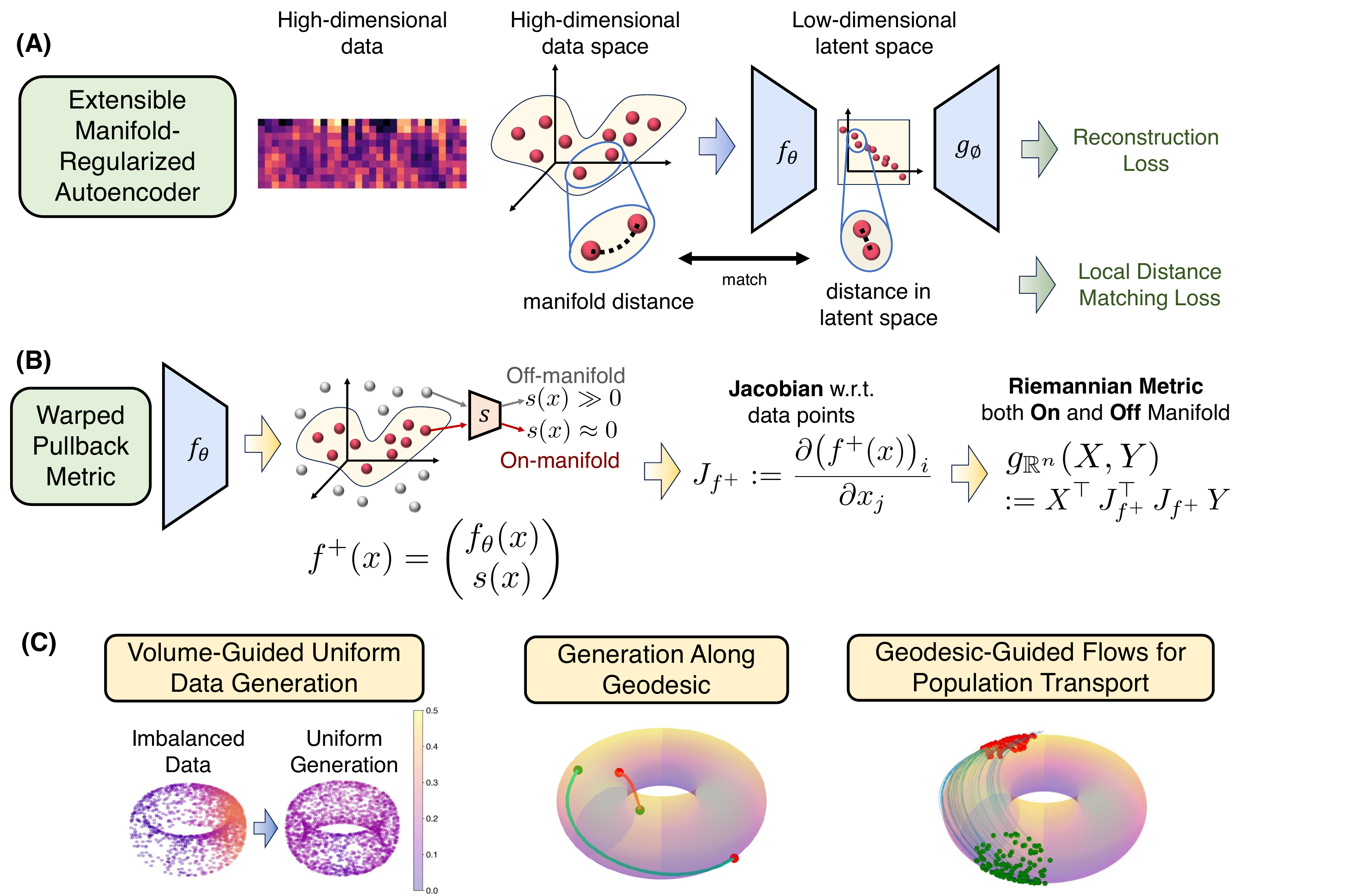}
    \vspace{-16pt}
    \caption{The Geometry-Aware Generative Autoencoder~(\methodshort) framework. \textbf{(A)} Training the networks. \textbf{(B)} Obtaining the warped pullback metric. \textbf{(C)} Challenging applications enabled by \methodshort.}
    \label{fig:schematic}
\vspace{-12pt}
\end{figure}

\addtocontents{toc}{\protect\setcounter{tocdepth}{-1}}

\vspace*{-2pt}
\section{INTRODUCTION}
\vspace*{-4pt}

Recent scientific discoveries are increasingly driven by the analysis of high-dimensional data across various fields, including single-cell RNA sequencing (scRNA-seq), spatial genomics, and many others~\citep{scRNAseq_discovery1, scRNAseq_discovery2, ATAQseq_discovery1, Spatial_discovery1, Spatial_discovery2}. These high-dimensional datasets offer unprecedented opportunities to explore complex physical and biological systems, but they also pose unique computational and statistical challenges.

\fancynumber{1}~First, it is difficult to generate new data points that faithfully follow the underlying data geometry (for example, to combat inconsistent or undersampling in parts of the data manifold) in the absence of explicit analytical forms describing the data, especially when data imbalance complicates the process~\citep{Imbalanced_data}. \fancynumber{2}~Second, interpolating between two samples along a meaningful trajectory, which is valuable for understanding transitions such as developmental progressions, remains challenging due to the complex and non-linear structure of the data~\citep{Distance_high_dim}. \fancynumber{3}~Third, aligning or transporting populations across different experimental conditions, time points, or biological states is a fundamental challenge, as traditional matching methods often fail to capture the complex dependencies and interactions inherent in high-dimensional spaces~\citep{Distribution_matching_challenges}.

When working with high-dimensional data, it is useful to consider the manifold hypothesis, which posits that such data often reside on a lower-dimensional manifold embedded within the high-dimensional data space~\citep{Manifold_hypothesis}. Building on this foundation, we propose a novel framework called the Geometry-Aware Generative Autoencoder~(\methodshort) to simultaneously address all three challenges.

\methodshort combines the power of extensible manifold learning with generative modeling. It first learns a generalizable neural network embedding space that respects the geometries discovered by non-linear dimensionality reduction techniques~(Figure~\ref{fig:schematic} panel A). 
Then, it derives a novel \emph{warped} pullback metric on the original data space~(Figure~\ref{fig:schematic} panel B). Uniquely, this metric is created as much by points not in the dataset as by points that are in the data. The warped metric is learned by embedding negative samples \textbf{off} the manifold and points \textbf{on} the manifold far away from each other in the latent space. This creates an implicit penalty for data generation and  geodesic computations, effectively nudging geodesics to stay within the data density, and generated points to stay within dimensions of the data.


Using this learned warped Riemannian metric, \methodshort can \fancynumber{1}~generate data across the data manifold guided by local volume, \fancynumber{2}~interpolate between two points along the manifold geodesics, and \fancynumber{3}~transport populations along these geodesics. These applications are illustrated in Figure~\ref{fig:schematic} panel C, and are described in Sections~\ref{sec:data_generation}, \ref{sec:interpolation}, and \ref{sec:population_transport}, respectively. In this way, \methodshort effectively addresses the challenges of geometry-aware data generation, interpolation, and population transport within a unified framework.

In summary, our main contributions are as follows:\vspace*{-3pt}
\begin{enumerate}[itemsep=-1pt, topsep=0pt]
    \item Designing a geometry-aware generative autoencoder that combines manifold learning with generative modeling.
    \item Proposing a novel \emph{warped} pullback metric to create a meaningful geometry on the entire data space, allowing \methodshort to stay on the manifold when generating points. 
    \item Introducing a new generative method that leverages the learned Riemannian pullback metric to achieve uniform sampling from the data manifold, interpolating data along geodesics, and transporting populations along geodesic paths.
    \item Demonstrating that the proposed methods work well on both simulated and real biological data.
\end{enumerate}

\newpage

\section{BACKGROUND}
\vspace*{-4pt}

\paragraph{Manifold Learning}
The \textit{Manifold Hypothesis} states that data often lie \textit{on} or \textit{near} a low-dimensional manifold within high-dimensional space~\citep{Manifold_hypothesis}. Manifold learning methods such as Diffusion Maps~\citep{DiffusionMaps}, PHATE~\citep{PHATE}, DSE~\citep{DSE}, DYMAG~\citep{DYMAG}, CUTS~\citep{CUTS}, and HeatGeo~\citep{HeatGeo} use diffusion probabilities to recover the geometry of the manifold despite the sparsity and noise in the data.
For details, see \Cref{appx:mfd_learning}.
\vspace{-4pt}

\paragraph{Riemannian Manifolds and Metrics}
An $n$-dimensional manifold $\mathcal{N}$ is a space locally resembling $\mathbb{R}^n$, and a Riemannian metric $g$ endows each tangent space $T_x\mathcal{N}$ with an inner product
$
g_x(X,Y)=X^Tg(x)Y,
$
with $g(x)$ an $n\times n$ matrix. The length of a tangent vector $X$ is $\|X\|=\sqrt{g_x(X,X)}$, and for a smooth curve $c:[0,T]\to\mathcal{N}$ the length is
$
L(c)=\int_0^T\sqrt{g_{c(t)}\bigl(\dot{c}(t),\dot{c}(t)\bigr)}\,dt.
$
If $\mathcal{N}$ is parametrized by $f(z)$, $z\in\mathcal{D}$, its volume is given by
$
\int_{\mathcal{D}}\sqrt{\det g(x)}\,dx.
$

A key component of our method is the \textit{Riemannian pullback metric}. Given a map $f:\mathcal{M}\to(\mathcal{N},g)$, its differential $df_x:T_x\mathcal{M}\to T_{f(x)}\mathcal{N}$ allows us to define
$
f^*g(X,Y)=g(df_xX,df_xY),
$
which equips $\mathcal{M}$ with the geometry inherited from $(\mathcal{N},g)$. For a detailed discussion, see \Cref{appx:reimannian}.

\vspace*{-2pt}
\section{METHODS}
\vspace*{-4pt}

In this section, we will describe the autoencoder and derive the Riemannian pullback metric~(Section~\ref{sec:dist_match}). Then, we will show solutions to the three challenges: geometry-aware data generation~(Section~\ref{sec:data_generation}), interpolation along meaningful trajectories~(Section~\ref{sec:interpolation}), and population transport~(Section~\ref{sec:population_transport}). Proofs for all lemmas and propositions are provided in \Cref{appx:proof}.


\subsection{Geometry-Aware Encoding for Both On-Manifold and Off-Manifold Points}
\label{sec:dist_match}


We first train an autoencoder to learn a latent space whose local Euclidean distances correspond to the data manifold distances. These distances can be obtained from many existing manifold-learning techniques, including PHATE and HeatGeo. We then derive a \textit{warped} metric on data space that allows us to produce a pullback Riemannian metric on the data manifold and impose large distances for points off the manifold. This warped metric enables us to compute on-manifold geodesics for data generation in later sections.


The following result from Riemannian geometry states that by matching data manifold distances in latent space (i.e., learning a local isometry), we construct the desired pullback metric on the data manifold.


\begin{proposition}
\label{prop:dist_match}
    For Riemannian manifolds $(\mathcal M,g_{\mathcal M}),(\mathcal N,g_{\mathcal N})$ and diffeomorphism $f:\mathcal M\to\mathcal N$, if $f$ is a local isometry, i.e., there exists $\epsilon>0, $ such that for any $x_0,x_1\in\mathcal M, d_{\mathcal M}(x_0,x_1)<\epsilon\implies d_{\mathcal M}(x_0,x_1)=d_{\mathcal N}(f(x_0),f(x_1))$, then we have $g_{\mathcal M}=f^*g_{\mathcal N}.$
\end{proposition}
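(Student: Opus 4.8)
The plan is to show that the pullback metric $f^*g_{\mathcal N}$ and $g_{\mathcal M}$ agree as bilinear forms on every tangent space $T_x\mathcal M$, which — since a Riemannian metric is determined pointwise — gives the equality of metrics. Because $f$ is a diffeomorphism, it suffices to work locally, and the local-isometry hypothesis lets us control lengths of short curves. The key observation is that a Riemannian metric is recovered from its induced distance function via a second-order expansion of $d_{\mathcal M}(x, \exp_x(tX))$ as $t\to 0$, so matching distances on small scales forces the metrics to coincide.

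First I would fix $x\in\mathcal M$ and a tangent vector $X\in T_x\mathcal M$, and consider the geodesic $\gamma(t)=\exp_x^{\mathcal M}(tX)$ of $(\mathcal M,g_{\mathcal M})$. For $t$ small enough that $d_{\mathcal M}(x,\gamma(t))<\epsilon$, the local-isometry hypothesis gives $d_{\mathcal M}(x,\gamma(t)) = d_{\mathcal N}(f(x),f(\gamma(t)))$. On the left, since $\gamma$ is a unit-speed-rescaled geodesic, $d_{\mathcal M}(x,\gamma(t)) = t\,\|X\|_{g_{\mathcal M}} + O(t^3)$ (in fact it is exactly $t\|X\|_{g_{\mathcal M}}$ for $t$ within the injectivity radius). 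On the right, $d_{\mathcal N}(f(x),f(\gamma(t))) \le L_{g_{\mathcal N}}(f\circ\gamma) = t\,\|df_x X\|_{g_{\mathcal N}} + O(t^2)$, and a matching lower bound of the same leading order follows from the fact that the distance between two nearby points in $\mathcal N$ is asymptotically the norm of the connecting tangent vector. Dividing by $t$ and letting $t\to 0$ yields $\|X\|_{g_{\mathcal M}} = \|df_x X\|_{g_{\mathcal N}} = \|X\|_{f^*g_{\mathcal N}}$ for every $X$.

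Then I would upgrade this equality of norms to equality of inner products by polarization: for any $X,Y\in T_x\mathcal M$,
\[
f^*g_{\mathcal N}(X,Y) = \tfrac14\bigl(\|X+Y\|_{f^*g_{\mathcal N}}^2 - \|X-Y\|_{f^*g_{\mathcal N}}^2\bigr) = \tfrac14\bigl(\|X+Y\|_{g_{\mathcal M}}^2 - \|X-Y\|_{g_{\mathcal M}}^2\bigr) = g_{\mathcal M}(X,Y).
\]
Since $x$ was arbitrary, this establishes $g_{\mathcal M} = f^*g_{\mathcal N}$ on all of $\mathcal M$.

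The main obstacle is making the asymptotic estimates on both sides rigorous and uniform enough to justify dividing by $t$ and taking the limit — in particular, getting the two-sided bound $d_{\mathcal N}(f(x),f(\gamma(t))) = t\|df_xX\|_{g_{\mathcal N}} + o(t)$, which requires knowing that distance in a Riemannian manifold is a first-order-faithful approximation of the tangent-vector norm (a standard fact from the properties of the exponential map and Gauss's lemma, but the step where care is needed). An alternative that sidesteps some of this: differentiate the distance-preservation relation directly, or invoke the classical theorem (Myers–Steenrod style) that a distance-preserving bijection between Riemannian manifolds is a smooth Riemannian isometry, applied on small metric balls where $f$ preserves distances exactly; I would mention this as the cleaner route if the $\epsilon$-local hypothesis is interpreted as exact distance preservation at small scales rather than merely up to the stated implication.
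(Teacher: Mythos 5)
Your proof is correct and follows essentially the same route as the paper's: both first establish the equality of norms $\|df_xX\|_{g_{\mathcal N}}=\|X\|_{g_{\mathcal M}}$ by comparing distances/lengths of short curves through $x$ and passing to the limit $t\to 0$, then upgrade to equality of the full bilinear forms via the polarization identity. The only cosmetic difference is that the paper phrases the first step in terms of arc lengths of arbitrary short curves under $f$, whereas you use the first-order expansion of the distance function along geodesics.
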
    
\vspace{-6pt}


To implement this construction, we define an autoencoder consisting of an encoder $\smash{f_\theta}$ and a decoder $\smash{h_\phi}$, both parameterized by neural networks.
The autoencoder is jointly optimized with a reconstruction objective (\Cref{loss:recon}) and a local distance matching objective (\Cref{loss:distance_matching}).\vspace*{-3pt}
\begin{align}
\label{loss:recon}
    \mathcal{L}_\mathrm{Recon}(\theta,\phi)
    = &\frac{1}{N}\sum_{i=1}^N||x_i-h_{\phi}(f_{\theta}(x_i))||_2^2\\
\label{loss:distance_matching}
    \mathcal{L}_\mathrm{Dist}(\theta)
    =& \frac{1}{N}\sum_{i<j}e^{-\zeta d(x_i,x_j)} \ell_{\textrm{SE}}(x_i, x_j, \theta) ,
\end{align}
where
\begin{align}
    \ell_{\textrm{SE}}(x_i, x_j, \theta)
    =
    \left(||f_{\theta}(x_i)-f_{\theta}(x_j)||_2-d(x_i,x_j)\right)^2,
\end{align}
$x_1, \ldots, x_N$ are the data samples, and $d(x_i, x_j)$ is the manifold distance between points $x_i$ and $x_j$ obtained via selected manifold-learning methods.
The hyperparameter $\zeta > 0$ and the term $e^{-\zeta d(x_i, x_j)}$ weigh the penalty towards the more important local geometry of the data manifold.

In summary, we minimize the following objective (\Cref{loss:total}) with respect to encoder and decoder parameters $\theta$ and $\phi$ to obtain geometry-aware embeddings.\vspace*{-3pt}
\begin{align}
\label{loss:total}
    \mathcal{L(\theta, \phi)}=\lambda_1 \mathcal{L}_\mathrm{Dist}(\theta) + \lambda_2 \mathcal{L}_\mathrm{Recon}(\theta, \phi)
\end{align}
This objective balances distance matching and reconstruction with hyperparameters $\lambda_1, \lambda_2$.
It results in an embedding that matches the data geometry and retains the information needed to reconstruct the data.

\paragraph{Pullback metric}
Next we show how to compute the pullback metric via the Jacobian of the encoder. The pullback (via the encoder) of the Euclidean metric from latent space yields a non-Euclidean data space metric, capturing local distances on the data manifold.

\begin{definition}
\label{def:gm}
    The pullback of the Euclidean metric from latent space to the data manifold $\mathcal M$ is defined by $g_{\mathcal M}(X,Y):=X^\top J_{f}^\top J_{f} Y$, where $X,Y\in T_{x}\mathcal M$ are tangent vectors at $x\in\mathcal M$, $J_f := \partial f_{\theta}(x)_i / \partial x_j$ is the Jacobian of $f_\theta$ at $x$.
\end{definition}


\paragraph {Warping the Local Euclidean Metric}
Although the construction above produces a pullback metric on the entire data space, it is only accurate near the training data, i.e., along the data manifold. For points off of the manifold, we  the local Euclidean metric to create large distances between on-and-off manifold points.  In order to achieve this,  we create a special embedding for both on-manifold points $x_i$ and off-manifold points $\check x_i$. These points are embedded in a latent space with an auxiliary dimension, where the value of that dimension represents the deviation from the manifold: it is nearly zero for on-manifold points and large for off-manifold points. 

Suppose we have a function $s(x)$ such that $s(x)\approx 0$ for $x$ on the manifold, and $s(x)$ increase as $x$ moves away from the manifold. Let
\begin{align}
\label{expn:extn}
    \hspace*{-3pt}f^{+}(x) =
    \left(\begin{matrix}
        f_\theta(x)
        \\
        \beta s(x)
    \end{matrix}\right)
    \text{, where }
        \beta \text{ is a hyperparameter}
\end{align}
\begin{definition}
\label{def:gr}
    The pullback of the warped local Eulidean metric on the full space $\mathbb R^n$ is defined by $g_{\mathbb{R}^{n}}(X,Y):=X^\top J_{f^{+}}^\top J_{f^{+}} Y$, where $X, Y \in T_{x} \mathbb{R}^n$ are tangent vectors at $x\in \mathbb R^n$, $J_{f^{+}} := \partial (f^{+}(x))_i / \partial x_j$ is the Jacobian of $f^{+}$ at $x$.
\end{definition}

Points off the manifold, where $s(x)$ is large, are placed into an extended dimension of latent space, far from the on-manifold points. 
Formally, we have:

\begin{lemma}
\label{lem:bilip}
    If there exists $\alpha \in \mathbb{R}$ such that for any 
    $x, \check x, \alpha||x-\check x||\leq |s(x)-s(\check x)|$.
    Then for any $x,\check x, ||f^{+}(x)-f^{+}(\check x)||\geq \alpha \beta ||x-\check x||$.
    Furthermore, denoting \mbox{$\mathcal D_{\mathcal M}(y):=\inf\nolimits_{x\in\mathcal M}||x-y||$} and 
    \mbox{$\mathcal D_{f^{+}(\mathcal M)}(y):=\inf\nolimits_{x\in\mathcal M}||f^{+}(x)-f^{+}(y)||$}, then for any $\check x,$ we have $ \mathcal D_{f^{+}(\mathcal M)}(\check x)\geq \alpha \beta \mathcal D_{\mathcal M}(\check x)$.
\end{lemma}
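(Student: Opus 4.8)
The plan is to exploit the block structure of $f^{+}$, which merely appends the scalar coordinate $\beta s(x)$ to the vector $f_\theta(x)$. Because the squared Euclidean norm of a stacked vector is the sum of the squared norms of its blocks, for any $x,\check x$ we have
\[
\|f^{+}(x)-f^{+}(\check x)\|^2=\|f_\theta(x)-f_\theta(\check x)\|^2+\beta^2\,|s(x)-s(\check x)|^2 .
\]
Discarding the first (nonnegative) term and then invoking the hypothesis $\alpha\|x-\check x\|\le|s(x)-s(\check x)|$ gives $\|f^{+}(x)-f^{+}(\check x)\|^2\ge\beta^2|s(x)-s(\check x)|^2\ge\alpha^2\beta^2\|x-\check x\|^2$; taking square roots yields the first claim $\|f^{+}(x)-f^{+}(\check x)\|\ge\alpha\beta\,\|x-\check x\|$, i.e.\ $f^{+}$ is expansive with constant $\alpha\beta$.

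For the ``furthermore'' part I would fix $\check x$ and apply the expansiveness bound pointwise over $\mathcal M$: for every $x\in\mathcal M$,
\[
\|f^{+}(x)-f^{+}(\check x)\|\ \ge\ \alpha\beta\,\|x-\check x\|\ \ge\ \alpha\beta\,\inf_{x'\in\mathcal M}\|x'-\check x\|\ =\ \alpha\beta\,\mathcal D_{\mathcal M}(\check x),
\]
where the middle inequality uses $\alpha\beta\ge 0$ together with the definition of $\mathcal D_{\mathcal M}$. Since the right-hand side is independent of $x$, I can take the infimum over $x\in\mathcal M$ on the left, which by the definition of $\mathcal D_{f^{+}(\mathcal M)}$ is exactly the asserted inequality $\mathcal D_{f^{+}(\mathcal M)}(\check x)\ge\alpha\beta\,\mathcal D_{\mathcal M}(\check x)$.

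There is no genuinely hard step here; the only point requiring a little care is the sign convention. The hypothesis bounds $\alpha$ only from above and is vacuous when $\alpha<0$, so we may take $\alpha\ge 0$; combined with the hyperparameter $\beta>0$ this makes $\alpha\beta$ a legitimate nonnegative expansion constant and justifies the monotonicity used in the infimum manipulation. If one prefers to stay agnostic about the sign of $\beta$, every occurrence of $\alpha\beta$ above should simply be read as $|\alpha\beta|$.
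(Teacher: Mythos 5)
Your proof is correct and follows essentially the same route as the paper's: decompose $\|f^{+}(x)-f^{+}(\check x)\|^2$ into the encoder block plus the auxiliary coordinate, discard the nonnegative encoder term, apply the lower Lipschitz bound on $s$, and take the infimum over $x\in\mathcal M$ for the second claim. Your remark about the sign of $\alpha$ is a small extra care the paper omits, but it does not change the argument.
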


This lemma assumes that $s$ satisfies a Lipschitz condition, meaning it grows moderately. In our framework, $s$ is a Wasserstein-GAN-style discriminator, and we enforce Lipschitz continuity via weight clipping and spectral normalization (see \Cref{appx:lipshitz}). This approach is supported by WGAN \citep[Section 3]{WGAN}, spectral normalization \citep[Section 2.1]{miyato2018spectral}, and Lipschitz discriminators \citep[Section 3]{tong2022fixing}.
In practice, we obtain such function $s(x)$ by training a discriminator with negative sampling. See \Cref{appx:auxiliary_dimension} for details.







Note that $g_\mathcal M$ is defined only on the tangent space of $\mathcal M$, whereas the warping allows $g_{\mathbb R^{n}}$ to be defined on the tangent space of the entire data space $\mathbb R^n$.

\subsection{Using the Learned Pullback Metric to Generate Uniformly on the Manifold}
\label{sec:data_generation}
\vspace{-6pt}

\begin{tcolorbox}[challenge]
Tackling Challenge 1: \textit{Volume-Guided Generation}.
\end{tcolorbox}

\begin{figure}[tb!]
    \centering
    \includegraphics[width=1.0\columnwidth]{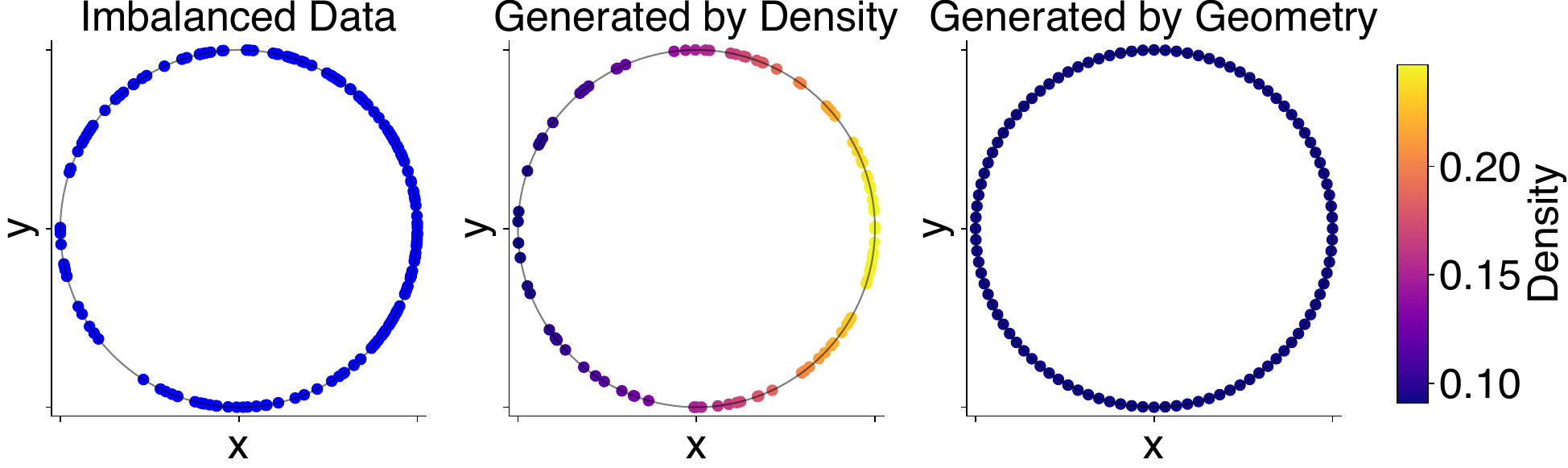}
    \caption{Density-based vs geometry-based generation. Left: Data has sampling imbalance. Middle: Density-based methods, e.g. Diffusion Model and Flow Matching, maintain this bias. Right: Geometry-aware generation alleviates imbalance by generating points uniformly across the manifold.}
    \label{fig:von_mises}
    \vspace{-8pt}
\end{figure}


Here we present a method for sampling uniformly across the data manifold. Notice that this method of generation is \textbf{markedly different} from generative methods that match distributions (and practically mainly the modes of the distribution) such as GANs and diffusion models. Here, rather than sampling from a \textit{probability distribution}, we sample from the \textit{geometry} or the shape of the data evenly. To do this, we utilize the pullback metric that we defined in the previous section to create a volume element that is useful for generation. By utilizing this learned metric, \methodshort enables us to correct for sampling biases and imbalances, ensuring uniform coverage of the manifold during data generation. See \Cref{fig:von_mises} for an illustration of this difference. 

We begin by defining the volume distribution function, which represents a uniform distribution on the manifold based on its intrinsic geometry.

\begin{definition}
\label{def:vol_dens}
    Let $g_{\mathcal M}$ be the Riemannian metric, of the manifold, define the \emph{volume distribution function} $p_{\text{vol}}(x)=\frac{1}{Z} \sqrt{\det g_{\mathcal M}(x)}$, where $Z=\int_{x\in \mathcal M}\sqrt{\det g_{\mathcal M}(x)}dx$, as the normalized volume element normalized to sum to 1. The corresponding probability distribution is defined as the \textit{uniform distribution on the manifold}.
\end{definition}

The intuition behind \Cref{def:vol_dens} can be illustrated with the example shown in \Cref{fig:spiral_vol}. Consider a spiral, which is a one-dimensional manifold. In this case, points are uniformly distributed along the spiral such that the curve lengths between adjacent points are equal. This is achieved by placing more points where the curve length (i.e., volume) is larger, ensuring that the point density remains consistent along the entire manifold. Essentially, the number of points per unit curve length remains constant, which makes the point density proportional to the volume element.

\begin{figure}[tb!]
    \centering
    \includegraphics[width=0.7\columnwidth]{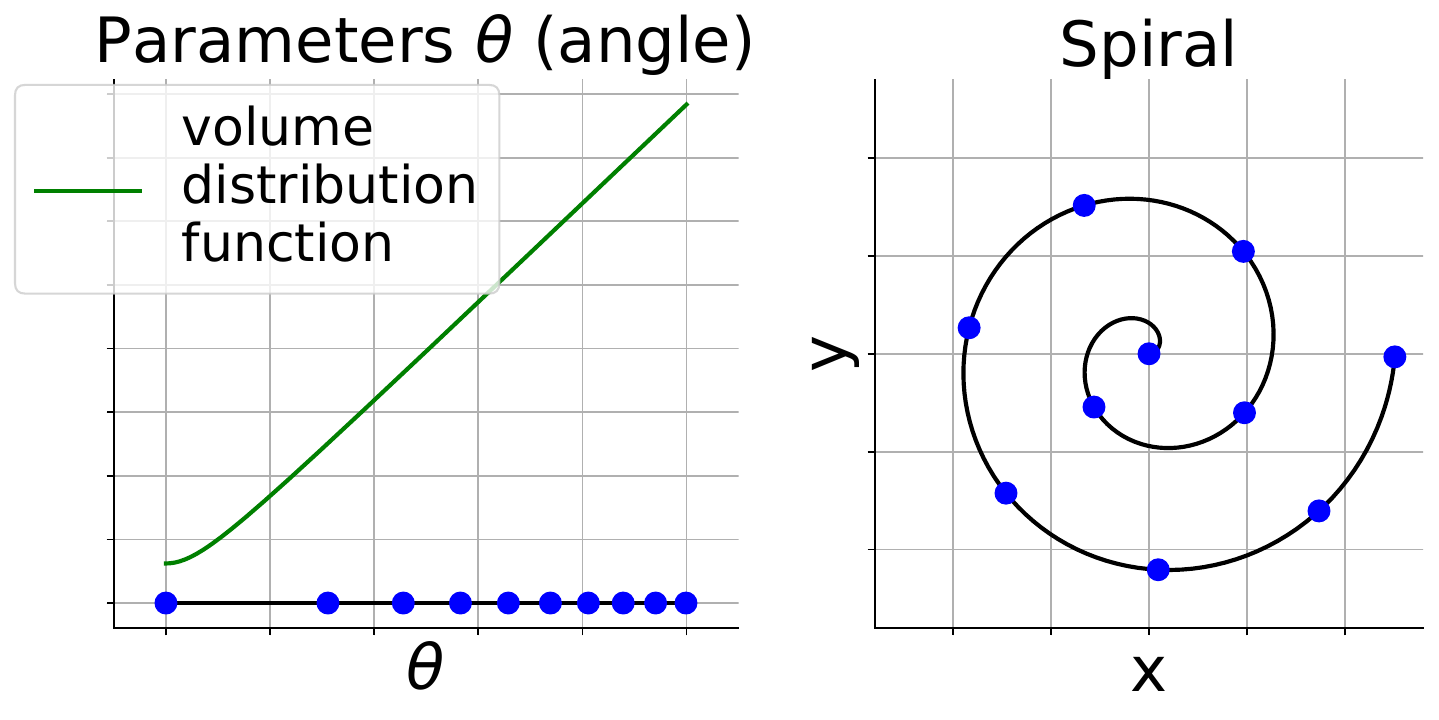}
    \vspace{-8pt}
    \caption{Demonstration of uniform sampling on a spiral (a 1D manifold). Left: In the space parameterized by polar angle, data (blue points) are distributed with density proportional to the volume distribution function (green curve), and may appear non-uniform. Right: In fact, corresponding data on the manifold (blue points) are equally spaced w.r.t. geodesic distance, and are therefore ``uniformly distributed''.}
    \label{fig:spiral_vol}
    \vspace{-8pt}
\end{figure}

Next, we propose an algorithm for generating uniformly on the manifold using Langevin dynamics, combined with the pullback metric learned by \methodshort. Our approach leverages Langevin dynamics to sample points while following the volume distribution function derived from the pullback metric, ensuring that generated points remain faithful to the manifold's intrinsic geometry. Specifically, we solve the following stochastic differential equation~(SDE).\vspace*{-3pt}
\begin{align}
\label{eqn:ld_vol}
\begin{split}
    dX_t &=-\nabla f_{\mathrm{target}}(X_t)dt+\sqrt{2}dW_t \\
    f_{\mathrm{target}}(x) &=\lambda s(x)-\log(f_{vol}(x)) ,
\end{split}
\end{align}
where $W_t$ represents Brownian motion. $f_{vol}(x):=\big|\prod_{i=1}^{d}\sigma_i(x)\big|$, and $\sigma_i(x), i=1,\dots, d$ are the singular values of the Jacobian matrix $J_f(x)$. This corresponds to the volume distribution function defined in \Cref{def:vol_dens} (up to a normalization factor). By multiplying the $d$ singular values, we obtain the square root of the pseudo-determinant, since $J_{f}$ has rank $d$, thereby avoiding degeneracy. The function $s$, as used in \Cref{expn:extn}, is designed to be close to 0 on the manifold and increases as $x$ moves away from the manifold, and its gradient will pull the generated points towards the manifold.
In practice, we use a Gaussian process to obtain $s$, as described in \Cref{appx:gp}.
The hyperparameter $\lambda > 0$ controls the balance between the volume distribution function and the manifold constraint.
In practice, we discretize this process using the Unadjusted Langevin Algorithm (ULA).



\begin{proposition}
\label{prop:volume_guidence}
    Suppose $f_{\mathrm{target}}(x)=\lambda s(x)-\log(f_{vol})(x)$ is $\alpha$-strongly convex for some constant $\alpha>0$, i.e. $\nabla^2 f(x)\succeq \alpha I$, then the distribution of $X$ in \Cref{eqn:ld_vol} converges exponentially fast in Wasserstein distance to a distribution supported on the data manifold, whose restriction on the manifold is proportional to the volume distribution function.
\end{proposition}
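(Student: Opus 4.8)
The plan is to recognize \Cref{eqn:ld_vol} as the overdamped Langevin diffusion associated with the potential $f_{\mathrm{target}}$, whose unique invariant measure is the Gibbs measure $\pi(dx)\propto \exp(-f_{\mathrm{target}}(x))\,dx = f_{vol}(x)\,e^{-\lambda s(x)}\,dx$, and then to invoke the classical fact that strong convexity of the potential forces exponential ergodicity in Wasserstein distance. First I would record, as standard facts to be cited rather than reproved, that under $\nabla^2 f_{\mathrm{target}}\succeq\alpha I$ (together with local smoothness of $f_{\mathrm{target}}$) the drift $-\nabla f_{\mathrm{target}}$ is one-sided Lipschitz and coercive, so the SDE has a unique strong solution for every initial law $\mu_0$ with finite second moment, and $\pi$ is well-defined and normalizable.

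The core estimate comes from a synchronous coupling. Let $X_t$ solve \Cref{eqn:ld_vol} from $X_0\sim\mu_0$ and let $Y_t$ solve the same SDE driven by the \emph{same} Brownian motion from $Y_0\sim\pi$, with $(X_0,Y_0)$ an optimal $W_2$-coupling; then $Y_t\sim\pi$ for all $t$ by stationarity. The difference $\Delta_t:=X_t-Y_t$ has vanishing diffusion term, so pathwise $\frac{d}{dt}\|\Delta_t\|^2 = -2\langle \Delta_t,\ \nabla f_{\mathrm{target}}(X_t)-\nabla f_{\mathrm{target}}(Y_t)\rangle \le -2\alpha\|\Delta_t\|^2$, the last inequality being the monotonicity of the gradient of an $\alpha$-strongly convex function. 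Taking expectations and applying Gr\"onwall's inequality gives $\mathbb{E}\|\Delta_t\|^2\le e^{-2\alpha t}\,\mathbb{E}\|\Delta_0\|^2$, hence $W_2(\mathrm{Law}(X_t),\pi)\le e^{-\alpha t}\,W_2(\mu_0,\pi)$, which is the claimed exponential convergence. (Equivalently one could invoke Bakry--\'Emery: $\nabla^2 f_{\mathrm{target}}\succeq\alpha I$ yields a log-Sobolev inequality with constant $\alpha$, hence exponential entropy decay and, via Otto--Villani/Talagrand, of $W_2$; the coupling argument is more elementary and self-contained.)

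It remains to identify $\pi$ with the claimed target. On the manifold, $f_{vol}(x)=\big|\prod_{i=1}^d\sigma_i(x)\big|$ is exactly the square root of the pseudo-determinant of $J_f(x)^\top J_f(x)$, i.e. $\sqrt{\det g_{\mathcal M}(x)}$, the pseudo-determinant discarding the $n-d$ zero singular values arising because $J_f$ has rank $d$; this is the unnormalized Riemannian volume element of \Cref{def:vol_dens}. The extra factor $e^{-\lambda s(x)}$ is $\approx 1$ on $\mathcal M$ (where $s\approx 0$) and decays as $x$ leaves $\mathcal M$, so $\pi$ concentrates around $\mathcal M$ and its conditional law on $\mathcal M$ is proportional to $\sqrt{\det g_{\mathcal M}}=p_{\mathrm{vol}}$.

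The main obstacle — and the point requiring the most care — is that $\alpha$-strong convexity of $f_{\mathrm{target}}$ on all of $\mathbb R^n$ makes $\pi$ a full-support, sub-Gaussian measure, which cannot literally be supported on the lower-dimensional set $\mathcal M$. I would therefore read the statement in the confining limit: taking $s$ to be a sharp, convex manifold-distance penalty and $\lambda\to\infty$, a Laplace-type (dominated-convergence / concentration) argument shows $\pi_\lambda$ converges weakly to the normalized volume measure $p_{\mathrm{vol}}$ on $\mathcal M$, while for each fixed $\lambda$ the coupling bound gives geometric $W_2$-convergence of $\mathrm{Law}(X_t)$ to $\pi_\lambda$; combining the two — uniformly in $\lambda$ if the strong-convexity constant stays bounded below, or by a diagonal argument otherwise — yields the stated conclusion. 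Reconciling the hypothesis $\nabla^2 f_{\mathrm{target}}\succeq\alpha I$ with the $\lambda\to\infty$ regime, where the minimizer of $f_{\mathrm{target}}$ migrates onto $\mathcal M$, is the delicate step. A closing remark that the Unadjusted Langevin discretization inherits this convergence up to an $O(\sqrt{\text{step size}})$ bias would round out the discussion.
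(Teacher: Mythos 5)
Your proposal is correct and follows essentially the same route as the paper's own proof: exponential $W_2$-contraction of the Langevin dynamics under $\alpha$-strong convexity (which you derive via synchronous coupling where the paper cites the standard textbook result and verifies stationarity through the Fokker--Planck equation), followed by identifying the invariant Gibbs measure $\propto e^{-\lambda s(x)}f_{vol}(x)$ and arguing it concentrates near $\mathcal{M}$ where $s\approx 0$. Your closing caveat---that a strongly convex potential yields a full-support invariant measure which cannot literally be supported on a lower-dimensional manifold, so the conclusion should be read in a confining $\lambda\to\infty$ sense---is a legitimate and sharper observation than the paper's informal ``$p(x)\approx 0$ away from the manifold,'' and does not detract from the correctness of your argument.
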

\Cref{prop:volume_guidence} demonstrates an exponential convergence rate of volume-guided generation in Wasserstein distance. Additionally, in \Cref{appx:vol_conv}, we present a proposition establishing exponential convergence rates in total variation distance.



\begin{algorithm}[tb!]
\label{alg:ula}
\caption{Volume-Guided Generation}
\begin{algorithmic}
  \STATE {\bfseries Input:} $s(x), f_{vol}(x)$, initial sample $\mathbf{x}_0$, step size $\eta$, number of steps $N$, threshold $\epsilon$
  \STATE {\bfseries Output:} Filtered final sample $\mathbf{x}_{N,\text{filtered}}$
  \STATE Initialize $\mathbf{x} \gets \mathbf{x}_0$
  \FOR{$t = 1$ to $N$}
    \STATE Sample Gaussian noise $\boldsymbol{\xi}_t \sim \mathcal{N}(0, I)$
    \STATE $\mathbf{x}_{t+1} \gets \mathbf{x}_t - \eta\nabla(\lambda s(x) - \log(f_{vol}(x))) + \sqrt{2\eta} \cdot \boldsymbol{\xi}_t$
  \ENDFOR
  \STATE $\mathbf{x}_{N,\text{filtered}} \gets \{\mathbf{x} \in \mathbf{x}_N : s(\mathbf{x}) < \epsilon\}$
  \STATE \textbf{Return} $\mathbf{x}_{N,\text{filtered}}$
\end{algorithmic}
\end{algorithm}

\subsection{Generating along Manifold Geodesics}
\label{sec:interpolation}
\vspace{-6pt}

\begin{tcolorbox}[challenge]
Tackling Challenge 2: \textit{On-Manifold Interpolation}.
\end{tcolorbox}

We now turn to the problem of generating  the geodesic between a pair of points on the data manifold. This is useful when points in a manifold could represent the time evolution of a system, such as in single cell sequencing. It has been shown that such data usually follow the manifold hypothesis~\citep{mfd_review}, and that geodesic generation can model cellular trajectories such as those taken during differentiation. 

One could try to find the curve which minimizes length with respect to the metric $g_{\mathcal M}$.
However, this metric is only accurate on the manifold, and such shortest paths might cut through data space.
Indeed, we need to minimize length under the condition that the curve stays on the manifold.
The main result of this section shows that this constrained optimization problem is actually solved by minimizing arc length with respect to the warped metric $g_{\mathbb R^{n}}$.
Intuitively, this metric imposes large penalties for deviating from the manifold, as off-manifold points are embedded into the dimension-extended latent space, forcing the shortest path onto the manifold.

We begin with a neural-network parameterized interpolation curve.
for any $x_0,x_1\in\mathcal M$, we define
a neural network-parameterized interpolation curve $c_\eta(x_0,x_1,\cdot):[0,1]\to\mathbb R^n$
satisfying $c_\eta(x_0,x_1,0)=x_0, c_\eta(x_0,x_1,1)=x_1$.
More details on parameterization are provided in \Cref{append:curve_param}. 
We minimize\vspace*{-3pt}
\begin{align}
\label{eq:geo_curve_loss}
    \mathcal L_\mathrm{Geo}(\eta, x_0,x_1)
    =
    \frac{1}{M} \sum_{m=1}^{M} {g_{\mathbb R^{n}} (\dot c_\eta, \dot c_\eta)}(x_0,x_1,t_m)
\end{align}
where $0=t_0<t_1<...<t_M=1$ are sampled time points. 
Note that \Cref{eq:geo_curve_loss} is a discretization of the integral $\int_{0}^{1}{g_{\mathbb R^{n}} (\dot c_\eta, \dot c_\eta)}(x_0,x_1,t)dt$.
In \citet{do1992riemannian}, this is defined as the energy of the curve, and minimizing the energy is equivalent to minimizing the curve length~\citep[Chapter 9, Proposition 2.5]{do1992riemannian}.

The following proposition demonstrates that geodesic computation on $\mathcal M$ can be achieved by minimizing arc length with respect to the metric $g_{\mathbb R^{n}}$.

\begin{lemma}
\label{prop:geod}
Assume that the $\omega$-thickening of the manifold $\mathcal{M} \subset \mathbb{R}^n$, defined as
$
\mathcal{M}^{\omega} := \{ x \in \mathbb{R}^n : \inf_{m \in \mathcal{M}} d(x, m) < \omega \},
$
(i.e., the set of points whose distance from $\mathcal{M}$ is less than $\omega$) maps into a subset of the $\epsilon$-thickening of $f(\mathcal{M})$. Here, the $\epsilon$-thickening is defined analogously, with $\epsilon$ chosen such that for every $x \in f(\mathcal{M})$, the ball
$
B_\epsilon(x) := \{ y \in \mathbb{R}^n : \|y - x\| < \epsilon \}
$
intersects $f(\mathcal{M})$ in exactly one connected component.

Then, for any smooth curve $c:[0,1] \to \mathbb{R}^n$ connecting $x_0$ and $x_1$ (i.e., $c(0) = x_0$ and $c(1) = x_1$), there exists a smooth curve $c':[0,1] \to \mathcal{M}$ lying entirely \textbf{on the manifold} (with $c'(0) = x_0$ and $c'(1) = x_1$) such that
$
\mathcal{L}_{\text{Geo}}(c') \leq \mathcal{L}_{\text{Geo}}(c) - \alpha^2 \beta^2 \frac{1}{M} \sum_{m=1}^M \bigl( \mathcal{D}_{\mathcal{M}}(c(t_m)) - \mathcal{D}_{\mathcal{M}}(c(t_{m-1})) \bigr)^2 + \xi.
$
Here, $\alpha$ is defined as in \Cref{lem:bilip}, $\mathcal{D}_{\mathcal{M}}$ denotes the distance from a point to $\mathcal{M}$ (also as in \Cref{lem:bilip}), and $\xi$ is a fixed positive constant independent of $x_t$ and $\beta$.
\end{lemma}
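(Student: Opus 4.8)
\textbf{Proof plan for Lemma~\ref{prop:geod}.}
The strategy is to take an arbitrary smooth curve $c$ in $\mathbb R^n$ joining $x_0,x_1$ and explicitly build a competitor curve $c'$ lying on $\mathcal M$ by projecting $c$ onto the manifold. First I would invoke the thickening hypothesis: the image $f(\mathcal M)$ has a tubular neighborhood of radius $\epsilon$ in which the nearest-point projection $\pi$ onto $f(\mathcal M)$ is well defined and smooth (this is exactly what the ``intersects in exactly one connected component'' condition buys us). Restricting attention to curves that already lie in $\mathcal M^\omega$ — or truncating/repairing $c$ so that its energy only increases by at most $\xi$ if it strays outside, which is where the additive slack $\xi$ comes from — I can set $\tilde c := \pi \circ f \circ c$ and then $c' := f^{-1}\circ \tilde c$, using that $f$ is a diffeomorphism onto its image; by construction $c'(0)=x_0$, $c'(1)=x_1$, and $c'$ lies entirely on $\mathcal M$.

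The heart of the argument is the energy comparison. I would write the discretized energy $\mathcal L_{\mathrm{Geo}}(c) = \frac1M\sum_m g_{\mathbb R^n}(\dot c,\dot c)(t_m)$ and, using the block structure of $f^+ = (f_\theta, \beta s)$, decompose $g_{\mathbb R^n}(\dot c,\dot c) = \|J_{f_\theta}\dot c\|^2 + \beta^2 (\nabla s\cdot \dot c)^2$. The first term is $\geq g_{\mathcal M}(\dot{\tilde c},\dot{\tilde c})$-type quantity for the projected curve because projecting onto a manifold is $1$-Lipschitz in the induced metric (the differential of $\pi$ is an orthogonal projection, hence non-expanding), so the on-manifold part of the energy does not increase. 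The second, auxiliary term is where we gain: since $c'$ lies on $\mathcal M$ we have $s(c'(t))\approx 0$, so its contribution is negligible, whereas for $c$ itself the telescoping sum $\sum_m \beta^2(\nabla s\cdot\dot c)^2 \Delta t \gtrsim \beta^2 \sum_m (s(c(t_m)) - s(c(t_{m-1})))^2$ by Cauchy--Schwarz on each subinterval, and then the lower Lipschitz bound from \Cref{lem:bilip}, $\alpha\|x-\check x\|\le |s(x)-s(\check x)|$, lets us replace $s$-differences by $\alpha$ times $\mathcal D_{\mathcal M}$-differences, yielding exactly the $-\alpha^2\beta^2\frac1M\sum_m(\mathcal D_{\mathcal M}(c(t_m)) - \mathcal D_{\mathcal M}(c(t_{m-1})))^2$ term. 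Collecting the two parts gives the claimed inequality, with $\xi$ absorbing the discretization error in the Cauchy--Schwarz step and the small $s$-value on $\mathcal M$ (and the repair cost if $c$ exits $\mathcal M^\omega$).

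The main obstacle, and the step I would be most careful about, is making the projection $\pi\circ f$ genuinely well defined and non-expanding \emph{globally along the whole curve} rather than just locally: if $c$ wanders far from $\mathcal M$ (outside $\mathcal M^\omega$), the nearest-point projection can be multivalued or discontinuous, and the ``one connected component'' condition only controls things within the $\epsilon$-tube. I would handle this either by first observing that any curve with energy larger than that of the straight-line-in-latent-space competitor can be discarded (so WLOG $c$ stays in the tube), or by explicitly constructing the repaired curve that snaps back to $\mathcal M^\omega$ at a cost bounded by the fixed constant $\xi$ — this is the cleanest place to hide the $\beta$-independent slack. A secondary technical point is controlling the discretization: passing between the continuous energy integral and the sampled sum $\frac1M\sum_m$, and between $(\nabla s\cdot\dot c)^2$ integrated and $(s(c(t_m))-s(c(t_{m-1})))^2$, introduces $O(1/M)$ errors that must be shown to be independent of $\beta$ and $x_t$ so they can be folded into $\xi$; I would do this with a uniform bound on the second derivatives of $s$ along curves of bounded energy.
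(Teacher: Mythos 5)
Your proposal matches the paper's proof in all essentials: the same decomposition of the warped metric $J_{f^+}^\top J_{f^+} = J_{f_\theta}^\top J_{f_\theta} + \beta^2\,\nabla s\,\nabla s^\top$, the same telescoping lower bound on the auxiliary term converted to $\mathcal D_{\mathcal M}$-differences via \Cref{lem:bilip}, the same use of $s\approx 0$ on $\mathcal M$ to make the auxiliary energy of $c'$ negligible, and the same absorption of the discretization and approximation errors into the fixed constant $\xi$. The only substantive difference is in how $c'$ is built --- the paper uses a finite-subcover argument with the $\epsilon$-balls and the one-connected-component hypothesis rather than a nearest-point projection --- and your claim that the projection is exactly non-expanding should be weakened to non-expanding up to a factor $1+O(\epsilon)$ (curvature of $f(\mathcal M)$ can expand tangential lengths inside the tube; the paper's uniform bound $K\epsilon$ on the difference of the $J_f^\top J_f$ terms plays exactly this role), which changes nothing downstream.
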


\scalebox{0.96}{
\begin{minipage}{1.03\linewidth}
\begin{proposition}
\label{prop:geod_minim}
When $\mathcal L_{\text{Geo}}$ is minimized, $\max\nolimits_{m=1,\dots,M} \mathcal D_{\mathcal M}(c(t_m))\le \sqrt{\xi}/(\alpha\beta)$, i.e., for sufficiently large $\beta$, $c(t)$ is close to the manifold with a maximum distance of $\sqrt{\xi}/(\alpha\beta)$.
Furthermore, let $c'(t)$ be a geodesic between $x_0$ and $x_1$ under the metric $g_{\mathcal M}$, we have $\smash{\frac{1}{M}} \smash{\sum_{m=1}^{M} {g_{\mathcal M} (\dot c, \dot c)}(x_0,x_1,t_m)} \leq \smash{\frac{1}{M}} \sum_{m=1}^{M} {g_{\mathcal M} (\dot c', \dot c')}(x_0,x_1,t_m)+\xi'\sqrt{\xi}/(\alpha\beta)$ for some positive constant $\xi'$. That is, $c$ approximately minimizes the energy (and hence curve length) under $g_{\mathcal M}.$
\end{proposition}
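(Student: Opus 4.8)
The plan is to exploit the per-curve improvement inequality of \Cref{prop:geod} together with the minimality of $c$, and then to pass between the warped energy $\mathcal L_{\mathrm{Geo}}$ — which by \Cref{eq:geo_curve_loss} is exactly the sampled $g_{\mathbb R^n}$-energy — and the sampled $g_{\mathcal M}$-energy, using the algebraic identity that follows from \Cref{def:gr} and \Cref{expn:extn}: since $J_{f^{+}}^\top J_{f^{+}} = J_f^\top J_f + \beta^2\,\nabla s\,\nabla s^\top$, we get $g_{\mathbb R^n}(X,X) = g_{\mathcal M}(X,X) + \beta^2(\nabla s(x)\cdot X)^2 \ge g_{\mathcal M}(X,X)$ pointwise, as quadratic forms.

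For the first claim, let $c$ minimize $\mathcal L_{\mathrm{Geo}}$ over curves joining $x_0,x_1$. Feeding $c$ into \Cref{prop:geod} gives an on-manifold curve $c'$ with the same endpoints and $\mathcal L_{\mathrm{Geo}}(c') \le \mathcal L_{\mathrm{Geo}}(c) - \alpha^2\beta^2\frac1M\sum_{m}\big(\mathcal D_{\mathcal M}(c(t_m)) - \mathcal D_{\mathcal M}(c(t_{m-1}))\big)^2 + \xi$. Minimality forces $\mathcal L_{\mathrm{Geo}}(c) \le \mathcal L_{\mathrm{Geo}}(c')$, hence $\frac1M\sum_m\big(\mathcal D_{\mathcal M}(c(t_m)) - \mathcal D_{\mathcal M}(c(t_{m-1}))\big)^2 \le \xi/(\alpha^2\beta^2)$. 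Since $c(t_0)=x_0\in\mathcal M$, we have $\mathcal D_{\mathcal M}(c(t_0))=0$, so for every $m$ a telescoping sum plus Cauchy–Schwarz bounds $\mathcal D_{\mathcal M}(c(t_m)) = \sum_{k\le m}\big(\mathcal D_{\mathcal M}(c(t_k)) - \mathcal D_{\mathcal M}(c(t_{k-1}))\big)$ by $\sqrt\xi/(\alpha\beta)$, up to a discretization-dependent factor that I absorb into $\xi$ (admissible since $\xi$ is declared independent of $\beta$). Taking the maximum over $m$ gives the bound, and $\sqrt\xi/(\alpha\beta)\to 0$ as $\beta\to\infty$ shows the minimizing curve is forced onto $\mathcal M$.

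For the second claim, set $\rho := \sqrt\xi/(\alpha\beta)$. From $g_{\mathbb R^n}\succeq g_{\mathcal M}$ we get $\mathcal L_{\mathrm{Geo}}(c) \ge \frac1M\sum_m g_{\mathcal M}(\dot c,\dot c)(t_m)$. Let $c'$ now denote a $g_{\mathcal M}$-geodesic from $x_0$ to $x_1$; it lies on $\mathcal M$, where $s$ and its tangential gradient are small — this is precisely what the discriminator training and the Lipschitz constraints behind \Cref{lem:bilip} enforce (in the idealized case $s\equiv 0$ on $\mathcal M$ the extra contribution vanishes outright). Thus $\mathcal L_{\mathrm{Geo}}(c') = \frac1M\sum_m g_{\mathcal M}(\dot c',\dot c')(t_m) + \mathrm{err}$, with $\mathrm{err}$ estimated from the Lipschitz moduli of $s$ and of $J_f^\top J_f$, a uniform speed bound on $c'$, and the radius $\rho$, yielding $\mathrm{err}\le\xi'\rho = \xi'\sqrt\xi/(\alpha\beta)$ for some $\xi'$ independent of $\beta$. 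Chaining $\frac1M\sum_m g_{\mathcal M}(\dot c,\dot c)(t_m) \le \mathcal L_{\mathrm{Geo}}(c) \le \mathcal L_{\mathrm{Geo}}(c') = \frac1M\sum_m g_{\mathcal M}(\dot c',\dot c')(t_m) + \xi'\sqrt\xi/(\alpha\beta)$ is the stated inequality; since minimizing energy is equivalent to minimizing length~\citep[Chapter 9, Proposition 2.5]{do1992riemannian}, $c$ also approximately minimizes $g_{\mathcal M}$-length.

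I expect the main obstacle to be the quantitative control of $\mathrm{err}$ (and of the discretization factor swept into $\xi$): one must fix regularity hypotheses — Lipschitz continuity of $x\mapsto J_f(x)^\top J_f(x)$, smallness and Lipschitzness of $s$ near $\mathcal M$, and a uniform speed bound on the comparison curves — and then check that replacing a point within $\rho$ of $\mathcal M$ by its nearest point on $\mathcal M$ perturbs $g_{\mathcal M}(\dot c,\dot c)$ by only $O(\rho)$, so that the aggregated constant $\xi'$ genuinely has no $\beta$-dependence. Ensuring the curve produced by \Cref{prop:geod} can be taken smooth and suitably reparametrized without inflating these constants, and matching the discrete sums over $t_m$ to the underlying energy integral, is the remaining routine bookkeeping.
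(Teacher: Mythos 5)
Your argument is correct and follows essentially the same route as the paper: part one is exactly the paper's proof (apply \Cref{prop:geod} to the minimizer, invoke minimality to bound $\frac{1}{M}\sum_m(\mathcal D_{\mathcal M}(c(t_m))-\mathcal D_{\mathcal M}(c(t_{m-1})))^2$ by $\xi/(\alpha^2\beta^2)$, then telescope from $\mathcal D_{\mathcal M}(c(t_0))=0$ — the paper sorts via a permutation and applies Jensen, you use Cauchy--Schwarz, and both incur the same discretization factor that gets absorbed into $\xi$), while for part two the paper only remarks that it ``follows from the Lipschitz property of $s$ and the smoothness of $f$,'' and your chain $\frac1M\sum_m g_{\mathcal M}(\dot c,\dot c)\le\mathcal L_{\mathrm{Geo}}(c)\le\mathcal L_{\mathrm{Geo}}(c')$ via the decomposition $J_{f^+}^\top J_{f^+}=J_f^\top J_f+J_s^\top J_s$ is the intended expansion of that remark. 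The only caveat, shared with the paper, is that the residual term $\frac1M\sum_m\dot c'^\top J_s^\top J_s\dot c'$ along the on-manifold geodesic carries a factor $\beta^2$ and is controlled only by the assumption that $s$ is (nearly) constant on $\mathcal M$, so attributing the error solely to the radius $\rho$ is slightly imprecise — but no more so than the original.
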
    
\end{minipage}
}

\Cref{prop:geod} shows that for any curve connecting two points on the manifold, there exists an alternative curve where the loss difference is controlled by the difference in their distances from the manifold.
\Cref{prop:geod_minim} then uses this result to establish that a necessary condition for minimizing the loss is that the curve remains sufficiently close to the manifold. Moreover, it demonstrates that the minimizer’s energy is nearly equal to that of the true geodesic. Thus, the minimizer is (approximately) 1) on the manifold and 2) of minimal length, and is therefore the geodesic.

In summary, minimizing \Cref{eq:geo_curve_loss} yields the geodesic on $\mathcal{M}$ between $x_0$ and $x_1$ with respect to the pullback metric $g_{\mathcal{M}}$. This is achieved by minimizing the curve length under the warped pullback metric.


\begin{algorithm}[!t]

  \caption{Geodesic-Guided Flow Matching}\label{alg:flow_matching}
  \scalebox{0.89}{
    \begin{minipage}[t]{1.1\textwidth}
    \begin{algorithmic}
    \STATE {\bfseries Input:} Starting and ending populations $\mathcal X, \mathcal Y$, encoder $f$, \\dimension-extended encoder $r$, $t=(t_1,...,t_M)$
    \WHILE{Training}
        \STATE \textbf{Sample batches of size $b$ \textit{i.i.d.} from the datasets}
        \STATE Sample $\{x_1,...,x_l\}\subset \mathcal X, \{y_1,...,y_l\}\subset \mathcal Y$
        \STATE $\mu \gets \frac{1}{l}\sum_{i=1}^l I(x=x_i), \nu \gets \frac{1}{l}\sum_{i=1}^l I(x=y_i)$
        \STATE $\pi^* = \underset{\pi \sim \Gamma(\mu, \nu)}{\arg\min} \left(\frac{1}{l}\sum_{i=1}^l \pi(x'_i, y'_i) ||f(x'_i) - f(y'_i)||^2 \right)^{1/2}$
        \STATE Sample $(x_{j_1}, y_{j_1}), ..., (x_{j_l}, y_{j_l}) \overset{i.i.d.}{\sim} \pi^*$
        \STATE \textbf{Compute geodesic and velocity-matching losses}
        \STATE $L \gets \frac{1}{l}\sum_{i=1}^l (\lambda_3 \mathcal L_{\text{geo}}(\eta, x_{j_i}, y_{j_i}) + \lambda_4 \mathcal L_{\text{FM}}(\nu, \eta, x_{j_i}, y_{j_i}))$
        \STATE $\eta, \nu \gets \mathrm{GradientDescentUpdate}(\eta, \nu, \nabla L)$
    \ENDWHILE
    \STATE \textbf{Output:} $\nu$
    \end{algorithmic}
    \end{minipage}
}
\end{algorithm}


\subsection{Population Interpolation along geodesics}
\label{sec:population_transport}
\vspace{-6pt}

\begin{tcolorbox}[challenge]
Tackling Challenge 3: \textit{Population Transport}.
\end{tcolorbox}

In the previous section, we achieved point-wise geodesic computation, learning the geodesic between a pair of points.
More generally, we aim to generate population-level geodesics.
Given two distributions on the manifold, we want to generate geodesics between populations sampled from these distributions, minimizing the expected total length of the geodesics.
This equates to solving the dynamical optimal transport problem \citep{TrajectoryNet, benamou2000computational}, where the cost is the curve length on the manifold.

To solve this, we first find the optimal pairing of points from the starting and ending distributions to minimize total geodesic length, then compute those geodesics.
To generalize to new points, we learn a vector field matching the time derivatives (speed) of the geodesics.
Given a point sampled from the first distribution, we can generate the geodesic by integrating the vector field starting from the point.


Specifically, we define a neural network
$
    v_\nu(x_0,t)\in\mathbb R^{n},
$
and the flow matching loss for any joint distribution $\pi$ and curve $c_\eta$ as the following.\vspace{-3pt}
\begin{align}
\label{eq:flow_loss}
\begin{split}
    &\mathcal{L}_\mathrm{FM}(\nu,\eta, x_0,x_1)
    \\&
    =\mathbb{E}_{\pi(x_0, x_1)}||v_\nu(t,x_0)-\frac{d}{d t}c_\eta(t, x_0,x_1)||^2    
\end{split}
\end{align}
\vspace{-12pt}

When this loss is minimized, $v_\nu$ is the vector field that matches the time derivatives of the curves. 

In each training step, we sample starting and ending points from the two distributions, and solve the optimal transport problem where the ground distance is the Euclidean distance in the latent space. 
This optimal transport plan $\pi$ would minimize the total geodesic length between $(x_0,x_1)\sim\pi$, because \methodshort is trained so that the Euclidean distance in the latent space is matched to the geodesic distance on the data manifold.
We then parameterize the interpolation curves $c_{\eta}$ as in \Cref{sec:interpolation}, and minimize the following loss which balances the loss \Cref{eq:geo_curve_loss} that the $c_{\eta}$ are the geodesics, and the aforementioned flow matching loss (\Cref{eq:flow_loss}).\vspace*{-3pt}
\begin{align}
\begin{split}
    &
    \mathcal L_{\mathrm{GFM}}(\nu,\eta,x_0,x_1)
    \\
    &
    =
    \lambda_3 \mathcal L_\mathrm{geo}(\eta,x_0,x_1)+\lambda_4 \mathcal{L}_\mathrm{FM}(\nu,\eta,x_0,x_1)
\end{split}
\label{eqn:loss_flow_matching}
\end{align}
\vspace{-12pt}

Further details are provided in \Cref{alg:flow_matching}.

After training, we generate the geodesics by integrating the vector field $v_\eta$.
Given an initial point $x_0$, we can generate points along the geodesics starting from it with $x(t)=x_0 + \int_{0}^t v_\nu (x_0,\tau)d\tau$.


Finally, the following proposition shows that our method generates desired population-level geodesics.
\begin{proposition}
\label{prop:flow_matching}
    Given starting and ending distributions $p,q$,
    at the convergence of \Cref{alg:flow_matching}, 
    \begin{align*}
        x(t)=x_0 + \int_{0}^t v_\nu (x_0,\tau)d\tau, x_0\sim p,t\in [0,1]
    \end{align*}
    are geodesics between the two distributions following the optimal transport plan that minimizes the total expected geodesic lengths. 
\end{proposition}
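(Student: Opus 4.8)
The plan is to read the conclusion off from the three objectives that are driven to their minima at convergence of \Cref{alg:flow_matching}: the geodesic loss $\mathcal L_{\mathrm{geo}}$, the flow-matching loss $\mathcal L_{\mathrm{FM}}$, and the inner optimal-transport subproblem producing $\pi^*$. \emph{Step 1 (the curves $c_\eta$ are geodesics).} At a minimizer of \Cref{eqn:loss_flow_matching} the term $\mathcal L_{\mathrm{geo}}(\eta,x_0,x_1)$ is itself minimized for $\pi^*$-almost every pair $(x_0,x_1)$, so by \Cref{prop:geod} and \Cref{prop:geod_minim} the curve $c_\eta(x_0,x_1,\cdot)$ stays within $\sqrt{\xi}/(\alpha\beta)$ of $\mathcal M$ and its $g_{\mathcal M}$-energy exceeds that of the true geodesic by at most $\xi'\sqrt{\xi}/(\alpha\beta)$; in the idealized regime (large $\beta$, exact minimization of the warped energy) $c_\eta(x_0,x_1,\cdot)$ is the constant-speed $g_{\mathcal M}$-geodesic joining $x_0$ to $x_1$, so $\int_0^1 g_{\mathcal M}(\dot c_\eta,\dot c_\eta)\,dt = d_{\mathcal M}(x_0,x_1)^2$.

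\emph{Step 2 (the field $v_\nu$ reproduces these curves).} For fixed $\eta$, minimizing \Cref{eq:flow_loss} over vector fields that depend only on $(t,x_0)$ yields the conditional expectation $v_\nu(t,x_0)=\mathbb E_{X_1\sim\pi^*(\cdot\mid x_0)}\big[\tfrac{d}{dt}c_\eta(t,x_0,X_1)\big]$. Here I would invoke the assumption that $\pi^*$ is a deterministic (Monge) coupling $\pi^*=(\mathrm{id},T)_\#p$ — guaranteed, e.g., by McCann's theorem when $p$ is absolutely continuous with respect to the Riemannian volume and the ground cost is the squared geodesic distance — so the conditional expectation collapses to $v_\nu(t,x_0)=\tfrac{d}{dt}c_\eta(t,x_0,T(x_0))$. \emph{Step 3 (integral curves).} Since $v_\nu(x_0,\cdot)$ depends only on the starting point and time, the generated path obeys $\tfrac{d}{dt}x(t)=v_\nu(x_0,t)=\tfrac{d}{dt}c_\eta(t,x_0,T(x_0))$ with $x(0)=x_0=c_\eta(0,x_0,T(x_0))$; integrating gives $x(t)=c_\eta(t,x_0,T(x_0))$ for all $t\in[0,1]$, which by Step 1 is the $g_{\mathcal M}$-geodesic from $x_0$ to $x(1)=T(x_0)$, and since $T_\#p=q$ we get $x(1)\sim q$. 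Thus $\{x(\cdot):x_0\sim p\}$ consists of geodesics joining $p$ to $q$.

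\emph{Step 4 (optimality of the pairing).} The inner step of \Cref{alg:flow_matching} returns the $W_2$-optimal plan for the ground cost $\|f(x_0)-f(x_1)\|^2$. Because \methodshort's encoder is a local isometry (\Cref{prop:dist_match}), latent Euclidean distance matches manifold geodesic distance, $\|f(x_0)-f(x_1)\|=d_{\mathcal M}(x_0,x_1)$, so $\pi^*$ minimizes $\mathbb E_\pi d_{\mathcal M}(x_0,x_1)^2$ over $\Gamma(p,q)$. By Step 1, for any coupling $\pi$ the minimal value of $\mathbb E_\pi\int_0^1 g_{\mathcal M}(\dot c,\dot c)\,dt$ over curves joining $\pi$-coupled endpoints equals $\mathbb E_\pi d_{\mathcal M}^2$; hence $(\pi^*,\{c_\eta\})$ jointly minimizes the dynamical (Benamou–Brenier) transport cost, $\pi^*$ is the optimal plan for the (squared) geodesic cost, and the curves $x(t)$ realize the corresponding dynamical optimal transport between $p$ and $q$.

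\textbf{Main obstacle.} The delicate point is Step 2: at convergence the flow-matching objective only forces $v_\nu$ to equal a \emph{conditional average} of geodesic velocities, so to conclude that the integral curves are genuine geodesics one needs $\pi^*$ to be deterministic, which I would import from an absolute-continuity assumption on $p$ via McCann's theorem (without it one only gets a McCann-style displacement interpolation in distribution, not pathwise geodesics). A secondary issue is bridging the gap between the \emph{approximate} geodesics delivered by \Cref{prop:geod_minim} (error of order $\sqrt{\xi}/(\alpha\beta)$) and true geodesics — either by taking $\beta\to\infty$ or by stating the conclusion up to the same order — and reconciling the wording "total expected geodesic length" with the expected \emph{squared} geodesic length that energy minimization of constant-speed curves actually targets, i.e. the $W_2$/Benamou–Brenier cost matched by the latent-space optimal-transport step.
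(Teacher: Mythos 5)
Your proof follows the same skeleton as the paper's: (i) minimizing $\mathcal L_{\mathrm{geo}}$ makes the curves $c_\eta$ (approximate) geodesics via \Cref{prop:geod} and \Cref{prop:geod_minim}; (ii) minimizing $\mathcal L_{\mathrm{FM}}$ makes $v_\nu$ match $\tfrac{d}{dt}c_\eta$, so integrating from $x_0$ reproduces the curve; (iii) the inner OT step yields the optimal pairing. Where you differ is in how much of this you actually justify. The paper's proof of step (ii) is the single sentence ``$v_\nu$ approximates the gradient of $c_\eta$,'' and it disposes of step (iii) by citing the proof of Algorithm~3 of \citet{Conditional_flow_matching}. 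You instead (a) observe that, because $v_\nu$ is a function of $(x_0,t)$ only, the exact minimizer of \Cref{eq:flow_loss} is the \emph{conditional expectation} of the curve velocities over $x_1\sim\pi^*(\cdot\mid x_0)$, so pathwise geodesics require $\pi^*$ to be a deterministic Monge coupling (imported via McCann's theorem under absolute continuity of $p$); and (b) derive the optimality of $\pi^*$ directly from the latent-space isometry of \Cref{prop:dist_match} together with the energy--length equivalence for constant-speed curves, rather than outsourcing it. Both of these are genuine gaps in the paper's own two-line argument that your version closes, and the caveats you flag --- the approximation error of order $\sqrt{\xi}/(\alpha\beta)$ surviving into the conclusion, and the mismatch between ``total expected geodesic length'' in the statement and the expected \emph{squared} length that the $W_2$/Benamou--Brenier objective actually minimizes --- are accurate criticisms of the proposition as stated, not defects of your proof. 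The one point the paper's citation of \citet{Conditional_flow_matching} covers that you do not address explicitly is the gap between the minibatch OT plan computed in \Cref{alg:flow_matching} and the population-level optimal plan; if you want your argument to be self-contained you should either assume full-batch OT or invoke the minibatch-OT consistency results from that reference.
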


\vspace*{-2pt}
\section{EMPIRICAL RESULTS}
\vspace*{-4pt}
\paragraph{Geometry-aware Autoencoder}

First, we empirically show that \methodshort preserves manifold distances of data in the latent space by evaluating \methodshort on Splatter~\citep{zappia2017splatter}, a synthetic single-cell RNA sequence dataset.
 
Single-cell RNA sequence data are high-dimensional, noisy, and sparse and have been demonstrated to reside on low-dimensional manifolds, making them ideal datasets for evaluating our method \citep{heimberg2016low, mfd_review}.


The encoder was evaluated by Denoised Embedding Manifold Preservation~(DEMaP) described in \citep{PHATE} which measures the correlation between Euclidean distances in latent space and ground truth manifold distances in data space. 

The results show that our distance matching loss is important for preserving manifold distances, as evidenced by higher DEMaP scores averaged across different noise levels~(\Cref{tab:demap_drs}). 

\methodshort can also effectively reconstruct high-dimensional features through the decoder. See \Cref{appdx:experiment_details_ae} for results on reconstruction, details on the Splatter dataset, and our evaluation criteria.

\begin{table}[tb!]
    \setlength{\tabcolsep}{5pt}
    \vspace*{-5pt}
    \centering
    \caption{Average DEMaP for Autoencoders (AE) and GAGA on simulated single-cell datasets over different noise settings.}
    \begin{tabular}[b]{cccc}
      \toprule
      & \multirow{2}{*}{Objective} & Cellular & \multirow{2}{*}{DEMaP ($\uparrow$)} \\
      & & State Space & \\
      \midrule
      AE & $\mathcal{L}_\text{Recon}$ & Clusters & 0.347{\scriptsize \textcolor{gray}{$\pm$0.117}} \\
      \methodshort & $\mathcal{L}_\text{Recon}, \mathcal{L}_\text{Dist}$ & Clusters & \textbf{0.645}{\scriptsize \textcolor{gray}{$\pm$0.195}} \\
      \midrule
      AE & $\mathcal{L}_\text{Recon}$ & Trajectories & 0.433{\scriptsize \textcolor{gray}{$\pm$0.135}} \\
      \methodshort & $\mathcal{L}_\text{Recon}, \mathcal{L}_\text{Dist}$ & Trajectories & \textbf{0.600}{\scriptsize \textcolor{gray}{$\pm$0.191}}\\ 
      \bottomrule
    \end{tabular}
    \vspace{-8pt}
    \label{tab:demap_drs}
\end{table}

\paragraph{Volume-guided Generation on Manifold}
We assessed the effectiveness of volume-guided generation on both simulated and real data.


We first illustrated our method on three toy datasets: hemisphere saddle, and paraboloid.
On these manifolds, the volume element is known, which we used as ground truth.
We evaluated the generation by computing the kernel density estimation and comparing it with the ground truth (see \Cref{appx:vol_eval} for details).

We generate imbalanced data by sampling from Gaussian distribution in the parameter space. In \Cref{fig:toy_unif} (B,C,D) and \Cref{tab:toy_unif} we show that the densities of the points generated by \methodshort are closer to the ground truth volume elements compared to the original data points, indicating that \methodshort largely reduces data imbalance.
In addition, \Cref{fig:toy_unif} (A) shows that the generated points stay on the data manifold and cover the sparse regions well in the original data.
The complete result figure can be found in \Cref{appdx:results_unif_gen_full}.
We also compared our method with Riemannian Flow Matching \cite{chen2023flow} in \Cref{appx:RFM} to demonstrate the faithfulness of generated points to the data geometry.

\begin{figure}[b!]
    \vspace{-10pt}
    \centering
    \includegraphics[width=\columnwidth]{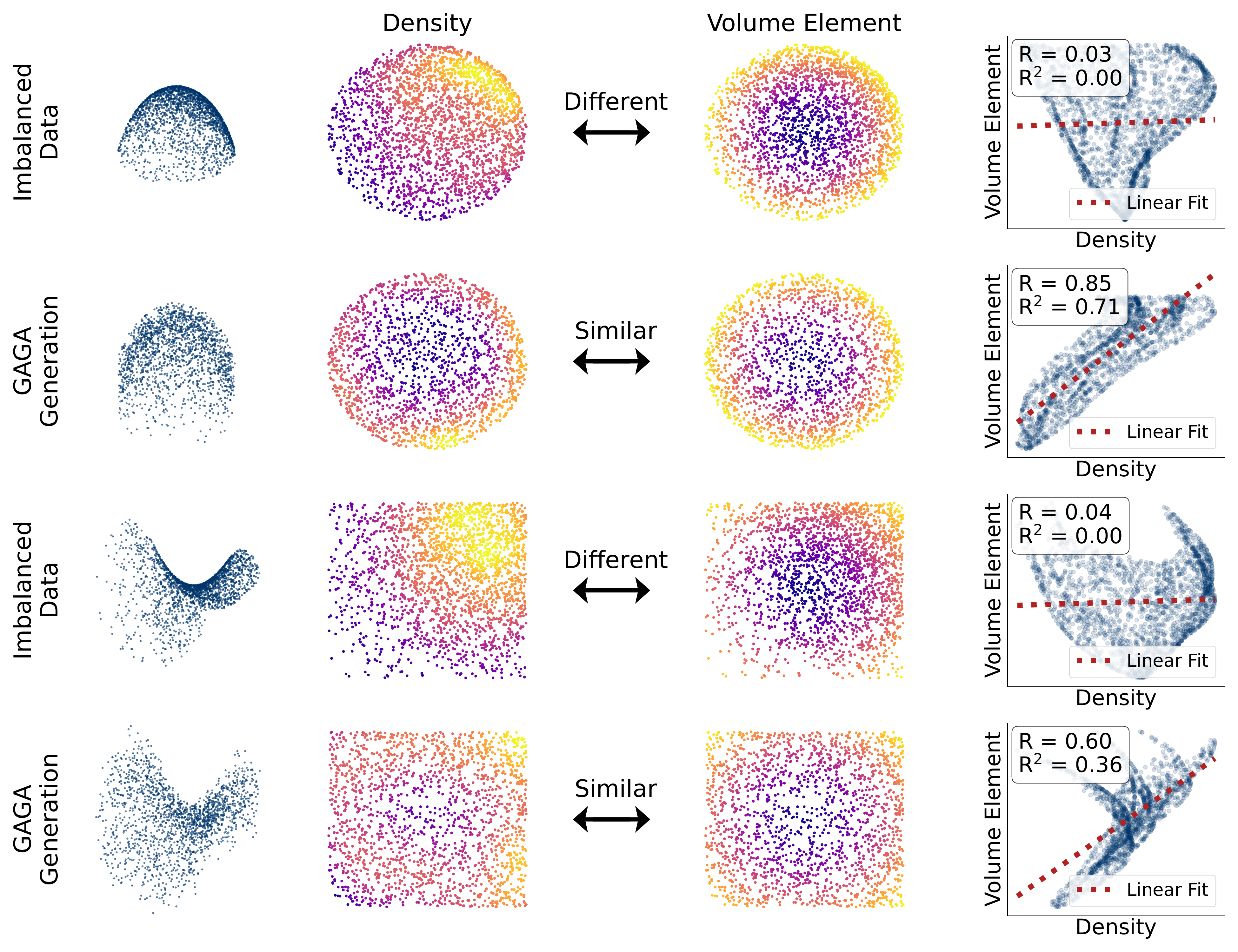}
    \vspace{-18pt}
    \caption{Geometry-aware generation with \methodshort on hemisphere and saddle. \textbf{(A)} Generated points remain on the manifold, and are more evenly distributed compared to raw data. \textbf{(B)} Kernel density estimation. \textbf{(C)} Ground truth volume elements computed analytically. \textbf{(D)} In raw data, density does not correlate to volume element, indicating data imbalance. \methodshort generation corrects the imbalance indicated by higher correlation between volume element and density.}
    \label{fig:toy_unif}
    \vspace{-10pt}
\end{figure}

\begin{table}[bt!]
\setlength{\tabcolsep}{8.1pt}
\centering
\caption{Pearson correlation and $R^2$ score between data density and ground truth volume element. \methodshort greatly reduces data imbalance.}\label{tab:toy_unif}
\begin{tabular}{llcc}
    \toprule
    {Manifold}  & {Data}          & {R}  & {R$^2$} \\
    \midrule
    \multirow{2}{*}{Hemisphere} & Original Data & 0.03 & 0.00  \\
                                & GAGA Generation & \textbf{0.85} & \textbf{0.71}  \\
    \midrule
    \multirow{2}{*}{Saddle}     & Original Data & 0.04 & 0.00  \\
                                & GAGA Generation & \textbf{0.60} & \textbf{0.36}  \\
    \midrule
    \multirow{2}{*}{Paraboloid} & Original Data & 0.04 & 0.00  \\
                                & GAGA Generation & \textbf{0.66} & \textbf{0.44}  \\
    \bottomrule
\end{tabular}
\end{table}

\begin{figure}[bt!]
    \centering
    \includegraphics[width=\columnwidth]{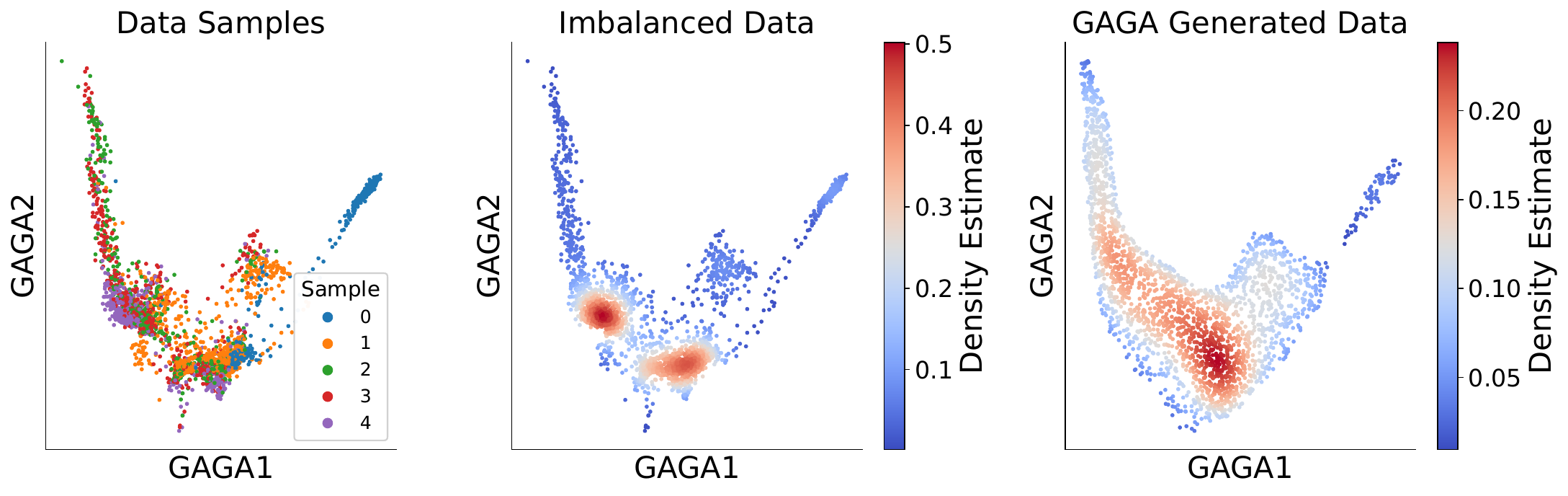}
    \vspace{-14pt}
    \caption{Geometry-aware generation with \methodshort on Embryoid Body data. Left: The dataset includes measurements from five experiments. Middle: The data is sparse and imbalanced. Colors indicate density estimation. Right: \methodshort reduces sampling imbalance.}
    \label{fig:eb_unif}
 \vspace{-8pt}
 \end{figure}

Next, we applied volume-guided generation to the Embryoid Body dataset~\citep{PHATE}, a real-world single-cell dataset that captures cellular evolution over the course of 27 days~(\Cref{fig:eb_unif} left). The data is largely imbalanced, with two density peaks, as shown in \Cref{fig:eb_unif} middle panel. Due to sampling bias, the data points in sample 4 exhibit a very high density, as significantly more data points were measured from this sample.
Moreover, there are sparse areas and ``holes'' in the data manifold.


After volume-guided generation with \methodshort, the data imbalance is significantly mitigated. Without deviating from the manifold, the density peaks are less spiky and the ``holes'' are properly filled in the \methodshort-generated data~(\Cref{fig:eb_unif} right panel) compared to the original Embryoid Body data~(\Cref{fig:eb_unif} middle panel).

\paragraph{Generating along Geodesics on Manifold}
To evaluate \methodshort's performance on generating geodesics on data manifold, we started with four toy manifolds: ellipsoid, torus, saddle, and hemisphere in $\mathbb{R}^3$. To make these datasets more challenging, we added Gaussian noise of different scales to the original data and rotate them to higher dimensions using a random rotation matrix. The ground truth geodesic lengths were obtained analytically if the solution is available or by using Dijkstra’s algorithm on the noiseless data otherwise. See \Cref{appdx:exp_details_geodesics} for details.

On the synthetic dataset, we compared our method with Dijkstra's algorithm, and a baseline that directly uses the metric without warping.
More baseline comparisons and details are provided in \Cref{appdx:exp_details_geodesics}.


\begin{table}[tb!]
    \vspace*{-5pt}
    \setlength{\tabcolsep}{6.5pt}
    \centering
    \caption{Average MSE between predicted geodesic lengths and ground truth on simulated data with different dimensions and noise settings.}
    \begin{tabular}[b]{lcccc}
      \toprule
      Manifold & Djikstra's & No Warping & \methodshort \\ 
      \midrule
      Ellipsoid & \underline{4.40}{\scriptsize \textcolor{gray}{$\pm$6.6}} & 143.70{\scriptsize \textcolor{gray}{$\pm$246.5}} & \textbf{3.76}{\scriptsize \textcolor{gray}{$\pm$7.1}} \\
      Hemisphere & 4.83{\scriptsize \textcolor{gray}{$\pm$6.2}} & \phantom{0}43.20{\scriptsize \textcolor{gray}{$\pm$65.7\phantom{0}}} & \textbf{0.47}{\scriptsize \textcolor{gray}{$\pm$0.6}} \\
      Saddle & \textbf{1.87}{\scriptsize \textcolor{gray}{$\pm$3.5}} & \phantom{0}55.59{\scriptsize \textcolor{gray}{$\pm$76.8\phantom{0}}} & \underline{4.11}{\scriptsize \textcolor{gray}{$\pm$8.8}} \\
      Torus & \underline{5.01}{\scriptsize \textcolor{gray}{$\pm$7.9}} & 271.84{\scriptsize \textcolor{gray}{$\pm$295.3}} & \textbf{4.09}{\scriptsize \textcolor{gray}{$\pm$6.3}} \\
      \bottomrule
    \end{tabular}
    \vspace{-8pt}
    \label{tab:geodesic_mse}
\end{table}

\begin{table*}[tb!]
\setlength{\tabcolsep}{12.2pt}
\small
\centering
\caption{Single-cell trajectory inference results on Cite and Multi datasets with 50 and 100 PCA dimensions. Leave-one-out is performed and 1-Wasserstein distances between prediction and ground truth are reported. }
\label{tab:cite_multi}
\begin{tabular}{lccccc}
\toprule
Data Dimension & &\multicolumn{2}{c}{50} & \multicolumn{2}{c}{100}\\
\cmidrule(lr){3-4} 
\cmidrule(lr){5-6} 
Alg.$\downarrow$ \quad Dataset$\rightarrow$ & & Cite & Multi & Cite & Multi \\
\midrule
\text{DSBM}~\citep{DSBM} & & 53.81\textcolor{gray}{$\pm$7.74} & 66.43\textcolor{gray}{$\pm$14.39} & 58.99\textcolor{gray}{$\pm$7.62} & 70.75\textcolor{gray}{$\pm$14.03} \\
\text{I-CFM}~\citep{Conditional_flow_matching} & & 41.83\textcolor{gray}{$\pm$3.28} & 49.78\textcolor{gray}{$\pm$4.43} & 48.28\textcolor{gray}{$\pm$3.28} & 57.26\textcolor{gray}{$\pm$3.86} \\
\text{OT-CFM}~\citep{Conditional_flow_matching} & & 38.76\textcolor{gray}{$\pm$0.40} & 47.58\textcolor{gray}{$\pm$6.62} & 45.39\textcolor{gray}{$\pm$0.42} & 54.81\textcolor{gray}{$\pm$5.86} \\
\text{[SF]$^2$M-Exact}~\citep{SFSFM} & & 40.01\textcolor{gray}{$\pm$0.78} & 45.34\textcolor{gray}{$\pm$2.83} & 46.53\textcolor{gray}{$\pm$0.43} & 52.89\textcolor{gray}{$\pm$1.99} \\
\text{[SF]$^2$M-Geo}~\citep{SFSFM} & & 38.52\textcolor{gray}{$\pm$0.29} & 44.80\textcolor{gray}{$\pm$1.91} & 44.50\textcolor{gray}{$\pm$0.42} & 52.20\textcolor{gray}{$\pm$1.96}\\
\text{WLF-SB}~\citep{WLF} & &
39.24\textcolor{gray}{$\pm$0.07} &
47.79\textcolor{gray}{$\pm$0.11} &
46.18\textcolor{gray}{$\pm$0.08} &
55.72\textcolor{gray}{$\pm$0.06} \\
\text{WLF-OT}~\citep{WLF} & &
36.17\textcolor{gray}{$\pm$0.03} & 38.74\textcolor{gray}{$\pm$0.06} &
42.86\textcolor{gray}{$\pm$0.04} &
47.37\textcolor{gray}{$\pm$0.05} \\
\text{WLF-UOT}~\citep{WLF} & &
34.16\textcolor{gray}{$\pm$0.04} &
36.13\textcolor{gray}{$\pm$0.02} &
41.08\textcolor{gray}{$\pm$0.04} &
45.23\textcolor{gray}{$\pm$0.01} \\
\text{OT-MFM}~\citep{MFM} & & 36.39\textcolor{gray}{$\pm$1.87} & 45.16\textcolor{gray}{$\pm$4.96} &
41.78\textcolor{gray}{$\pm$1.02} & 50.91\textcolor{gray}{$\pm$4.623} \\
\midrule
\textbf{\text{\methodshort} (Ours)} & &
\textbf{23.29}\textcolor{gray}{$\pm$0.83} &
\textbf{19.68}\textcolor{gray}{$\pm$1.93} &
\textbf{26.72}\textcolor{gray}{$\pm$0.99} & 
\textbf{27.04}\textcolor{gray}{$\pm$2.95} \\
Improvement over SOTA & &
\textcolor{ForestGreen}{$\downarrow$ 31.8\%} &
\textcolor{ForestGreen}{$\downarrow$ 45.5\%} &
\textcolor{ForestGreen}{$\downarrow$ 34.6\%} &
\textcolor{ForestGreen}{$\downarrow$ 40.2\%} \\
\bottomrule
\end{tabular}
\end{table*}





\begin{figure}[tb!]
    \centering
    \vspace{-4pt}
    \begin{minipage}[c]{0.33\columnwidth}
    \centering
    \includegraphics[trim={0 50pt 0 200pt},clip,width=\columnwidth]{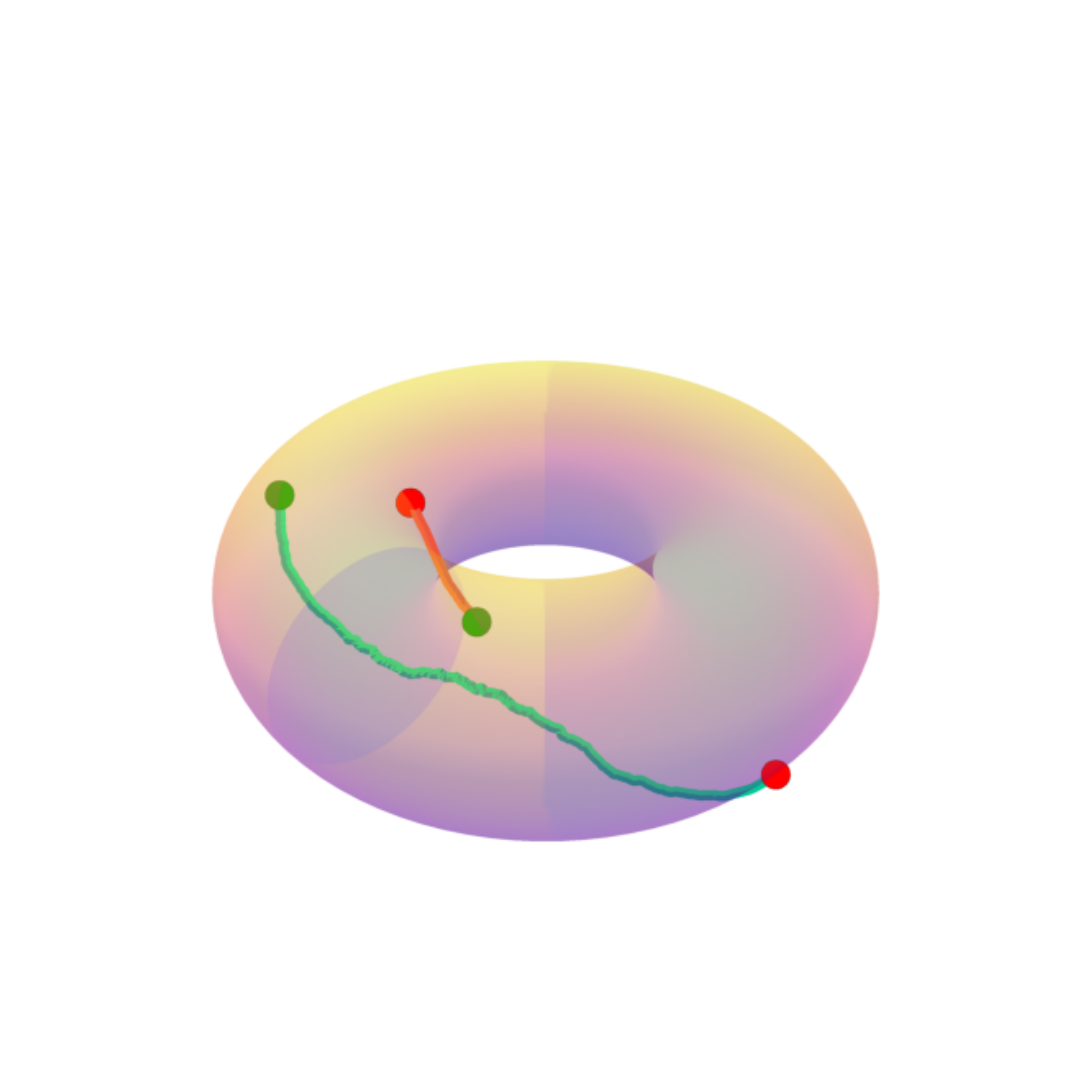}
    \end{minipage}
     \hspace*{-17pt}
    \begin{minipage}[c]{0.33\columnwidth}
    \centering
    \includegraphics[trim={0 50pt 0 200pt},clip,width=\columnwidth]{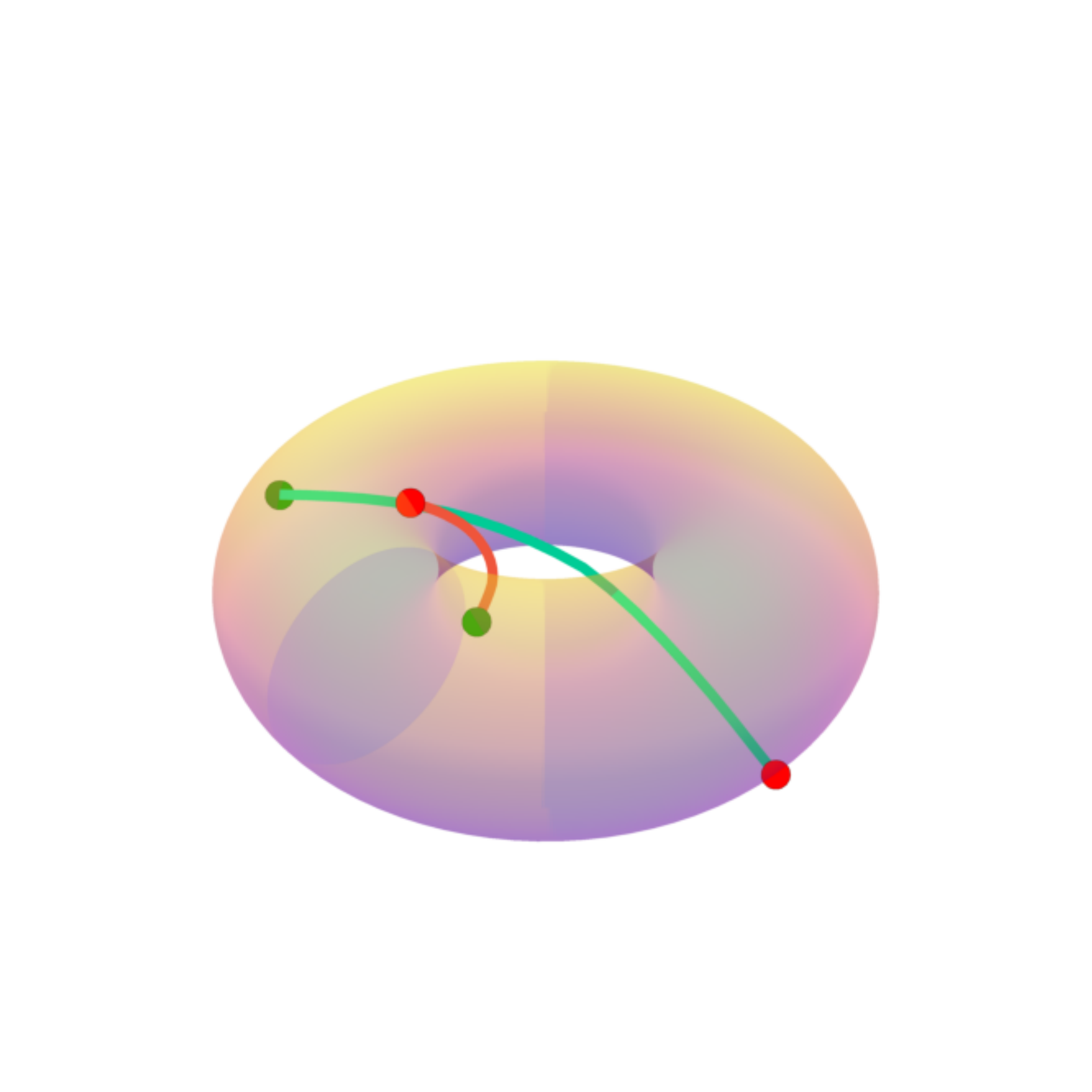}
    \end{minipage}
     \hspace*{-17pt}
    \begin{minipage}[c]{0.33\columnwidth}
    \centering
    \includegraphics[trim={0 50pt 0 200pt},clip,width=\columnwidth]{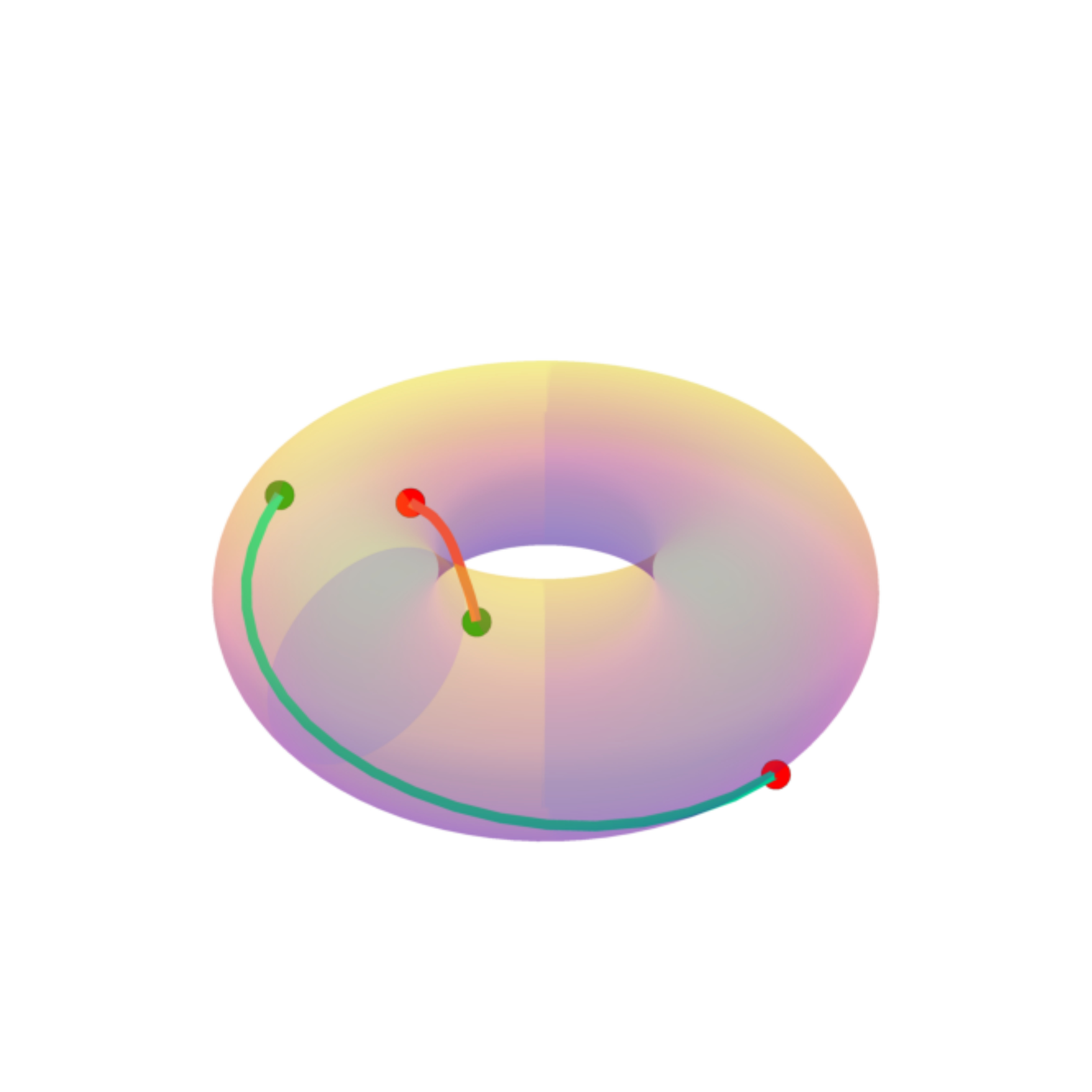}
    \end{minipage}
    \hspace{-17pt}
    \\[-16pt]
    \begin{minipage}[c]{0.33\columnwidth}
    \centering
    \includegraphics[trim={150pt 50pt 0 200pt},clip,width=\columnwidth]{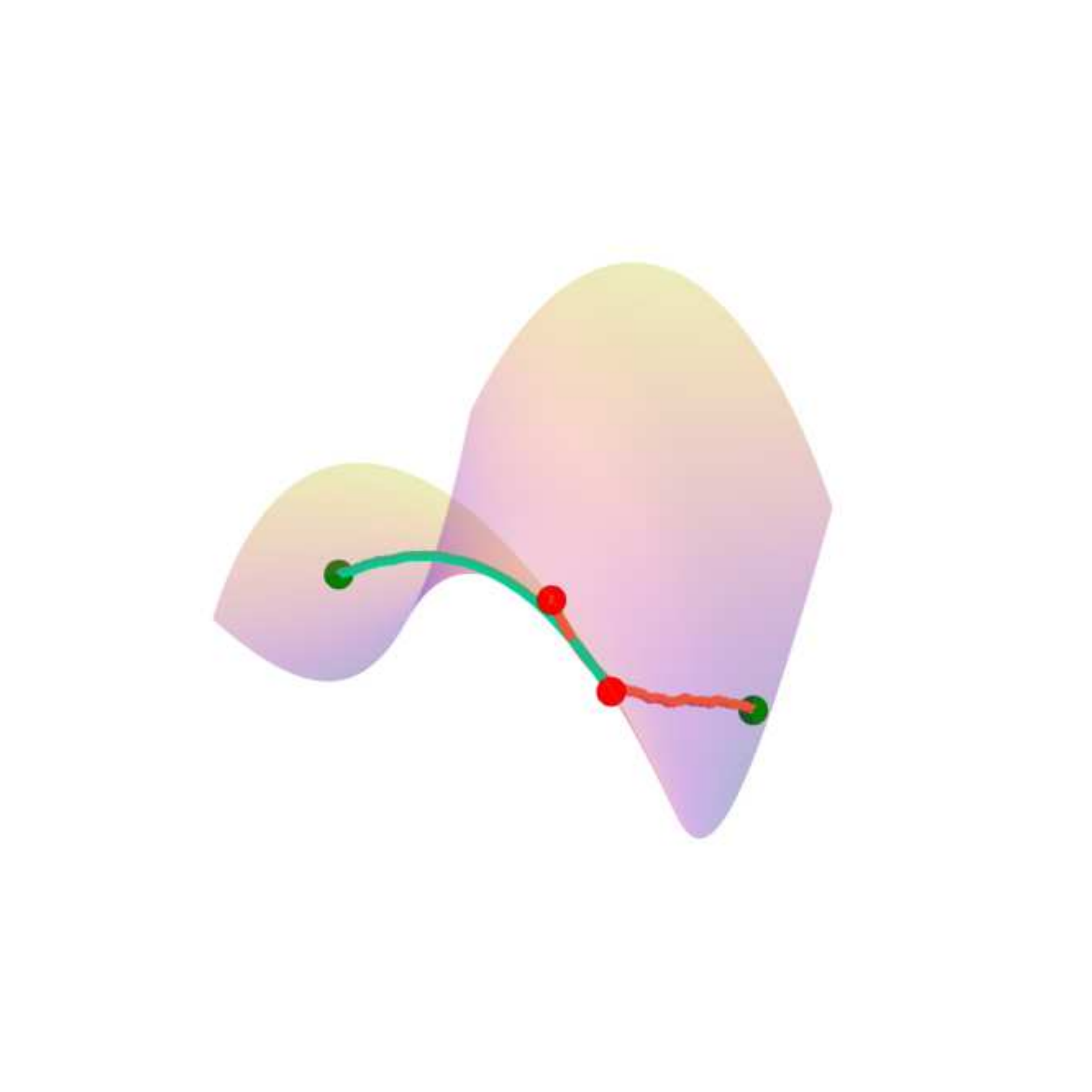}
    \end{minipage}
     \hspace*{-17pt}
    \begin{minipage}[c]{0.33\columnwidth}
    \centering
    \includegraphics[trim={150pt 50pt 0 200pt},clip,width=\columnwidth]{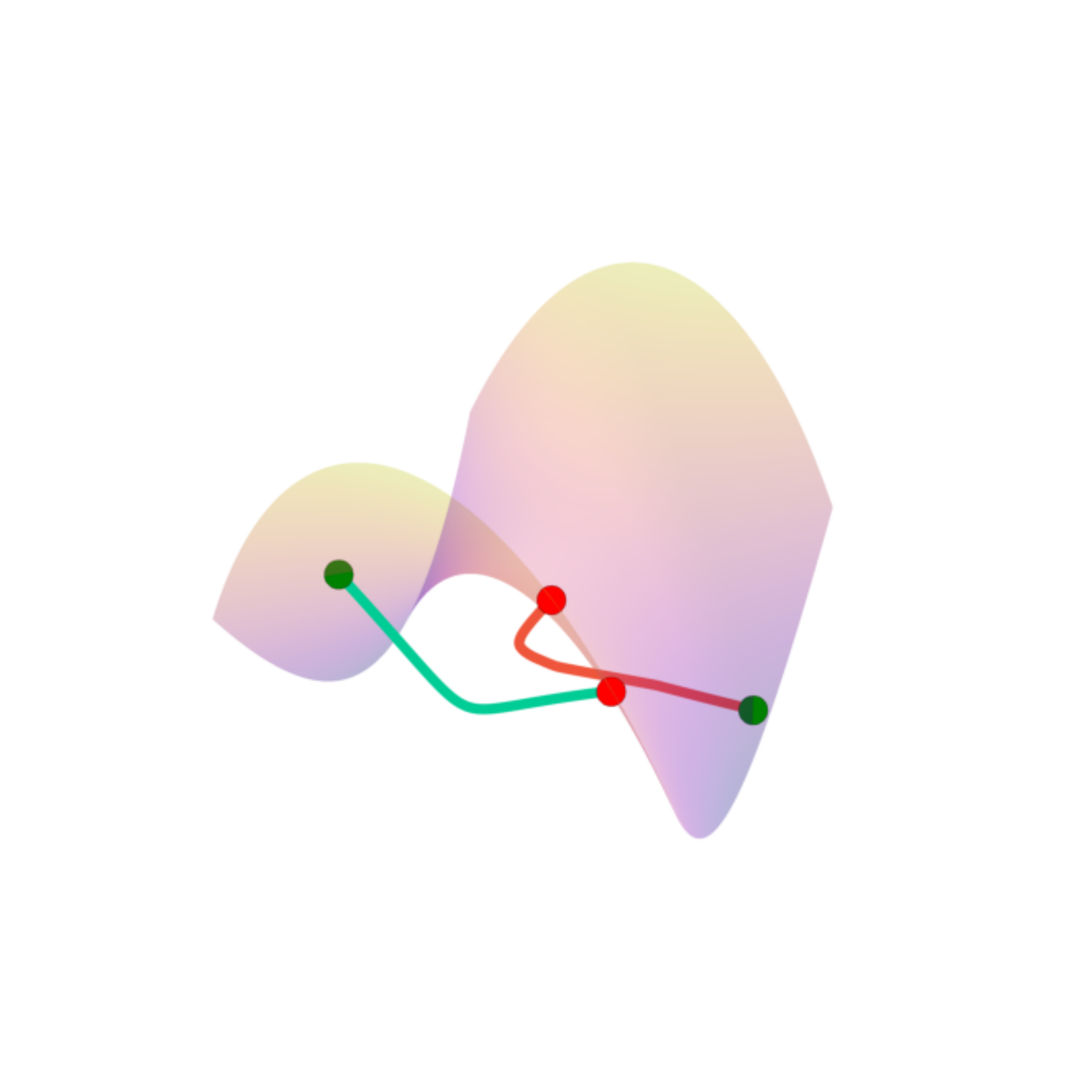}
    \end{minipage}
     \hspace*{-17pt}
    \begin{minipage}[c]{0.33\columnwidth}
    \centering
    \includegraphics[trim={150pt 50pt 0 200pt},clip,width=\columnwidth]{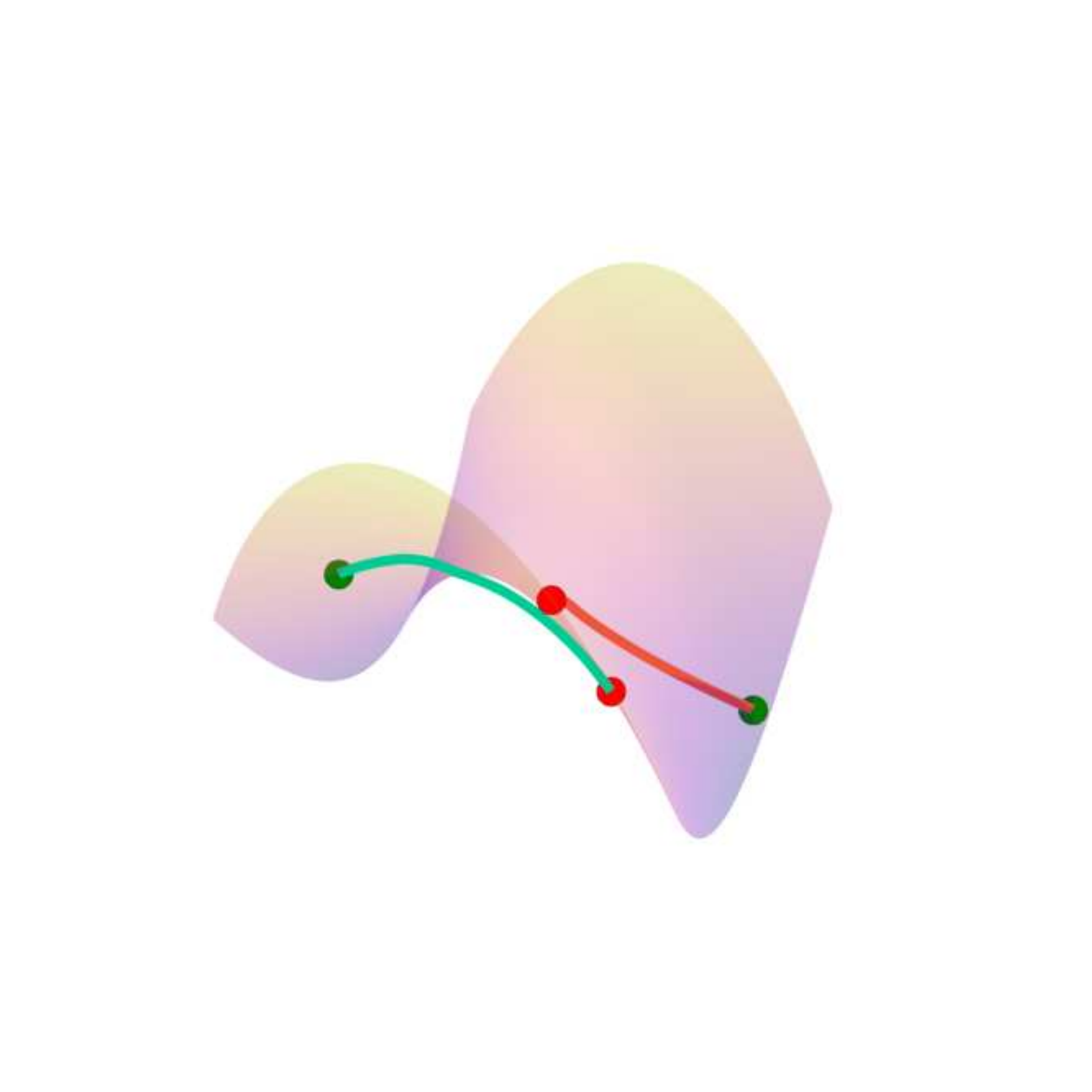}
    \end{minipage}
     \hspace*{-17pt}
    \\[-10pt]


\vspace{-6pt}
\caption{
    Comparison of ground truth and learned geodesics.
    From left to right: 1) ground truth, 2) no warping, 3) \methodshort.
}
\label{fig:geovis}
\end{figure}

\begin{figure}[tb!]
    \vspace{-12pt}
    \centering
    \includegraphics[width=0.6\columnwidth]{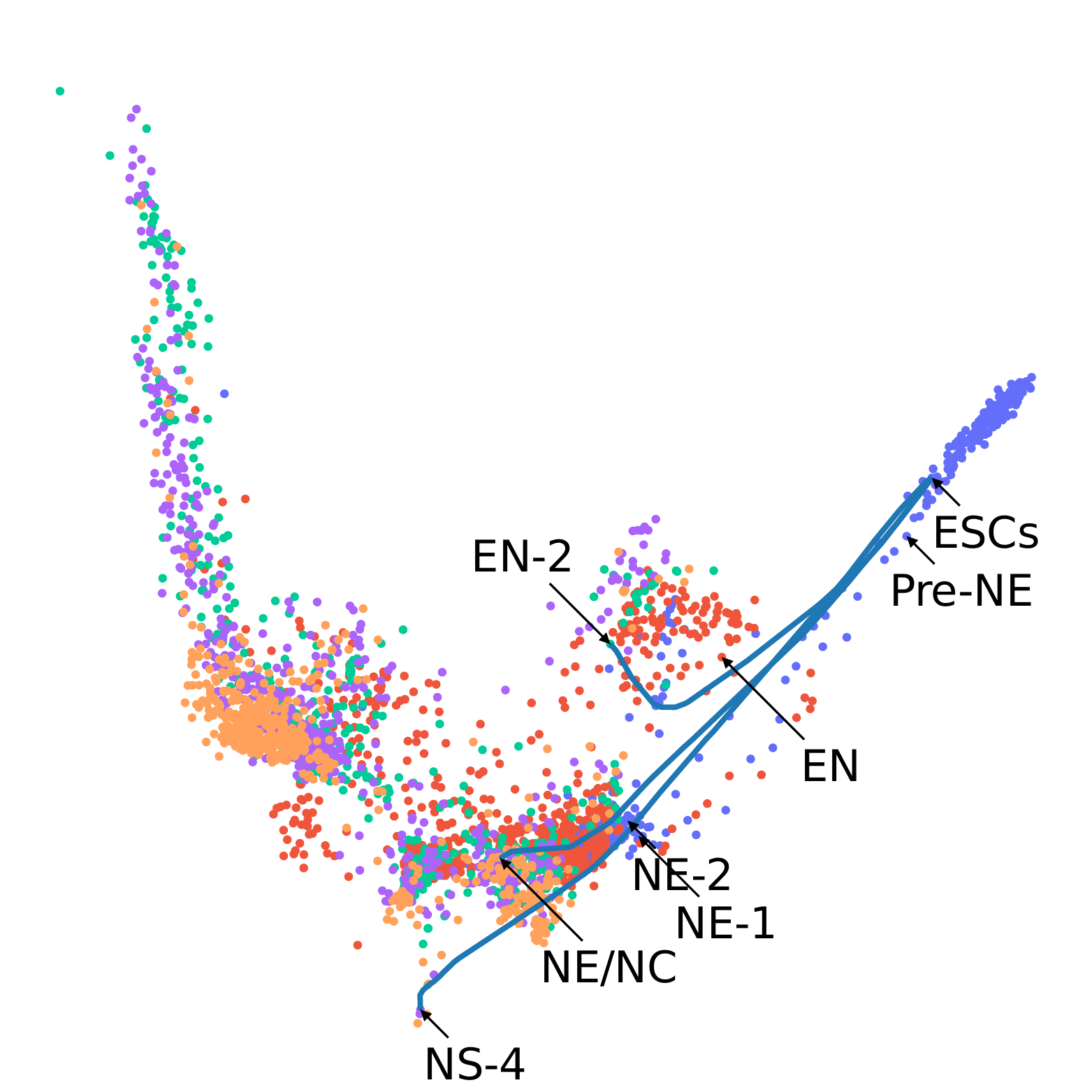}
    \vspace{-8pt}
    \caption{Geodesics learned on Embryoid Body data.
    }
    \label{fig:geod_eb}
    \vspace{-2pt}
\end{figure}

As shown in \Cref{tab:geodesic_mse}, \methodshort generally outperforms all other methods except for one case (Djikstra's on saddle). It is worth mentioning that Dijkstra's algorithm is only capable of connecting existing points but unable to generate points along the path. Directly using the metric without warping performs the worst by a big margin. We visualized the predicted geodesics on torus and saddle (\Cref{fig:geovis}). In general, trajectories generated by \methodshort stay on the manifold and are close to the ground truth geodesics, whereas some learned by the metric without the warping either deviate from the ground truth or directly cut through the manifold.
More details and results are provided in \Cref{appdx:results_geodesics_noisy_setting} and \Cref{appdx:results_geovis_full}.

In addition to toy datasets, we also visualized the geodesics learned on the Embryoid Body dataset (\Cref{fig:geod_eb}).
The starting points correspond to stem cells, while the ending points are selected at different lineages.
The predicted geodesics recover the corresponding differentiation branches, aligning with the biological understanding of the data.


\paragraph{Population Interpolation along geodesics}

In the final application, we evaluate geodesics-guided population transport on simulated and real data.

For the simulated dataset, \methodshort transports the source population to the target population through geodesics, which means that the trajectories remain on the manifold and follow the shortest paths~(\Cref{fig:fm}).
See \Cref{appx:geod_fm} for details.

Finally, we considered single-cell trajectory inference on the CITE-seq and Multiome datasets from a NeurIPS competition \citep{CITE_and_Multi}. We performed the leave-one-timepoint-out cellular dynamics experiment in which points at one timepoint are excluded, and the goal is to infer the left-out points by interpolating between the remaining timesteps. \methodshort consistently outperforms all other methods by a large margin~(\Cref{tab:cite_multi}). See details in \Cref{appdx:results_single_cell_traj_inference}.

\begin{figure}[tb!]
    \centering
    \begin{minipage}[c]{0.23\columnwidth}
    \centering
    \includegraphics[trim={30pt 75pt 50pt 75pt},clip,width=\columnwidth]{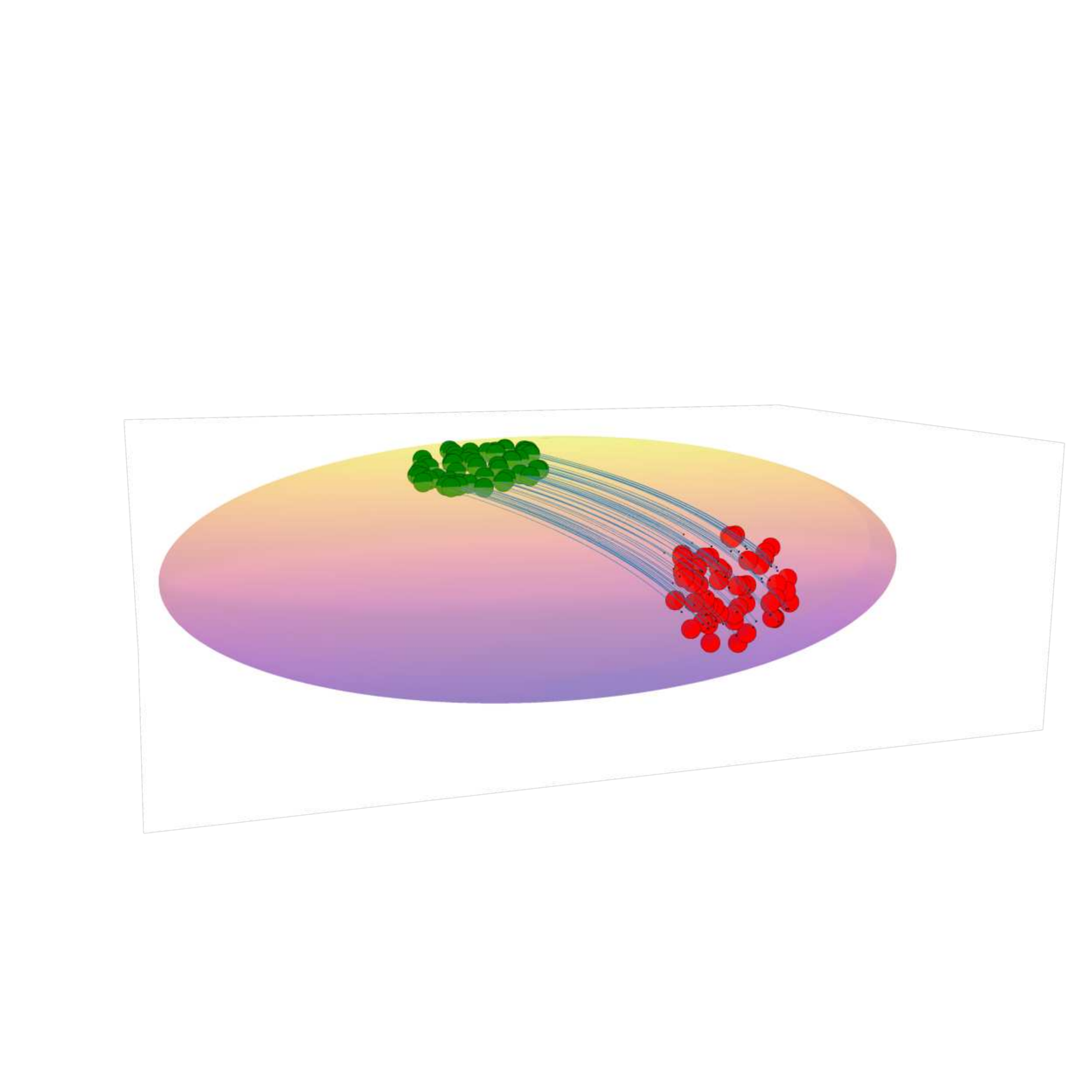}
    \end{minipage}
    \begin{minipage}[c]{0.23\columnwidth}
    \centering
    \includegraphics[trim={30pt 75pt 50pt 75pt},clip,width=\columnwidth]
    {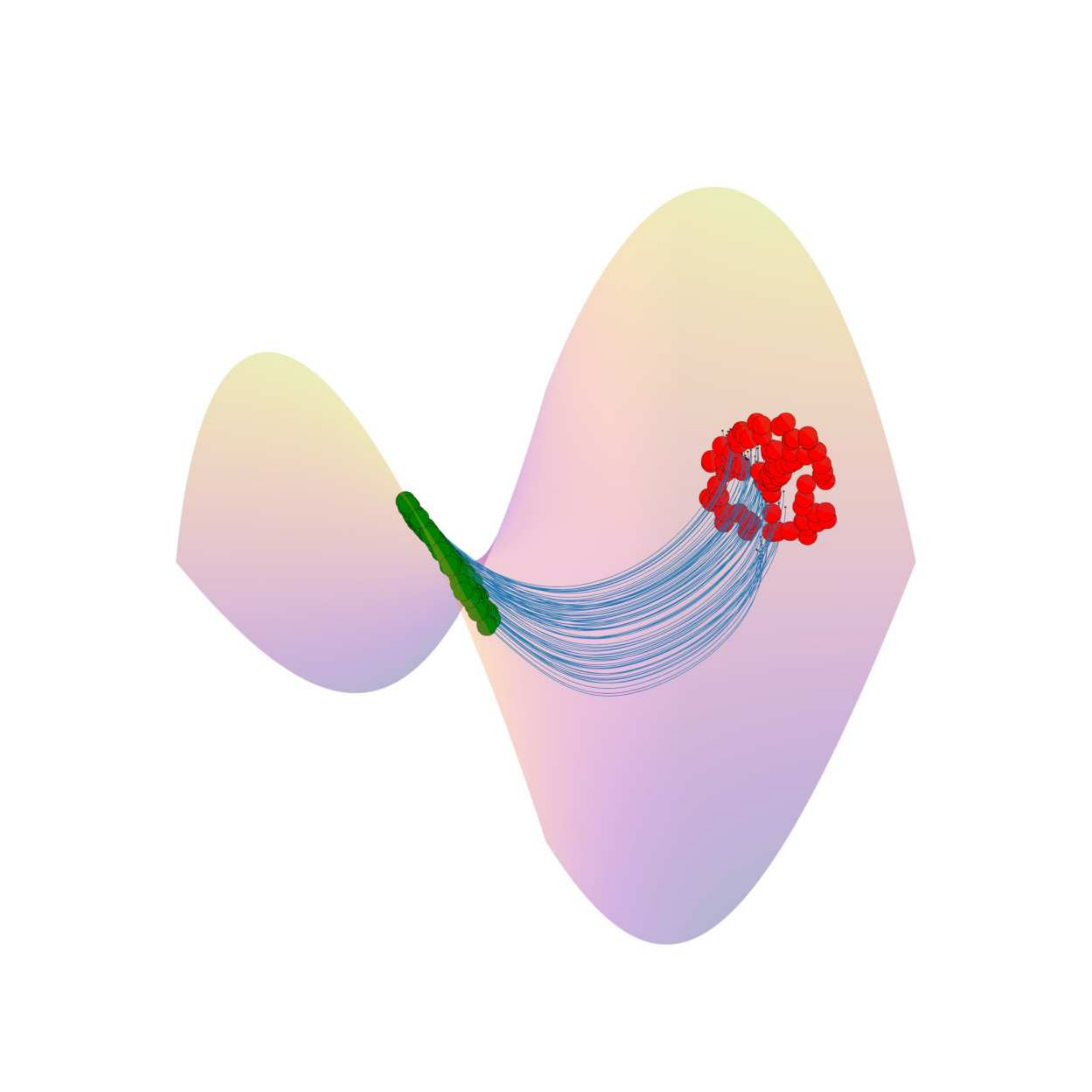}
    \end{minipage}
    \begin{minipage}[c]{0.23\columnwidth}
    \centering
    \includegraphics[trim={30pt 75pt 50pt 75pt},clip,width=\columnwidth]
    {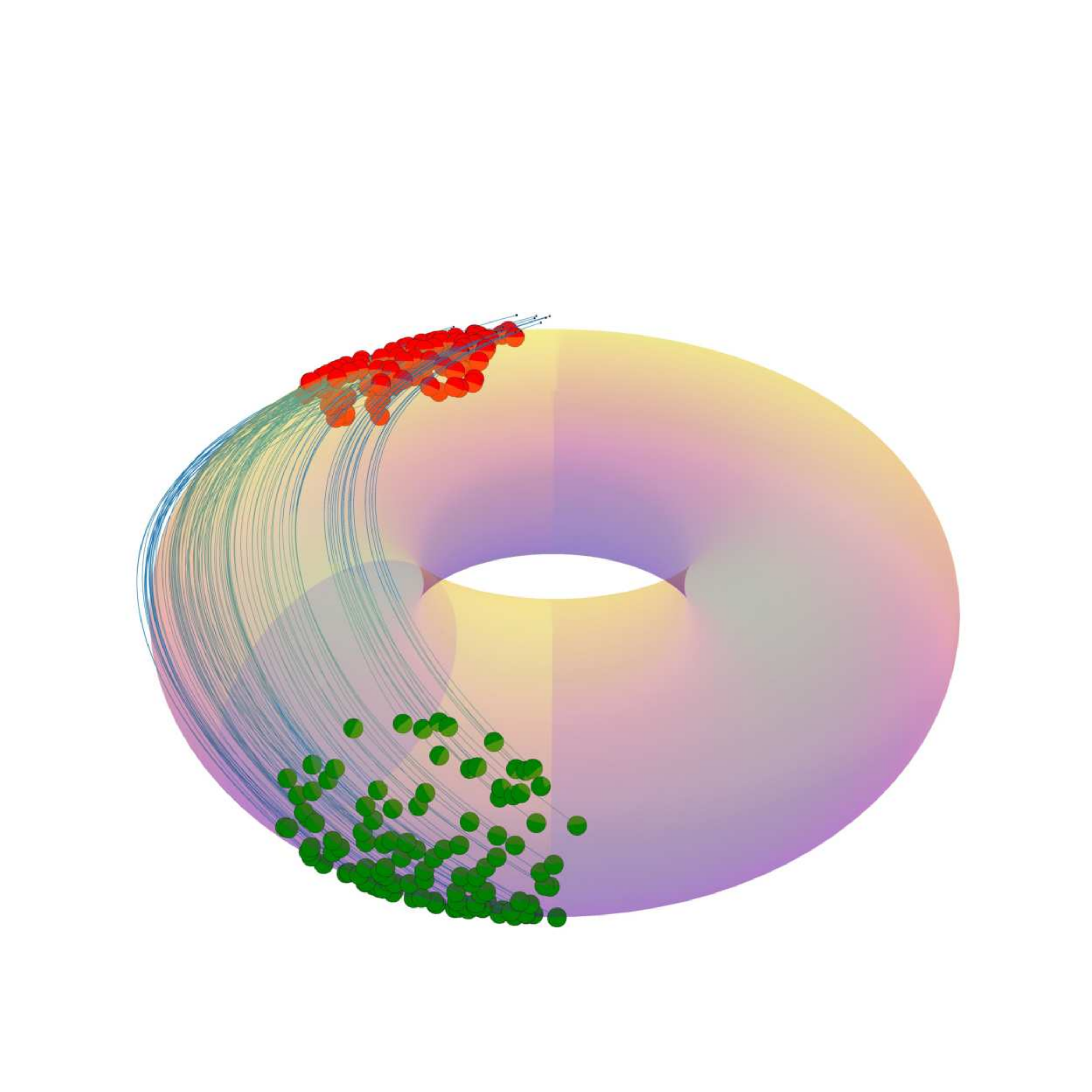}
    \end{minipage}
    \begin{minipage}[c]{0.23\columnwidth}
    \centering
    \includegraphics[trim={30pt 75pt 50pt 75pt},clip,width=\columnwidth]{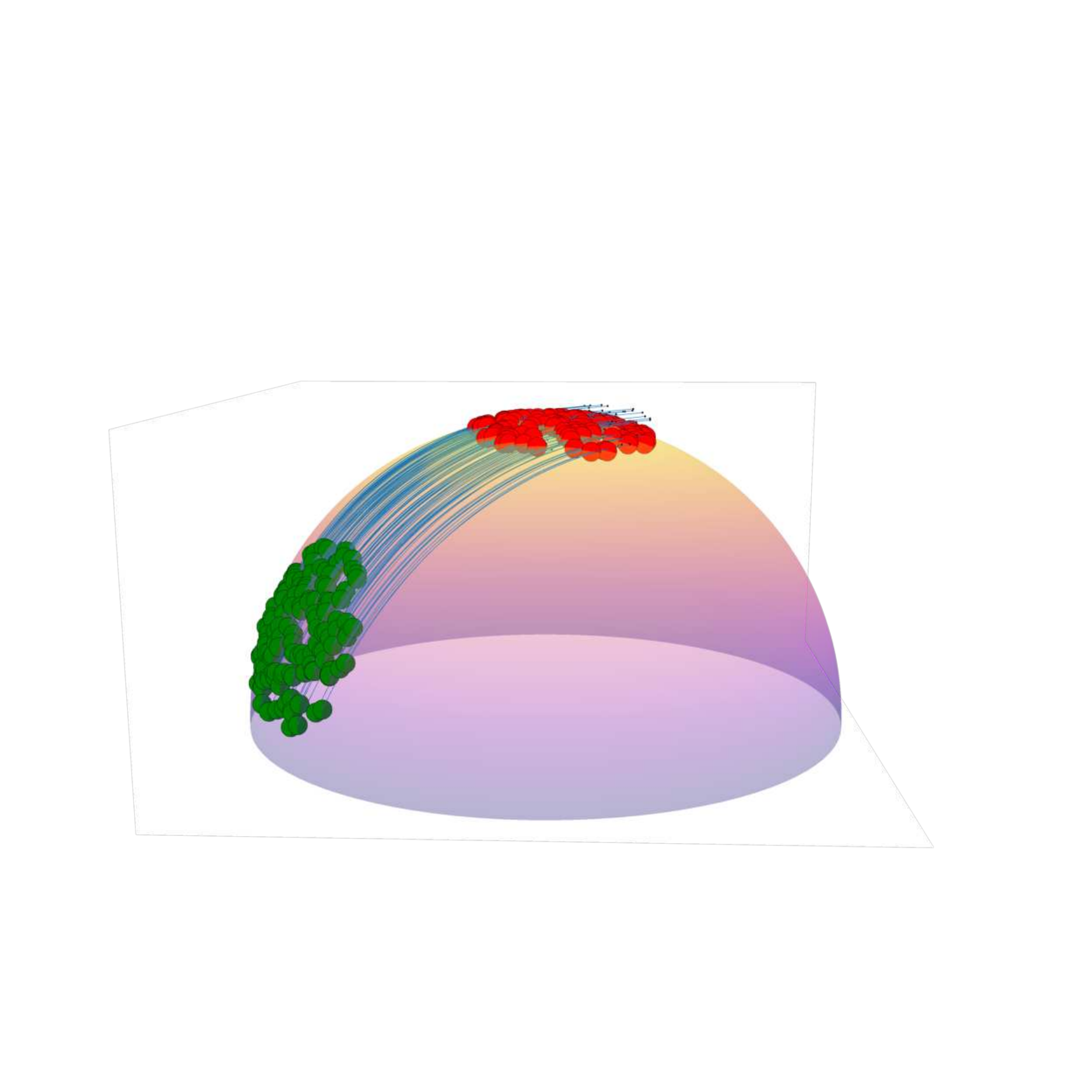}
    \end{minipage}
\caption{Transporting populations on toy manifolds.}
\label{fig:fm}
\vspace{-10pt}
\end{figure}

We have included details of our hyperparameter selection in \Cref{appx:hparams} and additional experiment results in \Cref{appx:addn_exp}.

\vspace*{-2pt}
\section{RELATED WORK}
\vspace*{-4pt}

\paragraph{Geometry-aware encoding} Non-linear dimensionality reduction methods such as PHATE or diffusion maps have proven useful in learning manifold structure from high-dimensional data. However, they have been difficult to extend to generate or sample new points~\citep{HeatGeo}. To address this, some prior works have tried to regularize an encoder to match the embeddings or distances obtained from dimensionality reduction methods~\citep{duque2020extendable, GRAE, DiffKillR, MDMRAE, NeuralFIM}, or by minizing the Gromov-Monge cost~\citep{lee2024monotonegenerativemodelinggromovmonge}. Despite embedding or distance preservation, these methods have not focused on generative modeling of points, can struggle in gaps for trajectory inference ~\citep{lee2024monotonegenerativemodelinggromovmonge}, or sometimes do not decode the data at all and simply provide embeddings~\citep{NeuralFIM}. As a result, it is difficult to use existing embeddings to generate or sample new points on and along these manifolds faithfully. \vspace{-8pt}

\paragraph{Interpolating between points} For interpolating between data points, traditional approaches often rely on linear interpolation or latent space traversal that does not align with complex data trajectories~\citep{Linear_interp, Latent_space_interp}. Some recent methods use a neural network to learn the gradient field, where optimal trajectories can be computed by following the gradient~\citep{MIOFlow, ImageFlowNet}. However, these methods suffer from error accumulation, which may lead to large deviations when the trajectory is sufficiently long. \vspace{-8pt}
\paragraph{Population transport} Transporting populations across experimental conditions, time points, or biological states is usually approached by flow matching~\citep{Flow_matching, Conditional_flow_matching} and bridge matching ~\citep{shi2023diffusionschrodingerbridgematching, thornton2022riemanniandiffusionschrodingerbridge}. 
Diffusion Schrödinger Bridge Matching ~\citep{shi2023diffusionschrodingerbridgematching} and Minibatch Optimal Transport Flow Matching ~\citep{tong2024improvinggeneralizingflowbasedgenerative} operate on Euclidean space without considering the underlying manifold, and thus cannot generate trajectories along the manifold. Simulation-Free Schrödinger Bridges ~\citep{tong2024simulationfreeschrodingerbridgesscore} requires closed-form conditional path distributions, and therefore, it’s not applicable to general manifold without analytic solutions.

Some of the recent work attempts to access the manifold and leverage the non-Euclidean metric. For example, Riemannian Diffusion Schrödinger Bridge ~\citep{thornton2022riemanniandiffusionschrodingerbridge} addresses the Schrödinger Bridge problem in non-Euclidean space but requires the metric to perform manifold projection. Flow Matching on General Geometries ~\citep{chen2024flowmatchinggeneralgeometries} requires closed-form solutions for computing geodesics on simple geometries and uses premetric instead of metric on general geometries.
Solving Wasserstein Lagrangian Flows ~\citep{WLF} does not learn the metric of the manifold but instead uses prefixed Wasserstein-2 and Wasserstein Fisher-Rao metrics on the statistical manifold of the probability measures, thus unable to transport populations faithfully along the data manifold. The work most comparable to GAGA’s metric learning framework is Metric Flow Matching ~\citep{MFM}, which learns the manifold metric but only considers a specific family of diagonal metrics: metric LAND and RBF, tunable by a few hyperparameters. 


\vspace*{-2pt}
\section{DISCUSSION}
\vspace*{-4pt}


We have explored encoding whether a given point lies on or off the manifold within our warped pullback metric. In the future, we aim to investigate additional priors and biases that could be incorporated into the metric to further guide population transport. For instance, encoding data sparsity to enable the generation of rare events or incorporating desired chemical properties for molecule generation.

Another promising direction is utilizing our learned metric to compute other geometric quantities and operators, such as curvatures, and log and exponential maps for manifold projections. 


\vspace*{-2pt}
\section{CONCLUSION}
\vspace*{-4pt}

In this paper, we propose a geometry-aware generative autoencoder (\methodshort) that preserves geometry in latent embeddings and can generate new points uniformly on the data manifold, interpolate along the geodesics, and transport populations across the manifold. We circumvent the limitations of existing generative methods, which mainly match the modes of distributions, by training generalizable geometry-aware neural network embeddings, leveraging points both on and off the data manifold, and learning a novel warped Riemannian metric on data space that allows us to generate points from the data geometry. 

\section*{Acknowledgments}

This research was partially funded and supported by ESP M\'erite [G.H.], CIFAR AI Chair [G.W.], NSERC Discovery grant 03267 [G.W.], NIH grants (1F30AI157270-01, R01HD100035, R01GM130847, R01GM135929) [G.W.,S.K.], NSF Career grant 2047856 [S.K.], NSF grant 2327211 [S.K., G.W., M.P., I.A.], NSF/NIH grant 1R01GM135929-01 [M.H., S.K.], the Chan-Zuckerberg Initiative grants CZF2019-182702 and CZF2019-002440 [S.K.], the Sloan Fellowship FG-2021-15883 [S.K.], and the Novo Nordisk grant GR112933 [S.K.]. The tent provided here is solely the responsibility of the authors and does not necessarily represent the official views of the funding agencies. The funders had no role in study design, data collection and analysis, decision to publish, or preparation of the manuscript.

Chen Liu helped with refining abstract, introduction, and the schematic figure prior to submission and should be considered an author.



\bibliography{references}
\bibliographystyle{apalike}

\include{checklist}


\begin{appendix}

\crefalias{section}{appsec}
\crefalias{subsection}{appsec}
\crefalias{subsubsection}{appsec}

\setcounter{figure}{0}
\renewcommand{\thefigure}{Appx.~\arabic{figure}}
\renewcommand{\theHfigure}{Appx.~\arabic{figure}}
\setcounter{table}{0}
\renewcommand{\thetable}{Appx.~\arabic{table}}
\renewcommand{\theHtable}{Appx.~\arabic{table}}
\setcounter{equation}{0}
\renewcommand{\theequation}{Appx.~\arabic{equation}}
\renewcommand{\theHequation}{Appx.~\arabic{equation}}

\onecolumn

\section*{\LARGE Appendix}
\label{sec:appendix}

\addtocontents{toc}{\protect\setcounter{tocdepth}{2}}

\renewcommand{\contentsname}{\centering Table of Contents}
\tableofcontents

\clearpage
\newpage

\pagestyle{plain}
\pagestyle{pagenumbers}

\section{Manifold Learning and Diffusion Geometry}\label{appx:mfd_learning}

The \textit{Manifold Hypothesis} states that data are often sampled \textit{on} or \textit{near} an intrinsically low-dimensional manifold within high-dimensional Euclidean space. Manifold learning techniques aim to uncover and recreate this manifold in a lower-dimensional space.

Many manifold learning approaches use data \textit{diffusion geometry}, which extracts geometric features from an approximation of heat flow on the data. Diffusion geometry models a high-dimensional point cloud as a graph by applying a kernel $\mathcal{K}$ (e.g., the Gaussian kernel $e^{-\frac{||z_1 - z_2||^2}{\sigma}}$) to the pairwise Euclidean distances between data points.

The kernel $\mathcal{K}$ is normalized to obtain a row-stochastic matrix $P$, where $P(z_1, z_2) = \frac{\mathcal{K}(z_1, z_2)}{||\mathcal{K}(z_1, \cdot)||_1}$. This matrix $P$ encodes transition probabilities between points. Powering $P^t$ represents a $t$-step random walk. Long-range or spurious connections are given less weight through this iterated walk than robust on-manifold paths, allowing the resulting point-wise \textit{diffusion probabilities} to recover manifold geometry even in the presence of sparsity and noise. Methods like Diffusion Maps, PHATE, and HeatGeo use diffusion probabilities to define a \textit{statistical distance} between data points~\citep{DiffusionMaps, PHATE, HeatGeo}.

\section{Riemannian Manifolds \& Metrics}
\label{appx:reimannian}

The \textit{Manifold Hypothesis} motivates our use of Riemannian geometry. Formally, an $n$-dimensional manifold $\mathcal{N}$ is a topological space that is locally homeomorphic to $\mathbb{R}^n$. Intuitively, while the global structure of $\mathcal{N}$ can be complex, every small region is similar to the Euclidean space.

A Riemannian manifold $(\mathcal{N},g)$ is endowed with a Riemannian metric $g$, which defines an inner product on the tangent space at each point. At each $x\in\mathcal{N}$, the metric $g_x$ assigns an inner product to tangent vectors $X,Y\in T_x\mathcal{N}$ via
$$
g_x(X,Y)=X^Tg(x)Y,
$$
where (with a slight abuse of notation) $g(x)$ denotes an $n\times n$ matrix representing the inner product on $T_x\mathcal{N}$. This metric allows us to measure angles and lengths. In particular, the length of a tangent vector $X$ is given by
$$
\|X\|=\sqrt{g_x(X,X)},
$$
and the length of a smooth curve $c:[0,T]\to\mathcal{N}$ is defined as
$$
L(c)=\int_0^T \sqrt{g_{c(t)}\bigl(\dot{c}(t),\dot{c}(t)\bigr)}\,dt.
$$
This expression computes the distance traveled along the curve, much like measuring a winding road on a flat map.

If the manifold is parametrized by a function $f(z)$ with $z\in\mathcal{D}$, its volume (or area, in the two-dimensional case) is calculated by
$$
\int_{\mathcal{D}} \sqrt{\det g(x)}\,dx.
$$
Here, $\sqrt{\det g(x)}$, called the volume element, quantifies how much local space is present at the point $x$.

\subsection{The Pullback Metric}
A key element of our approach is the \textit{Riemannian pullback metric}. Suppose we have a map between manifolds, $f:\mathcal{M}\to(\mathcal{N},g)$. At each point $x\in\mathcal{M}$, the differential
$$
df_x:T_x\mathcal{M}\to T_{f(x)}\mathcal{N}
$$
provides a linear approximation of $f$. Using this differential, the pullback metric $f^*g$ on $\mathcal{M}$ is defined by
$$
f^*g(X,Y)=g(df_xX,df_xY),
$$
for any tangent vectors $X,Y\in T_x\mathcal{M}$.

Intuitively, the pullback metric equips $\mathcal{M}$ with the geometry of $(\mathcal{N},g)$ as determined by $f$. It allows us to measure lengths, angles, and distances on $\mathcal{M}$ in a manner that reflects the geometry of the target space. This construction is fundamental to our method, as it bridges the geometry of $\mathcal{M}$ with the geometry provided by $f$.

For further details, we refer the reader to \citet[Chapters 0 and 1]{do1992riemannian}.

\section{Obtaining the Function $s(x)$}
\label{appx:auxiliary_dimension}

Recall that $s(x)$ provides an auxiliary dimension that complements the encoder $f_\theta$, where the value represents the deviation from the manifold. $s(x) \approx 0$ if $x$ is on the manifold, and $s(x)$ increases as $x$ moves away from the manifold.

\subsection{Approach 1: Discriminator}\label{appx:lipshitz}

There are various ways to assign the value in the auxiliary dimension. In our implementation, we employ a discriminative network~\citep{GAN} to predict whether a point is on or off the manifold.

To train the GAN-style discriminator, we first generate negative samples away from the data manifold in the data space by adding high-dimensional Gaussian noise to the data~(\Cref{expr:negative_pts}), where $c$ is a constant chosen such that the space away from the manifold is in the support of the distribution of $\check x$.

\vspace{-6pt}
\begin{equation}
    \check x_i = x_i+\epsilon_i ,~~\epsilon_i\sim\mathcal N(0, c I)
\label{expr:negative_pts}
\end{equation}
\vspace{-12pt}

Then, we define a discriminator $w_\psi$ that maps from the data space to a score, optimized by the loss function in \Cref{loss:off_manifold_penalty} inspired by Wasserstein Generative Adversarial Networks~\citep{WGAN}.

\vspace{-8pt}
\begin{align}
    \mathcal L_w(\psi)
    =
    \mathbb{E}_{\check x}\left[ w_\psi(\check x) \right]
    - \mathbb{E}_x\left[ w_\psi(x) \right] + \operatorname{Var}_{x}(w_\psi(x))
\label{loss:off_manifold_penalty}
\end{align}
\vspace{-6pt}


$w_\psi$ is a Lipschitz function due to weight clipping and spectral normalization~\citep{WGAN, miyato2018spectral}.
The variance term is added to encourage the discriminator to have uniform predictions. Finally, we define the \methodshort embedding with auxiliary dimension in \Cref{expn:extn}.


We have the following lemma showing that the condition ``$s(x) \approx 0$ if $x$ is on the manifold, and $s(x)$ increases as $x$ moves away from the manifold'' is achieved:
\begin{lemma}
\label{prop:extn}
    Suppose $w_\psi$ is $L$-Lipschitz, and $\max_{i,j}||x_i-\check x_j||\leq M$. for any $ \epsilon>0$, if 
    $\mathcal L_{w}(\psi)\leq-LM+\epsilon$
    , we have $\mathbb E_x[s(x)^2]\leq \epsilon$.
\end{lemma}

\subsection{Approach 2: Gaussian Process}\label{appx:gp}

Alternative to the discriminator, we can also obtain $s(x)$ using the variance of a Gaussian process. We take advantage of the observation that the uncertainty (covariance) of a Gaussian process increases as the evaluation point moves away from the seen training point. We use an radial basis function kernel

\begin{equation}
    K(x,x')=\exp\left(-\frac{||x-x'||^2}{2\sigma^2}\right)
\end{equation}

in the model, and define $s(x)$ to be the posterior variance 

\begin{equation}
    s(x):=K(x,x)-K(x,X)[K(X,X)+\sigma_n^2I]^{-1}K(X,x),
\end{equation}

where $X=\{x_1,\dots,x_N\}$ is the data; $K(x, X):=(K(x,x_1),K(x,x_2),\dots,K(x,x_N)); K(X,x):=K(x, X)^T;$ and $ K(X,X):=\left(K(x_i,x_j)\right)_{i=1,\dots,N}^{j=1,\dots,N}$.


\section{Curve Parameterization for Generating Along Geodesics}
\label{append:curve_param}

We parameterize the curves using an interpolation between starting and ending points, with a linear term and a non-linear term parameterized by an MLP $
\gamma_\eta$.
\begin{align}
    c_\eta(x_0,x_1,t)
    =
    tx_1+(1-t)x_0+(1-(2t-1)^2) \gamma_\eta(x_0,x_1,t) ,
    \label{expn:geob}
\end{align}



\section{Proofs of Lemmas and Propositions}\label{appx:proof}
\subsection{\cref{prop:dist_match}}
    For Riemannian manifolds $(\mathcal M,g_{\mathcal M}),(\mathcal N,g_{\mathcal N})$ and diffeomorphism $f:\mathcal M\to\mathcal N$, if $f$ is a local isometry, i.e., there exists $\epsilon>0, $ such that for any $x_0,x_1\in\mathcal M, d_{\mathcal M}(x_0,x_1)<\epsilon\implies d_{\mathcal M}(x_0,x_1)=d_{\mathcal N}(f(x_0),f(x_1))$, then we have $g_{\mathcal M}=f^*g_{\mathcal N}.$
\begin{proof}
    We first prove that the two metrics agree on vector norms. That is, for any $ u\in T_{x}\mathcal M, g_{\mathcal N}(df u,df u)=g_{\mathcal M}(u,u)$.:
    
    $\forall z\in\mathcal N,\forall$ smooth curve $\gamma(t)\subset\mathcal N,$ and let $\xi(t)=f^{-1}(\gamma(t))$. Then there exists $\delta>0$ such that $\forall 0<t<\delta$ 
    \begin{align}
        \int_{0}^{t}\sqrt{g_{\mathcal M}(\dot\xi(\tau),\dot\xi(\tau))}d\tau<\epsilon
    \end{align}
    We have
    \begin{align}
    \SwapAboveDisplaySkip
        \int_{0}^{t}\sqrt{g_{\mathcal M}(\dot\xi(\tau),\dot\xi(\tau))}d\tau=\int_{0}^{\gamma^{-1}\circ \xi(t)}\sqrt{g_{\mathcal N}(\dot\gamma(\tau),\dot\gamma(\tau))}d\tau
    \end{align}
    Take $t\to 0$, we have $g_{\mathcal N}(df u,df u)=g_{\mathcal M}(u,u)$ where $u=\dot\xi(0).$

    Next we use the identity
    \begin{align}
    \SwapAboveDisplaySkip
        \left<u,v\right>=\frac{1}{4}\left(\left<u+v,u+v\right>-\left<u-v,u-v\right>\right)
    \end{align}
    for any 2-form $\left<\cdot,\cdot\right>$, and apply to $g_{\mathcal M}, g_{\mathcal N}$, we have
    \begin{align}
        g_{\mathcal N}(df u,df v)=g_{\mathcal M}(u,v)\forall u,v\in T_{x}\mathcal M.
    \end{align}
\end{proof}

\subsection{\cref{prop:extn}}
    Suppose $w_\psi$ is $L$-Lipshitz, and $\max_{i,j}||x_i-\check x_j||\leq M$. $\forall \epsilon>0$, if $\mathcal L_{w}(\psi)\leq-LM+\epsilon$, we have $\mathbb E_x[s(x)^2]\leq \epsilon$.
\begin{proof}
    Denote $p_{\text{on}}$ the data distribution and $p_{\text{off}}$ the distribution of off-manifold points defined \cref{expr:negative_pts}.
    
    $\forall x\sim p_{\text{on}}, \check x\sim p_{\text{off}}$, since $w_\psi$ is $L$-Lipshitz, $\mathbb |w_\psi(\check x)-w_\psi(x)|\leq L||\check x-x||<LM.$ 
    
    Taking expectaion, we have $\mathbb E_{\check x}[w_\psi(\check x)]-E_{x}[w_\psi(x)]\geq-LM.$
    
    Thus, $\mathcal L_{w}(\psi)\leq-LM+\epsilon\implies \mathbb E[s(x)^2]= \operatorname{Var}_{x}(w_{\psi}(x))=\mathcal L_{w}(\psi)-(\mathbb E_{\check x}[w_\psi(\check x)]-E_{x}[w_\psi(x)])\leq \epsilon$.
\end{proof}

\subsection{\cref{lem:bilip}}
    If there exists $\alpha \in \mathbb{R}$ such that for any 
    $x, \check x, \alpha||x-\check x||\leq |s(x)-s(\check x)|$.
    Then for any $x,\check x, ||f^{+}(x)-f^{+}(\check x)||\geq \alpha \beta ||x-\check x||$.
    Furthermore, denoting \mbox{$\mathcal D_{\mathcal M}(y):=\inf\nolimits_{x\in\mathcal M}||x-y||$} and 
    \mbox{$\mathcal D_{f^{+}(\mathcal M)}(y):=\inf\nolimits_{x\in\mathcal M}||f^{+}(x)-f^{+}(y)||$}, then for any $\check x,$ we have $ \mathcal D_{f^{+}(\mathcal M)}(\check x)\geq \alpha \beta \mathcal D_{\mathcal M}(\check x)$.
\begin{proof}
    Because $r(x)
    =
    \left(\begin{matrix}
        f_\theta(x)
        \\
        s(x)
    \end{matrix}\right)$, where $s(x)=\beta{(\bar w-w_\psi(x))}$, we directly compute:
    \begin{align}
        ||r(x)-r(\check x)||^2=&||f_\theta(x)-f_\theta(\check x)||^2+|s(x)-s(\check x)|^2\\
        \geq& |s(x)-s(\check x)|^2\\
        \geq& \beta^2|w_{\psi}(x)-w_{\psi}(\check x)|^2\\
        \geq& \beta^2 \alpha^2 ||x-\check x||^2,
    \end{align}
    we have $||r(x)-r(\check x)||\geq \beta \alpha||x-\check x||$.

    Taking infimum over $x\in\mathcal M$, we have $\mathcal D_{r(\mathcal M)}(\check x)\geq \beta \alpha \mathcal D_{\mathcal M}(\check x)$
\end{proof}

\subsection{\cref{prop:volume_guidence}}
    Suppose $f_{\mathrm{target}}(x)=\lambda s(x)-\log(f_{vol})(x)$ is $\alpha$-strongly convex for some constant $\alpha>0$, i.e. $\nabla^2 f(x)\succeq \alpha I$, then the distribution of $X$ in \Cref{eqn:ld_vol} converges exponentially fast in Wasserstein distance to a distribution supported on the data manifold, whose restriction on the manifold is proportional to the volume distribution function.

\begin{proof}
    The proof follows from equation (1.4.9) in this textbook \url{https://chewisinho.github.io/main.pdf}:
    Suppose $f_{\mathrm{target}}$ is $\alpha$-strongly convex, for any $X_t\sim \mu_t, Y_t\sim\nu_t$ following the Langevin dynamics, initialized at $X_0\sim \mu_0,Y_0\sim\nu_0$, we have 
    \begin{equation}
        W_{2}^{2}(\mu_t,\nu_t)\leq e^{-2\alpha t}W_{2}^{2}(\mu_0,\nu_0).
    \end{equation}
   Now we check that $p(x)=\frac{1}{Z}e^{-f_{\mathrm{target}}(x)}$, where $Z=\int e^{-f_{\mathrm{target}}(x)}dx$, corresponds to a stochastic process governed by this SDE by showing that it satisfies the Fokker-Planck equation.
   \begin{align*}
       \text{LHS: }\frac{\partial p(x)}{\partial t}=0\\
       \text{RHS: }\nabla\cdot(p(x)\nabla f_{\mathrm{target}}(x))+\Delta p(x)\\
       =\frac{1}{Z}(\nabla\cdot(e^{-f_{\mathrm{target}}(x)}\nabla f_{\mathrm{target}}(x))+\Delta e^{-f_{\mathrm{target}}(x)})\\
       =\frac{1}{Z}(-\nabla\cdot\nabla e^{-f_{\mathrm{target}}(x)}+\Delta e^{-f_{\mathrm{target}}(x)})\\
       =0
   \end{align*}
   Therefore, for any initialization $X_0\sim \mu_0$, let $Y_0\sim p$, we have  
    \begin{equation}
        W_{2}^{2}(\mu_t,p)\leq e^{-2\alpha t}W_{2}^{2}(\mu_0,p).
    \end{equation}
    where $p(x)=e^{-\lambda s(x)}
    f_{vol}(x)$.
    Since $s(x)\approx 0$ if $x$ in on the manifold, and is large when $x$ is away from the manifold, we have $p(x)\approx f_{vol}(x)$ if $x$ is on the manifold, and $p(x)\approx 0$ if $x$ is away from the manifold.
\end{proof}
\subsection{\cref{prop:geod}}
Assume that the $\omega$-thickening of the manifold $\mathcal{M} \subset \mathbb{R}^n$, defined as
$
\mathcal{M}^{\omega} := \{ x \in \mathbb{R}^n : \inf_{m \in \mathcal{M}} d(x, m) < \omega \},
$
(i.e., the set of points whose distance from $\mathcal{M}$ is less than $\omega$) maps into a subset of the $\epsilon$-thickening of $f(\mathcal{M})$. Here, the $\epsilon$-thickening is defined analogously, with $\epsilon$ chosen such that for every $x \in f(\mathcal{M})$, the ball
$
B_\epsilon(x) := \{ y \in \mathbb{R}^n : \|y - x\| < \epsilon \}
$
intersects $f(\mathcal{M})$ in exactly one connected component.

Then, for any smooth curve $c:[0,1] \to \mathbb{R}^n$ connecting $x_0$ and $x_1$ (i.e., $c(0) = x_0$ and $c(1) = x_1$), there exists a smooth curve $c':[0,1] \to \mathcal{M}$ lying entirely \textbf{on the manifold} (with $c'(0) = x_0$ and $c'(1) = x_1$) such that
$
\mathcal{L}_{\text{Geo}}(c') \leq \mathcal{L}_{\text{Geo}}(c) - \alpha^2 \beta^2 \frac{1}{M} \sum_{m=1}^M \bigl( \mathcal{D}_{\mathcal{M}}(c(t_m)) - \mathcal{D}_{\mathcal{M}}(c(t_{m-1})) \bigr)^2 + \xi.
$
Here, $\alpha$ is defined as in \Cref{lem:bilip}, $\mathcal{D}_{\mathcal{M}}$ denotes the distance from a point to $\mathcal{M}$ (also as in \Cref{lem:bilip}), and $\xi$ is a fixed positive constant independent of $x_t$ and $\beta$.
\begin{proof}


    Consider a smooth $c:[0,1] \to \mathbb{R}^n$ with $c(0)=x_{1}, c(0)=x_{1}$ which lies within the ${} \omega {}$-thickening of $\mathcal{M}$. 
    We construct an open cover of its image $f(c)$ as the collection of open balls $\{ B_{\epsilon}(c(t)) : t \in [0,1]\}$. By compactness, this admits a finite subcover at some collection of times $\{ t_{1} \dots t_{N} \}$. 
    For each $t_{i}$, we can choose point $c'[t_{i}]$ from $B_{\epsilon}(c(t_{i})) \cap f(\mathcal{M})$. By the continuity of ${} f \circ c {}$, these are all part of the same connected component of $f(\mathcal{M})$, hence there exists a curve $c':[0,1] \to \mathbb{R}^n$ with the same endpoints as $c$, whose image contains $\{c'[t_{i}]\}$. 
    Furthermore, by the smoothness of $f$ and $c$, there exists a uniform $K>0$ independent of $c,c'$ such that $|\int \dot c(t)^TJ_f^TJ_f\dot c(t)-\dot c'(t)^TJ_f^TJ_f\dot c'(t)dt|<K\epsilon$.
    Following \cref{prop:extn}, because $c'\in\mathcal M$, we also have $|\int \dot c'(t)^TJ_s^TJ_s \dot c'(t)|<\epsilon'$ for some uniform $\epsilon'>0$ independent on $c,c'$.
    
    We can decompose the pullback metric as
    \begin{align}
        J^T_{r}J_r=J_f^TJ_f+J_s^TJ_s.
    \end{align}
    and compute the difference
    \begin{align}
        \mathcal L_{\text{Geo}}(c)-\mathcal L_{\text{Geo}}(c')=&\frac{1}{M}\sum_{m=1}^M (\dot c(t)^TJ_f^TJ_f\dot c(t)+ \dot c(t)^TJ_s^TJ_s \dot c(t)-(\dot c'(t)^TJ_f^TJ_f\dot c'(t)+ \dot c'(t)^TJ_s^TJ_s \dot c'(t)))\\
        = &\frac{1}{M}\sum_{m=1}^M (\dot c(t)^TJ_f^TJ_f\dot c(t)-\dot c'(t)^TJ_f^TJ_f\dot c'(t) + \dot c(t)^TJ_s^TJ_s \dot c(t) + \dot c'(t)^TJ_s^TJ_s \dot c'(t))\\
        \geq & -K \epsilon - \epsilon' + \frac{1}{M}\sum_{m=1}^M\dot c(t)^TJ_s^TJ_s \dot c(t).\\
        \geq & -K \epsilon - \epsilon' -\epsilon''+ \frac{1}{M}\sum_{m=1}^M\dot (s(c(t_m))-s(c(t_{m-1}))^2\\
        \geq & -K \epsilon - \epsilon' -\epsilon''+ \frac{1}{M}\sum_{m=1}^M\dot (s(c(t_m))-s(c(t_{m-1}))^2.\\
        \geq & -K \epsilon - \epsilon' -\epsilon''+ \frac{1}{M}\alpha\beta\sum_{m=1}^M\dot (D_{\mathcal M}(c(t_m))-D_{\mathcal M}(c(t_{m-1}))^2,\\
    \end{align}
    where $\epsilon', \epsilon''$ are positive constants independent on $x_t,\beta$.
\end{proof}

\subsection{\cref{prop:geod_minim}}
    When $\mathcal L_{\text{Geo}}$ is minimized, $\underset{m=1,\dots,M}{\max} \mathcal D_{\mathcal M}(c(t_m))\le \frac{\sqrt{\xi}}{\alpha\beta}$, i.e., for sufficiently large $\beta$, $c(t)$ is close to the manifold with a maximum distance of $\frac{\sqrt{\xi}}{\alpha\beta}$.
    Furthermore, let $c'(t)$ be a geodesic between $x_0$ and $x_1$ under the metric $g_{\mathcal M}$, we have $\frac{1}{M} \sum_{m=1}^{M} {g_{\mathcal M} (\dot c, \dot c)}(x_0,x_1,t_m)\leq \frac{1}{M} \sum_{m=1}^{M} {g_{\mathcal M} (\dot c', \dot c')}(x_0,x_1,t_m)+\xi'\frac{\sqrt{\xi}}{\alpha\beta}$ for some positive constant $\xi'$. That is, $c$ approximately minimizes the energy (and hence curve length) under $g_{\mathcal M}.$
\begin{proof}
    Suppose $c$ minimizes $\mathcal L_{\text{Geo}}$. 
    Then by \cref{prop:geod}, there exists $c'$ such that 
    \begin{align}
        \mathcal L_{\text{Geo}}(c')\leq\mathcal L_{\text{Geo}}(c)-\alpha^2\beta^2 \frac{1}{M}\sum_{m=1}^M (D_{\mathcal M}(c(t_m))-D_{\mathcal M}(c(t_{m-1})))^2+\xi.
    \end{align}
    On the other hand, because $c$ is a minimizer, we have 
    \begin{align}
        \mathcal L_{\text{Geo}}(c)\leq \mathcal L_{\text{Geo}}(c').
    \end{align}
    Combining them, we have
    \begin{align}
        \alpha^2\beta^2 \frac{1}{M}\sum_{m=1}^M (D_{\mathcal M}(c(t_m))-D_{\mathcal M}(c(t_{m-1})))^2\leq \mathcal L_{\text{Geo}}(c)-\mathcal L_{\text{Geo}}(c')+\xi
        \leq \xi.
    \end{align}
    Rearrange $t_0,\dots,t_M$ with a permutation $\sigma$ such that $D_{\mathcal M}(t_{\sigma(0)})\leq \dots\leq D_{\mathcal M}(t_{\sigma(M)})$, and because $D_{\mathcal M}(t_0)=0$ (the minimum), WLOG, let $t_{\sigma(0)}=0$.
    We have 
    \begin{align}
        \alpha^2\beta^2 \frac{1}{M}\sum_{m=1}^M (D_{\mathcal M}(c(t_{\sigma(m)}))-D_{\mathcal M}(c(t_{\sigma(m-1)})))^2\leq& \xi\\
        \implies \alpha^2\beta^2 (\frac{1}{M}\sum_{m=1}^M (D_{\mathcal M}(c(t_{\sigma(m)}))-D_{\mathcal M}(c(t_{\sigma(m-1)})))^2\leq& \xi \text{ (by Jensen's inequality)}\\
        \implies \alpha^2\beta^2 (D_{\mathcal M}(c(t_{\sigma(M)})-D_{\mathcal M}(c(t_{\sigma(0)}))^2\leq&\xi\\
        \implies \underset{m=1,\dots,M}{\max} D_{\mathcal M}(c(t_m))=D_{\mathcal M}(c(t_{\sigma(M)}))\leq \frac{\sqrt{\xi}}{\alpha\beta}.
    \end{align}
    The proof for the second part follows from the Lipshitz property of $s(x)$ and the smoothness of $f$ in \cref{prop:geod}.
\end{proof}
\subsection{\Cref{prop:flow_matching}}
    At the convergence of \Cref{alg:flow_matching}, 
    \begin{equation}\label{expn:optimal_geods}
        x(t)=x_0 + \int_{0}^t v_\nu (x_0,\tau)d\tau
    \end{equation}
    are geodesics between points in $\mathcal X$ and points in $\mathcal Y$ following the optimal transport plan that minimizes the geodesic lengths. 
\begin{proof}
    We first prove that when \Cref{eq:geo_curve_loss} and \Cref{eq:flow_loss} are minimized, \Cref{expn:optimal_geods} yields geodesics from $x_0\in\mathcal X$ to $x_1\in\mathcal Y$. 
    This is because by \Cref{prop:geod}, the curves $c_\eta$ are geodesics. When \Cref{eq:flow_loss} is minimized, $v_\nu$ approximates the gradient of $c_\eta$, and its integration starts at the same point $x_0$ approximates $c_\eta$.

    The rest follows from the the proof of Algorithm 3 in~\citep{Conditional_flow_matching}.

\end{proof}

\section{Additional Convergence Proposition for Volume-Guided Generation}\label{appx:vol_conv}
\begin{proposition}
Suppose when $X_t$ is initialized near the manifold $\mathcal{M}$, it stays in the neighborhood $\mathcal D:=\{X\in\mathcal M:\mathcal D_\mathcal M(X)\leq s\}$ near $\mathcal M$ with high probability up to time $T$, and that $\exp(-f_{\mathrm{target}}(x))$ satisfies Poincar\'e's inequality along and across the level sets of $f_{\mathrm{target}}$ near the manifold, then the distribution in \Cref{eqn:ld_vol} converges exponentially fast in total variation distance to a distribution supported on the data manifold, whose restriction on the manifold is proportional to the volume distribution function.
\end{proposition}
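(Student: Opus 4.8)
The plan is to run the standard functional-inequality argument for the overdamped Langevin diffusion in \Cref{eqn:ld_vol}, whose unique stationary measure is $\pi(x)\propto e^{-f_{\mathrm{target}}(x)}=e^{-\lambda s(x)}f_{vol}(x)$. First I would record the two elementary reductions we rely on: (i) since $s(x)\approx 0$ on $\mathcal M$ and $s(x)$ grows away from $\mathcal M$, the mass of $\pi$ is concentrated in the tube $\mathcal D$ and its restriction to $\mathcal M$ is proportional to $f_{vol}$, i.e.\ to the volume element $\sqrt{\det g_{\mathcal M}}$ of \Cref{def:vol_dens}; and (ii) by Cauchy--Schwarz, $\|\mu_t-\pi\|_{\mathrm{TV}}\le\tfrac12\sqrt{\chi^2(\mu_t\,\|\,\pi)}$ with $\chi^2(\mu_t\,\|\,\pi)=\mathrm{Var}_\pi(d\mu_t/d\pi)$, so it suffices to prove exponential decay of the $\chi^2$-divergence.

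Second, I would use the Poincar\'e inequality. The generator of \Cref{eqn:ld_vol} is $\mathcal L=\Delta-\nabla f_{\mathrm{target}}\cdot\nabla$, self-adjoint in $L^2(\pi)$ with Dirichlet form $\mathcal E(h)=\int|\nabla h|^2\,d\pi$. Writing $h_t=d\mu_t/d\pi$ and differentiating in $t$,
\begin{align}
\frac{d}{dt}\chi^2(\mu_t\,\|\,\pi)=-2\,\mathcal E(h_t)\le-\frac{2}{C_P}\,\chi^2(\mu_t\,\|\,\pi),
\end{align}
where the inequality is the Poincar\'e inequality with constant $C_P$; integrating this differential inequality gives $\chi^2(\mu_t\,\|\,\pi)\le e^{-2t/C_P}\chi^2(\mu_0\,\|\,\pi)$, hence exponential convergence in $\mathrm{TV}$. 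Producing $C_P$ is where the ``Poincar\'e along and across the level sets of $f_{\mathrm{target}}$'' hypothesis enters: disintegrate $\pi=\int\pi_z\,\bar\pi(dz)$ along the value $z=f_{\mathrm{target}}(x)$, combine the conditional Poincar\'e constant on each level set $\{f_{\mathrm{target}}=z\}$ (the ``along'' part) with the Poincar\'e constant of the pushforward $\bar\pi$ on $\mathbb R$ (the ``across'' part) and a bound on $\|\nabla\,\mathbb E_{\pi_z}[h]\|$, via the standard decomposition lemma for Poincar\'e inequalities, to obtain a Poincar\'e inequality for $\pi$ on $\mathcal D$.

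Third, I would localize, since the functional inequality is only assumed near $\mathcal M$. Introduce the exit time $\tau=\inf\{t:X_t\notin\mathcal D\}$; by hypothesis $\mathbb P(\tau>T)\ge 1-\delta$ for an initialization near $\mathcal M$. On the event $\{\tau>T\}$ the trajectory never leaves $\mathcal D$, so I would run the argument above for the process reflected (or killed) at $\partial\mathcal D$, whose invariant law is $\pi$ restricted to $\mathcal D$ and renormalized --- this differs from $\pi$ only by the mass of $\pi$ outside $\mathcal D$, which is exponentially small because $s$ is large off $\mathcal M$. Splitting $\mu_t=\mathbb P(\tau>t)\,\mu_t^{\mathrm{good}}+\mathbb P(\tau\le t)\,\mu_t^{\mathrm{bad}}$ and bounding $\|\mu_t-\pi\|_{\mathrm{TV}}$ by the exponential term from the good part, plus $\delta$, plus the truncation error, then yields exponential convergence up to time $T$ to a distribution that is supported (up to exponentially small mass) near $\mathcal M$ and whose on-manifold restriction is proportional to $f_{vol}$.

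The hard part will be the localization step. Because $f_{vol}$ is built from the $d$ nonzero singular values of the rank-$d$ Jacobian $J_f$, the target $\pi$ genuinely concentrates near the $d$-dimensional set $\mathcal M$, so the limiting object is nearly singular and no clean global curvature (Bakry--\'Emery) bound is available --- this is exactly why \Cref{prop:volume_guidence}'s strong-convexity hypothesis is replaced here by a local Poincar\'e assumption. The along/across-level-set decomposition is the device that sidesteps the missing curvature bound, but controlling the gluing constants uniformly in $\beta$ and $\lambda$ while simultaneously controlling the exit probability $\mathbb P(\tau\le T)$ --- which competes against how sharply $\pi$ is peaked around $\mathcal M$ --- is the delicate balance the argument must strike.
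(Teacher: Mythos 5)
Your proposal is correct in outline and follows essentially the same route as the paper: the paper's entire proof is a one-line citation to \citet[Theorem 4]{moitra2020fast}, and that theorem is established by precisely the machinery you describe --- exponential decay of the $\chi^2$-divergence under a Poincar\'e inequality for $e^{-f_{\mathrm{target}}}$ obtained by decomposing along and across the level sets of $f_{\mathrm{target}}$, combined with localization to the high-probability tube around $\mathcal M$ via the exit-time/restricted-chain argument. Your sketch is therefore more explicit than the paper's proof, which simply remarks that the assumptions hold because $\lambda s(x)$ confines the dynamics and $e^{-f_{\mathrm{target}}}$ is differentiable on a compact domain. Two minor slips: $\beta$ does not appear in $f_{\mathrm{target}}=\lambda s(x)-\log f_{vol}(x)$ (only $\lambda$ does), so uniformity in $\beta$ is not at issue here; and the process killed at $\partial\mathcal D$ does not have the renormalized restriction of $\pi$ as its invariant law (that requires the reflected or conditionally restricted chain, which is what the cited theorem actually uses).
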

\begin{proof}
    This is a direct application of \citet[Theorem 4]{moitra2020fast}.
\end{proof}
These assumptions are attainable in our setup, given the fact that the manifold is bounded because the $\lambda s(x)$ term prevents the points from going far away from the manifold. In addition, if we restrict the domain to a ball containing the manifold, the function $e^{-f_{\mathrm{target}}(x)}$ is differentiable and hence satisfies Poincare’s inequality on this compact domain.

\section{Experiment Details}
\subsection{Geometry-aware autoencoder}
\label{appdx:experiment_details_ae}
\begin{table}[b!]
    \setlength{\tabcolsep}{2pt}
    \vspace*{-5pt}
    \centering
    \caption{Average DEMaP and DRS on simulated single-cell datasets over different noise settings.}
    \begin{tabular}[b]{ccccc}
      \toprule
      & Objective & State Space & DEMaP ($\uparrow$) & DRS ($\uparrow$) \\
      \midrule
      Autoencoder & $\mathcal{L}_\text{Recon}$ & Clusters & 0.347{\scriptsize \textcolor{gray}{$\pm$0.117}} & 0.642{\scriptsize \textcolor{gray}{$\pm$0.129}} \\
      \methodshort & $\mathcal{L}_\text{Recon}, \mathcal{L}_\text{Dist}$ & Clusters & \textbf{0.645}{\scriptsize \textcolor{gray}{$\pm$0.195}} & \textbf{0.667}{\scriptsize \textcolor{gray}{$\pm$0.165}} \\
      \midrule
      Autoencoder & $\mathcal{L}_\text{Recon}$ & Trajectories & 0.433{\scriptsize \textcolor{gray}{$\pm$0.135}} & \textbf{0.587}{\scriptsize \textcolor{gray}{$\pm$0.148}} \\
      \methodshort & $\mathcal{L}_\text{Recon}, \mathcal{L}_\text{Dist}$ & Trajectories & \textbf{0.600}{\scriptsize \textcolor{gray}{$\pm$0.191}} & 0.559{\scriptsize \textcolor{gray}{$\pm$0.143}} \\ 
      \bottomrule
    \end{tabular}
    \label{tab:demap_drs_extended}
    \vspace*{-12pt}
\end{table}
\subsubsection{Datasets: Splatter}
We evaluate our geometry-aware autoencoder on simulated scRNA-seq datasets Splatter\citep{zappia2017splatter}. Splatter uses parametric models to simulate cell populations with multiple cell types, structures, and differentiation patterns. Specifically, we evaluate on single-cell data of group and path structures with biological coefficient of variation (bcv) parameters $ \{0, 0.18, 0.25, 0.5\}$. A higher bcv corresponds to a lower signal-to-noise ratio. The cellular state space is a simulation parameter indicating whether the cells are arranged in clusters or trajectories in the data space. In Splatter, it is specified by the \texttt{method} parameter, where clusters correspond to \texttt{groups} and trajectories correspond to \texttt{paths}.

\subsubsection{Evaluation Criteria}
For the encoder, we leverage DEMaP~\citep{PHATE} to measure the correlation between Euclidean distances in latent space and ground truth geodesic distances in original data space.
\begin{align}
    \text{DEMaP}(f)=\frac{2}{N(N-1)}\sum_{i<j}\operatorname{Corr}(||f(x_i)-f(x_j)||_2,d_{ij}),
\end{align}
where $f$ is the encoder to be evaluated, $\operatorname{Corr}$ is Pearson correlation, $x_i, x_j$ are points from test data, and $d_{ij}$ is the ground truth geodesic distance between $x_i, x_j$, computed from shortest path distance under noiseless setting. 

For decoder evaluation, we propose a novel criteria, DRS (Denoised Reconstruction Score), to account for the noisy and sparse nature of single-cell data. DRS computes the correlation between reconstructed genes and denoised genes through denoising and imputation method MAGIC\citep{van2018recovering}.
\begin{align}
    \text{DRS($f, h$)}=&\frac{1}{N_{\text{gene}}}\sum_{i=1}^{N_{\text{gene}}}\operatorname{Corr}(y_i,y^{\text{MAGIC}}_i),
\end{align}
where $f, h$ are the encoder and decoder pair, $y_i=\operatorname{PCA^{-1}}(h(f(x_i))$, $y^{\text{MAGIC}}_i=\operatorname{PCA^{-1}}(\operatorname{MAGIC}(x_i))$. $\operatorname{PCA^{-1}}$ here is the inverse PCA operator since the original data are first PCA transformed and then fed into the autoencder. Therefore we use inverse PCA to map the reconstructed points back to the gene space for evaluation.

\subsection{Volume-guided Generation on Manifold}
\subsubsection{Generate imbalanced data on toy manifolds}
We generate imbalanced data on hemishpere, saddle, and paraboloid. \Cref{tab:toy_analytical} shows their parametrizations and volume elements.

In order to generate imbalanced data on the manifold, we generate $3,000$ points following a bivariate Gaussian distribution $\mathcal N\left(\left(\begin{matrix}
    1\\1
\end{matrix}\right),\left(\begin{matrix}
    2&0\\0&2
\end{matrix}\right)\right)$, with range restricted to $[-2,2]\times[-2,2]$. These points are used as parameters $(u,v)$, which we use to compute $(x,y,z)$ with the parametrizations in \Cref{tab:toy_analytical}.
These points $(x,y,z)\in\mathbb R^3$ are used as training points for \methodshort.

\begin{table}[tb!]
    \centering
    \begin{tabular}{ccc}
    \toprule
    \text{Manifold} & \text{Parametrization $(u,v)$} & \text{Volume Element $f_{{vol}}(u,v)$} \\
    \midrule
    Hemisphere &
    $\begin{cases} 
    x = u \\
    y = v \\
    z = \sqrt{1 - u^2 - v^2}
    \end{cases}$ &
    $\frac{1}{\sqrt{1 - u^2 - v^2}}$ \\
    \midrule
    Saddle &
    $\begin{cases} 
    x = u \\
    y = v \\
    z = u^2 - v^2
    \end{cases}$ &
    $\sqrt{1 + 4u^2 + 4v^2}$ \\
    \midrule
    Paraboloid &
    $\begin{cases} 
    x = u \\
    y = v \\
    z = u^2 + v^2
    \end{cases}$ &
    $\sqrt{1 + 4u^2 + 4v^2}$ \\
    \bottomrule
    \end{tabular}
    \caption{Parameterizations and volume elements of toy manifolds. We have access to the analytical forms of the volume elements computed from the parameterizations. We use them as the ground truth in evaluation.}
    \label{tab:toy_analytical}
\end{table}

\subsubsection{Details on evaluation metric for volume guided generation}\label{appx:vol_eval}
We evaluate the generated points by comparing its density estimation with the ground truth volume element in the parameter space.
We first convert the generated points in $\mathbb R^3$ back to the parameter space using $u=x,v=y$.
Then, we use apply kernel density estimation to the parameters $(u,v)\in\mathbb R^2$. 
We use a Gaussian kernel and use Scott's rule to determine the bandwidth.
To avoid the error from boundary effects of kernel density estimation, as well as the numerical instability of the volume element computation of the hemisphere near the boundary, we mask out the points near the boundary by only computing kernel density estimation and volume element on $\{(u,v):u^2+v^2<0.8\}$ for hemisphere, and $\{(u,v):|u|,|v|<1.6\}$ for saddle and paraboloid.
\subsection{Generating along geodesics}
\label{appdx:exp_details_geodesics}
\subsubsection{Datasets: Simulated manifolds}
We generate four toy manifolds: ellipsoid, torus, saddle, and hemisphere in $\mathbb{R}^3$. We add Gaussian noise of different scales to the original toy manifolds and rotate the data to higher dimensions using a random rotation matrix. We simulate datasets under $ \{0, 0.1, 0.3, 0.5\}$  noise scales and $\{3, 5, 10, 15\}$ dimensions. For each dataset, we randomly select 20 pairs of starting and ending points on the manifold. 


We benchmark all methods on the noisy, high-dimensional data, and compute the pairwise geodesics. 

\subsubsection{Evaluation Criteria}

Quantitatively, we evaluate these methods on the MSE criteria: the mean squared error between the predicted geodesic length and ground truth length.
\begin{align}
\SwapAboveDisplaySkip
    \text{Length MSE}=&\frac{1}{k}\sum_{i=1}^{k}(\hat l_{i} - l_i)^2,
\end{align}
where $k$ is the total number of geodesics, 
$l_i,\hat l_i$ are the lengths of the $i$-th ground truth and predicted geodesics.
We obtain the ground truth geodesics analytically if the solution is available or using Dijkstra’s algorithm on noiseless data otherwise.

\subsection{Geodesics-guided flow matching}\label{appx:geod_fm}
\subsubsection{Datasets: Randomly sampled populations on toy manifolds}
To showcase GAGA's ability on transporting distributions on manifolds, we generate four toy manifolds:  ellipsoid, torus, saddle, and hemisphere in $\mathbb{R}^3$. To simulate starting and ending distributions, we first randomly sample two points on the manifold as the starting and ending center and then sample $N$ points near these selected centers. We compute and visualize the flow paths between the two distributions.

\subsection{Hyperparameters}\label{appx:hparams}
We chose our hyperparameters using grid search on the validation sets. The hyperparameters we used for our experiments are the following:

For autoencoder training, both the encoder and decoder are multi-layer MLPs with hidden dimensions [256, 128, 64], [64, 128, 256] respectively. Each intermediate fully connected layer is followed by a spectral normalization layer, a batch normalization layer, a ReLU layer, and a dropout layer with 0.2 dropout probability. 

For training the discriminator $s(x)$, we use a multi-layer MLP with hidden dimensions  [256, 128, 64], and each intermediate fully connected layer is followed by a spectral normalization layer, a batch normalization layer, and a ReLU layer. 

For the geodesic-guided flow matching model, we use a multi-layer MLP with hidden dimensions [192, 192, 192] for curve parameterization and a multi-layer MLP with hidden dimensions [64, 64, 64] for the flow matching model. 

All models were trained with AdamW optimizer with learning rate 1e-3 and 1e-4 weight decay. The autoencoder was trained with 200 maximum epochs, the discriminator with 100 maximum epochs, and geodesic-guided flow matching with 100 maximum epochs. We used early stopping for all models, and the patience used is 50. 

We used the same set of loss weights in all experiments reported: $\lambda_1 = 77.4$, $\lambda_2 = 0.32$, $\zeta = 0.5$ for autoencoder loss (\Cref{loss:total}). $\beta = 10$ for the extended embedding (\Cref{expn:extn}). For volume-guided generation, we used $\lambda=10$ (\Cref{eqn:ld_vol}). For geodesic-guided flow matching, we used $\lambda_3 = 1$ and $\lambda_4 = 1$ (\Cref{eqn:loss_flow_matching}).

For applying GAGA on new datasets, we recommend starting with a relatively larger $\lambda_1$ and a smaller $\lambda_2$ for training the autoencoder. We found that $\lambda_1 = 77.4 $ and $\lambda_2=0.32 $ generally work well for single cell datasets. The much smaller $\lambda_2$ encourages the neural network to focus more on learning a good latent space instead of reconstructing the original signal since learning a latent space that preserves manifold distances is much more challenging than reconstruction. In addition, biological data are often very noisy, so better reconstruction does not necessarily aid in learning better representations. The decay parameter $\zeta$ encourages the latent space to focus more on matching local distances. We recommend starting with a relatively large $\beta$ for the extended embedding and a large $\lambda$ for volume-guided generation since it would place a significant penalty when generated points stray off from the manifold. In practice, we found $\beta=8$ and $\beta=10$ both work well in our experiments. For the geodesic-guided population transport, we recommend starting with equal $\lambda_3$ and $\lambda_4$ since we want to learn both the flow and the geodesic transportation path.

\section{Additional Experiment Results}\label{appx:addn_exp}

\subsection{Geometry-aware autoencoder under increasingly noisy data}

\begin{figure}[!ht]
\centering
\includegraphics[width=0.7\textwidth]{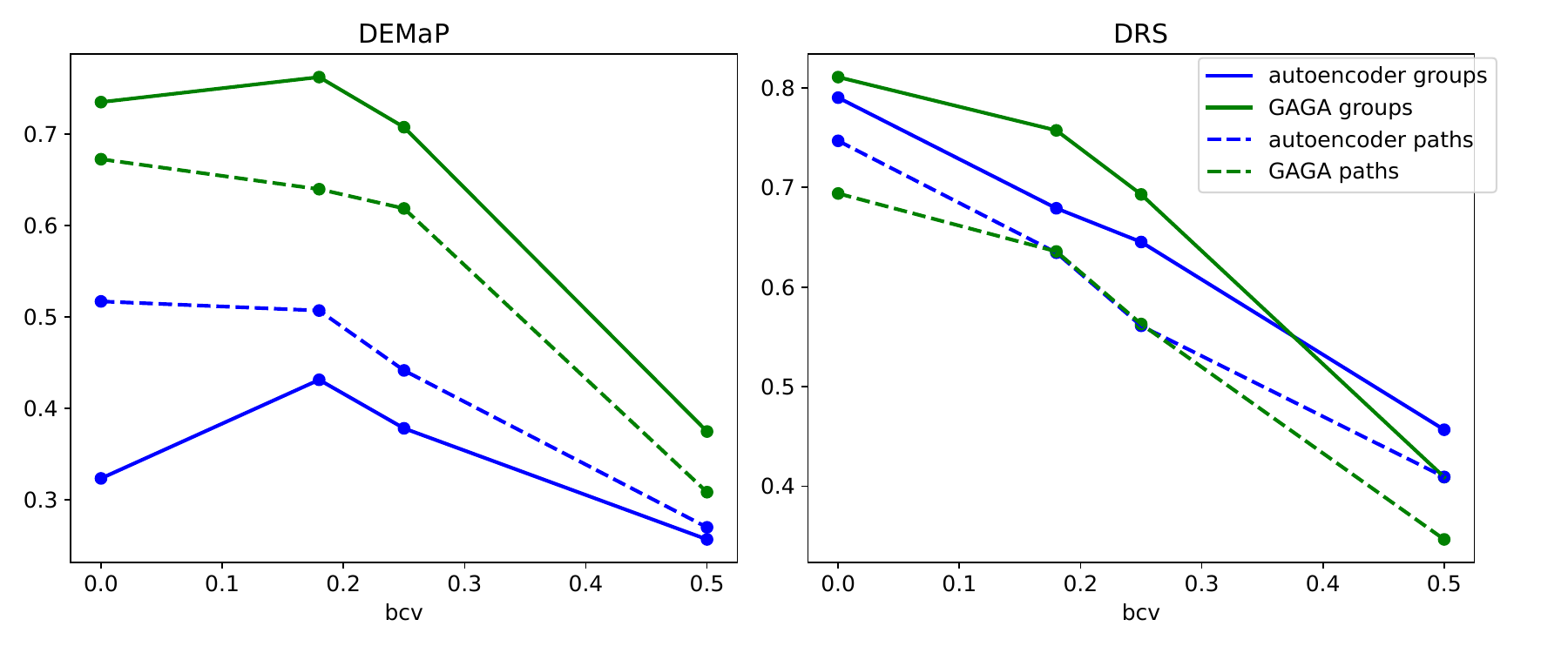}
\vspace*{-10pt}
\caption{Comparison for GAGA and standard autoencoder on increasingly noisy single-cell datasets. }
\label{fig:demap_bcv}
\end{figure}

In \Cref{fig:demap_bcv}, we observe that GAGA consistently outperforms standard autoencoder on DEMaP under increasingly noisy sinle-cell data simulated with increasing bcv parameter.
Moreover, we can see that GAGA generally rivals the standard autoencoder on DRS, indicating our distance-matching loss does not detract from data reconstruction.

\subsection{Visualizing GAGA's latent embeddings}
Qualitatively, we visualize the latent embeddings of GAGA on real-world scRNA-seq dataset EB, embryoid body data generated over 27 day time course~\citep{PHATE}.
We show that GAGA is able to capture geometric structures in the data, which are essential for biological insights and interpretations.
In addition to PHATE, we trained GAGA with two other geodesic distances obtained under different settings of HeatGeo~\citep{HeatGeo}.
We can see from \Cref{fig:embedding} that GAGA captures both local and global geometric structures such as clusters, branches, and paths. Moreover, \Cref{fig:embedding} shows that GAGA can match closely with the embedding method that it's based on, preserving the latent space of the original dimension reduction method and, at the same time, capable of generalizing to unseen points.

\begin{figure}[htbp]
    \vspace*{-10pt}
    \centering
    \includegraphics[width=0.98\textwidth]{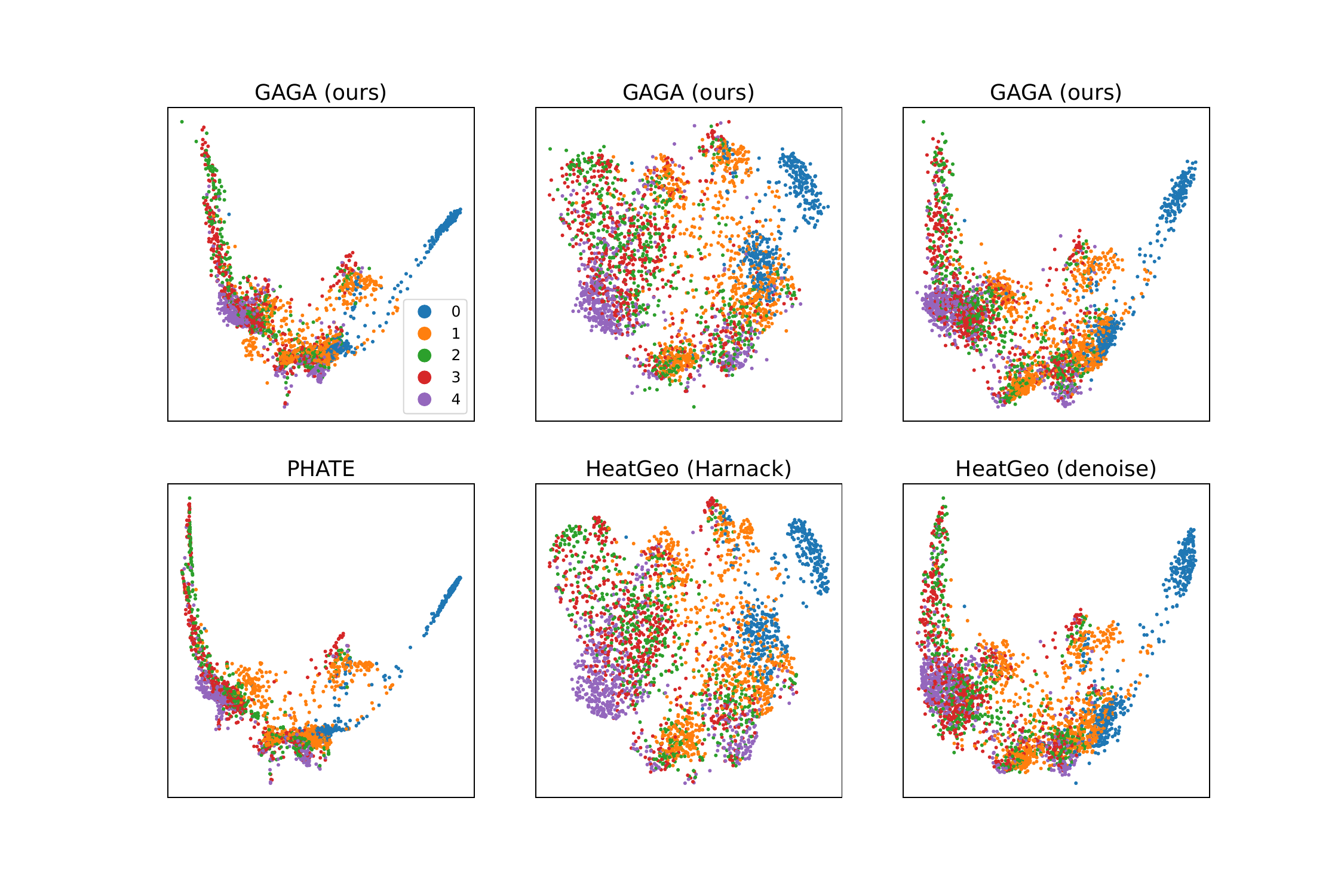}
    \vspace*{-25pt}
    \caption{
    Visualization of the embedding shows GAGA preserves local and global structures.
    }
    \label{fig:embedding}
\end{figure}

\subsection{Volume-guided Generation on Manifold}\label{appdx:results_unif_gen_full}
In \Cref{fig:unif_toy_full} (B,C,D) we show that the densities of the points generated by \methodshort are closer to the ground truth volume elements compared to the original data points, indicating that \methodshort largely reduces data imbalance.
In addition, \Cref{fig:unif_toy_full} (A) shows that the generated points stay on the data manifold and cover the sparse regions well in the original data.
\begin{figure}[htbp]
    \centering
    \includegraphics[width=0.9\textwidth]{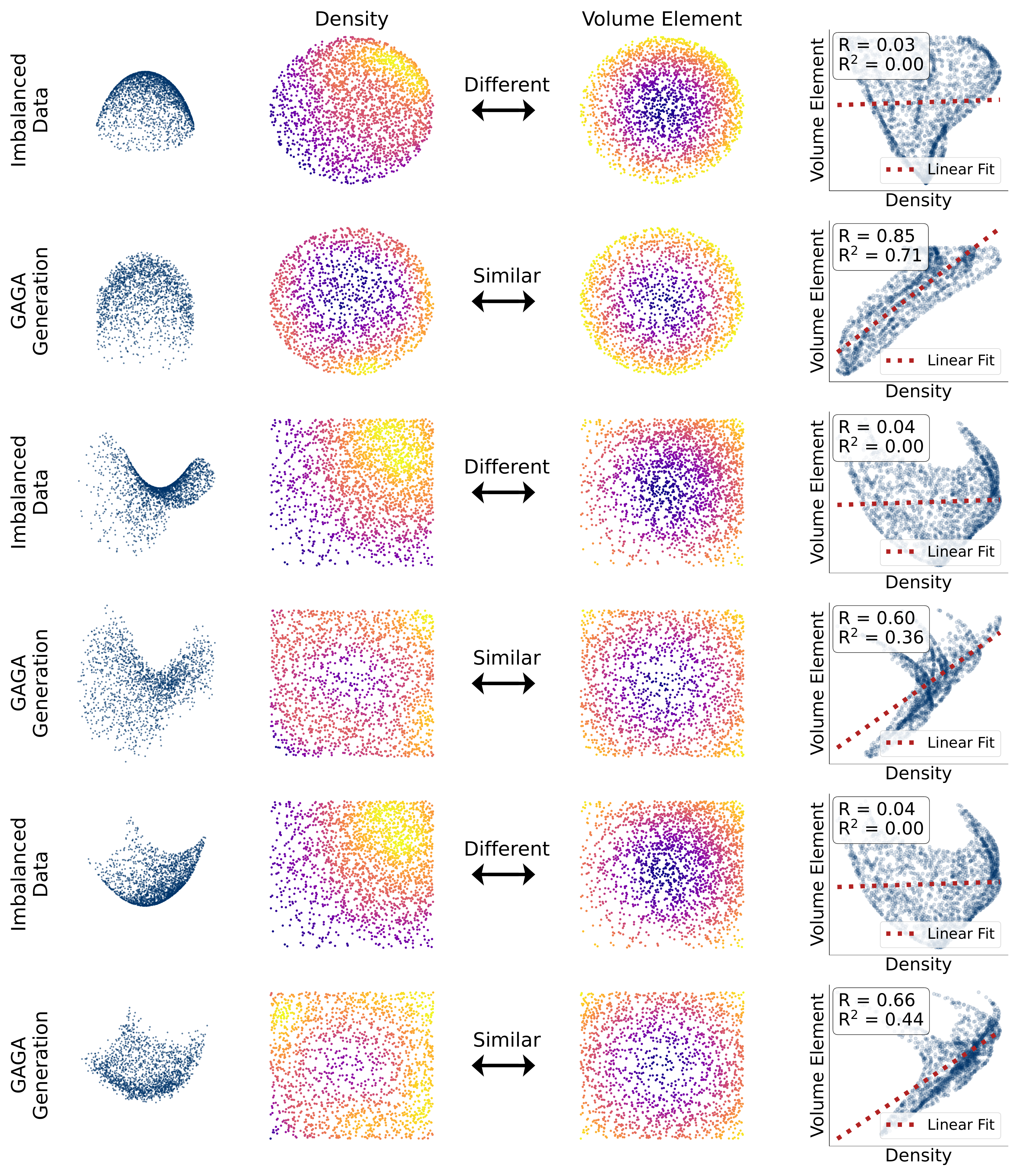}
    \caption{
        Geometry-aware generation with \methodshort on hemisphere saddle, and paraboloid. \textbf{(A)} Generated points remain on the manifold, and are more evenly distributed compared to raw data. \textbf{(B)} Kernel density estimation. \textbf{(C)} Ground truth volume elements computed analytically. \textbf{(D)} In raw data, density does not correlate to volume element, indicating data imbalance. \methodshort generation corrects the imbalance indicated by higher correlation between volume element and density.
    }
    \label{fig:unif_toy_full}
\end{figure}
\subsection{Comparing Volume-Guided Generation with On-Manifold Generation}\label{appx:RFM}
To assess the faithfulness of generated points to the data geometry, we compared our method with Riemannian Flow matching (RFM)~\citep{chen2023flow}. 
Notably, found that RFM only supports a number of specific manifolds in their implementation and that among the manifolds we conducted our experiment on, it only supports the hemisphere (supported through the sphere implementation). We used the hyperparameters for the sphere manifold (the volcano experiment) in their codebase. We present the comparison result in the following table:
\begin{table}[h]
    \centering
    \begin{tabular}{lcc}
        \toprule
        \textbf{Data} & \textbf{$R$} & \textbf{$R^{2}$} \\
        \midrule
        Original & -0.26 & 0.07 \\
        RFM Generated & -0.15 & 0.02 \\
        GAGA (Ours) Generated & 0.85 & 0.71 \\
        \bottomrule
    \end{tabular}
    \caption{Correlation $R$ and $R^2$ between the density and the volume element, where larger $R$ and $R^2$ indicate more faithful generation along the data geometry.}
    \label{tab:r_comparison}
\end{table}

We computed correlation $R$ and $R^2$ between the density and the volume element, where larger $R$ and $R^2$ indicate more faithful generation along the data geometry. Here “Original” refers to the original unbalanced dataset on which the models are trained. “RFM Generated” is the data generated by RFM, and “GAGA (Ours) Generated” is the data generated by our method. 

 Please refer to \Cref{appx:vol_eval} for detailed descriptions of how we generated the original dataset and computed the evaluation metrics. 

We observe that the RFM generated data exhibit weak correlations with the volume element, similar to the original unbalanced data. This occurs because the flow matching model learns the density of the training data rather than its geometry, making it unable to address sampling bias effectively. This limitation is illustrated in \Cref{fig:von_mises}.

\subsection{Geodesic computation in noisy data setting}
\label{appdx:results_geodesics_noisy_setting}
To better demonstrate the effectiveness of our method, especially in noisy data settings, we compared our method to Dijkstra’s algorithm in a setting of noise=0.7, dimension=15. We computed the mean squared error of geodesic lengths over 20 pairs of starting/ending points. To get a rigorous sense of significance, we used a Wilcoxon signed-rank test to compute the p-values (the null hypothesis is that the errors of the two methods are the same).

\begin{table}[h]
    \centering
    \begin{tabular}{lccc}
        \toprule
        \textbf{Dataset} & \textbf{GAGA (Ours)} & \textbf{Dijkstra} & \textbf{p-value} \\
        \midrule
        Ellipsoid  & 0.22  & 0.79  & 4.22e-03  \\
        Hemisphere & 2.25  & 5.67  & 3.22e-04  \\
        Saddle     & 2.73  & 6.34  & 1.99e-03  \\
        Torus      & 0.93  & 2.14  & 0.73      \\
        \bottomrule
    \end{tabular}
    \caption{Mean squared error of geodesic lengths of GAGA (Ours) vs. Dijkstra across different datasets under 0.7 noise scale and 15 dimensions.}
    \label{tab:gaga_vs_dijkstra}
\end{table}

We observe that our method significantly outperforms Dijkstra’s algorithm across all manifolds except the torus. 

Beyond the quantitative benchmarks, we would like to emphasize several fundamental advantages of our method over Dijkstra’s algorithm:
1) point generation: GAGA generates new points along the geodesic, whereas Dijkstra’s algorithm only connects existing points.
2) smoothness: GAGA learns smooth curves, while the curves produced by Dijkstra’s algorithm are discrete and prone to jittering, especially in the presence of noisy data.
3) geodesic insights: The smoothness of GAGA-generated curves allows us to compute other geometric quantities, such as velocities, providing valuable insights into the underlying manifold. For instance, we can compute the curvature of the geodesic.

\subsection{Visualizing geodesics on toy manifolds}
\label{appdx:results_geovis_full}

\Cref{fig:geovis_full} shows the geodesics of different methods on the same set of starting and ending points on multiple toy manifolds.
Each row corresponds to one manifold and each column corresponds to one method. From left to right column, the method is 1) ground truth, 2) GAGA, 3) local metric, 4) density regularization.
Density refers to geodesics learned with using density regularization.

We can see that GAGA generally outperforms all the other methods except Djikstra's on the saddle datasets. Directly using the local metric performs the worst, lagging far behind all other methods. 
The inferior performance of the local metric again illustrates the challenges of staying on the manifold while optimizing for the shortest path.

\begin{figure}[!thb]
    \centering
    \hspace*{5pt}
    \begin{minipage}[c]{0.27\textwidth}
    \centering
    \includegraphics[width=\textwidth]{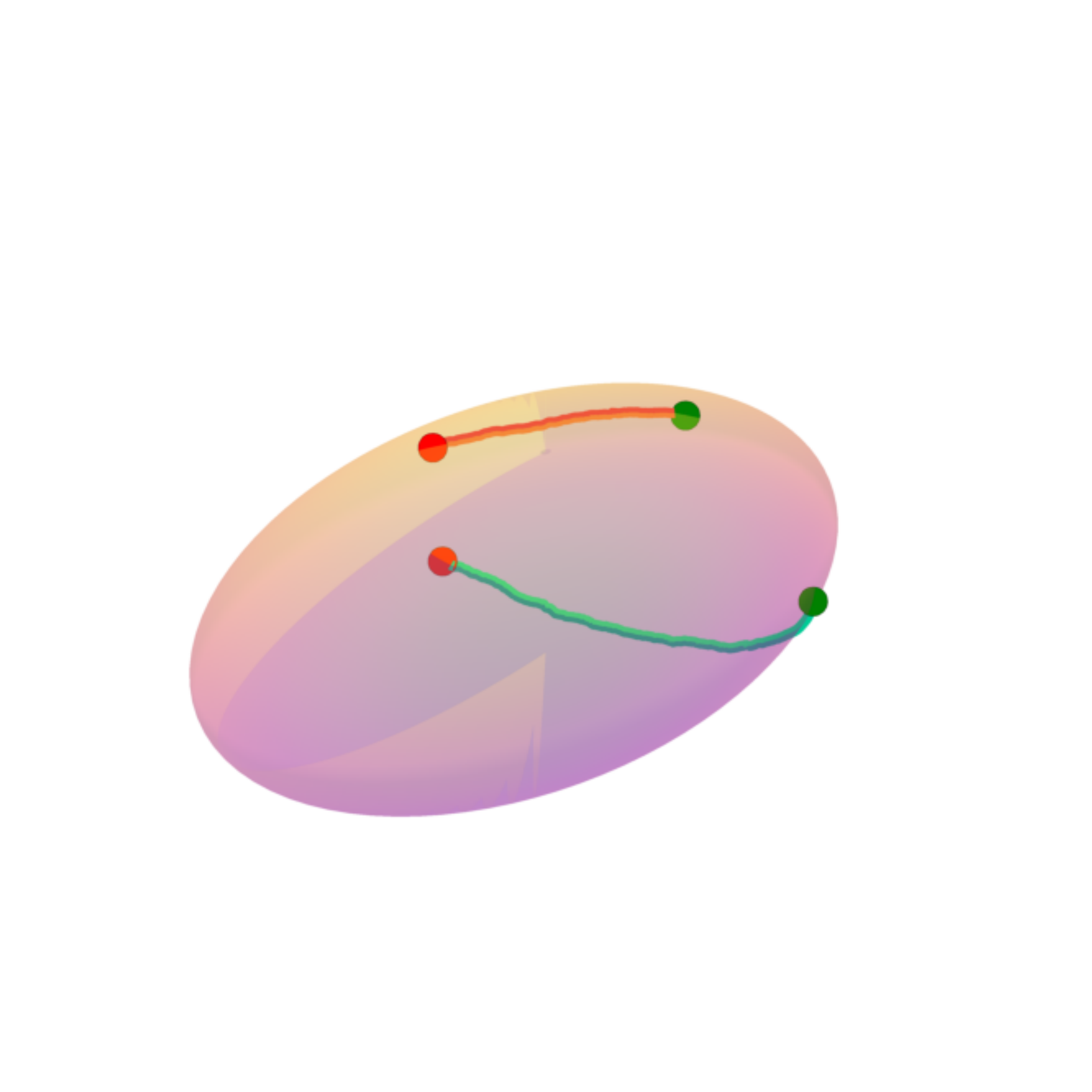}
    \end{minipage}
    \hspace*{-27pt}
    \begin{minipage}[c]{0.27\textwidth}
    \centering
    \includegraphics[width=\textwidth]{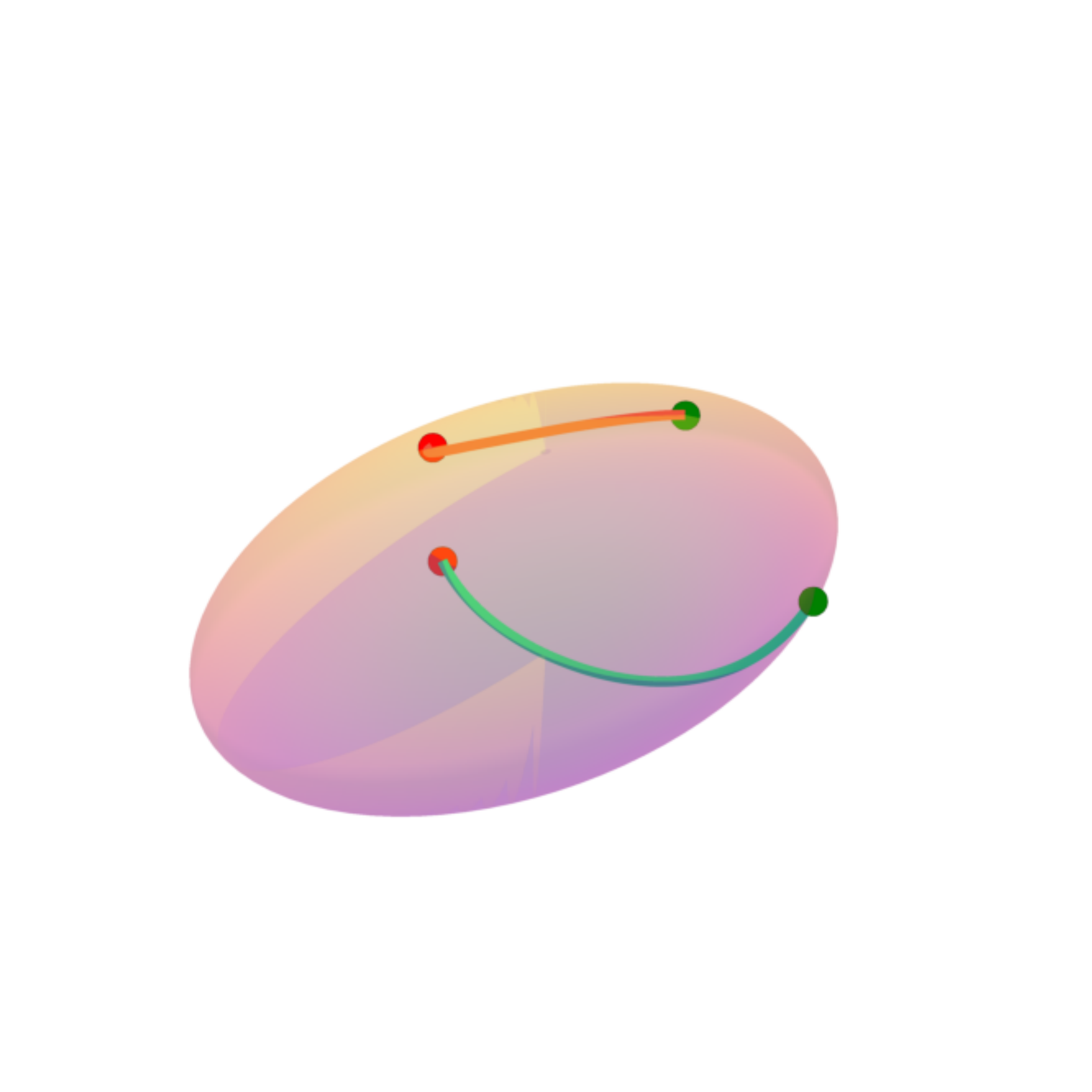}
    \end{minipage}
     \hspace*{-27pt}
    \begin{minipage}[c]{0.27\textwidth}
    \centering
    \includegraphics[width=\textwidth]{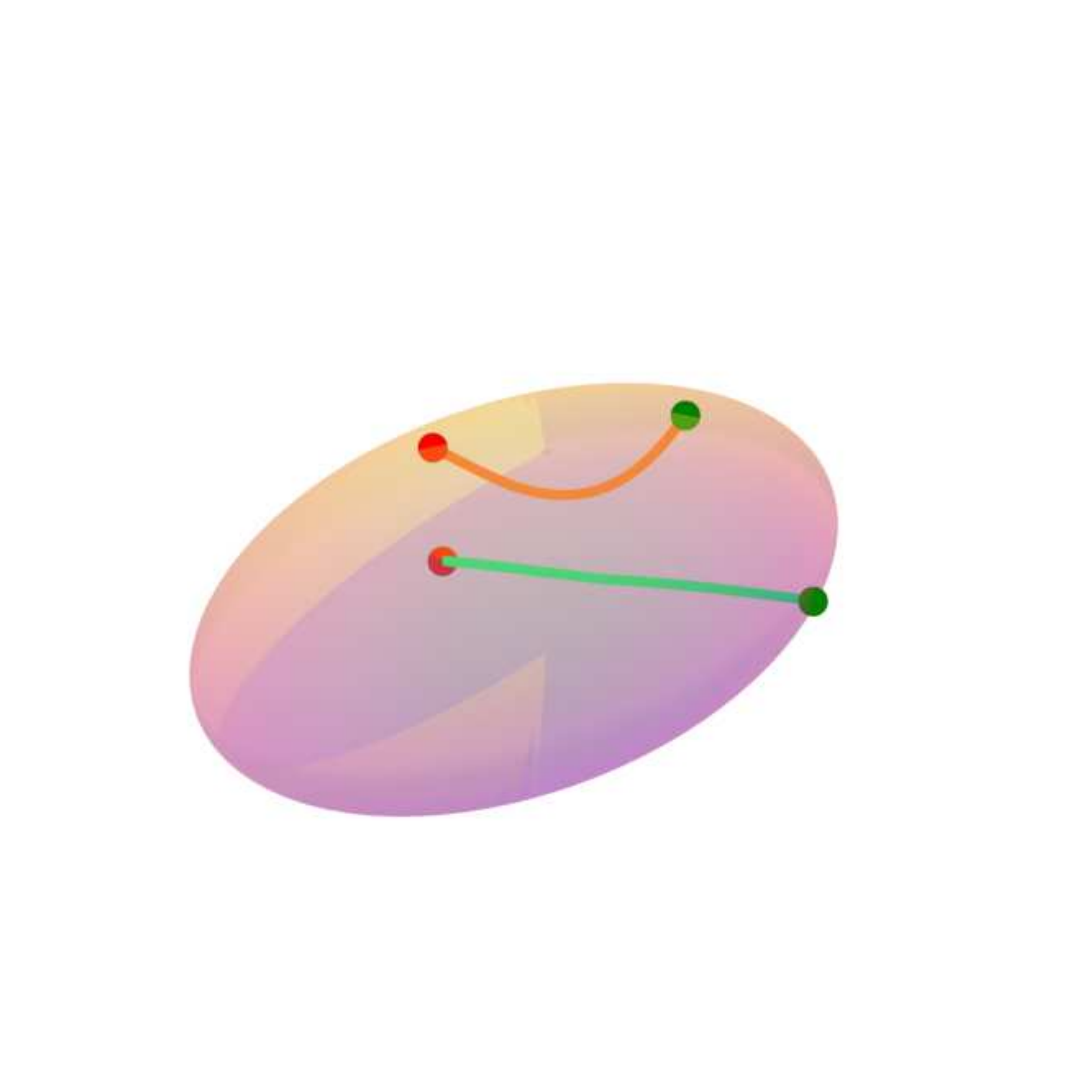}
    \end{minipage}
     \hspace*{-27pt}
    \begin{minipage}[c]{0.27\textwidth}
    \centering
    \includegraphics[width=\textwidth]{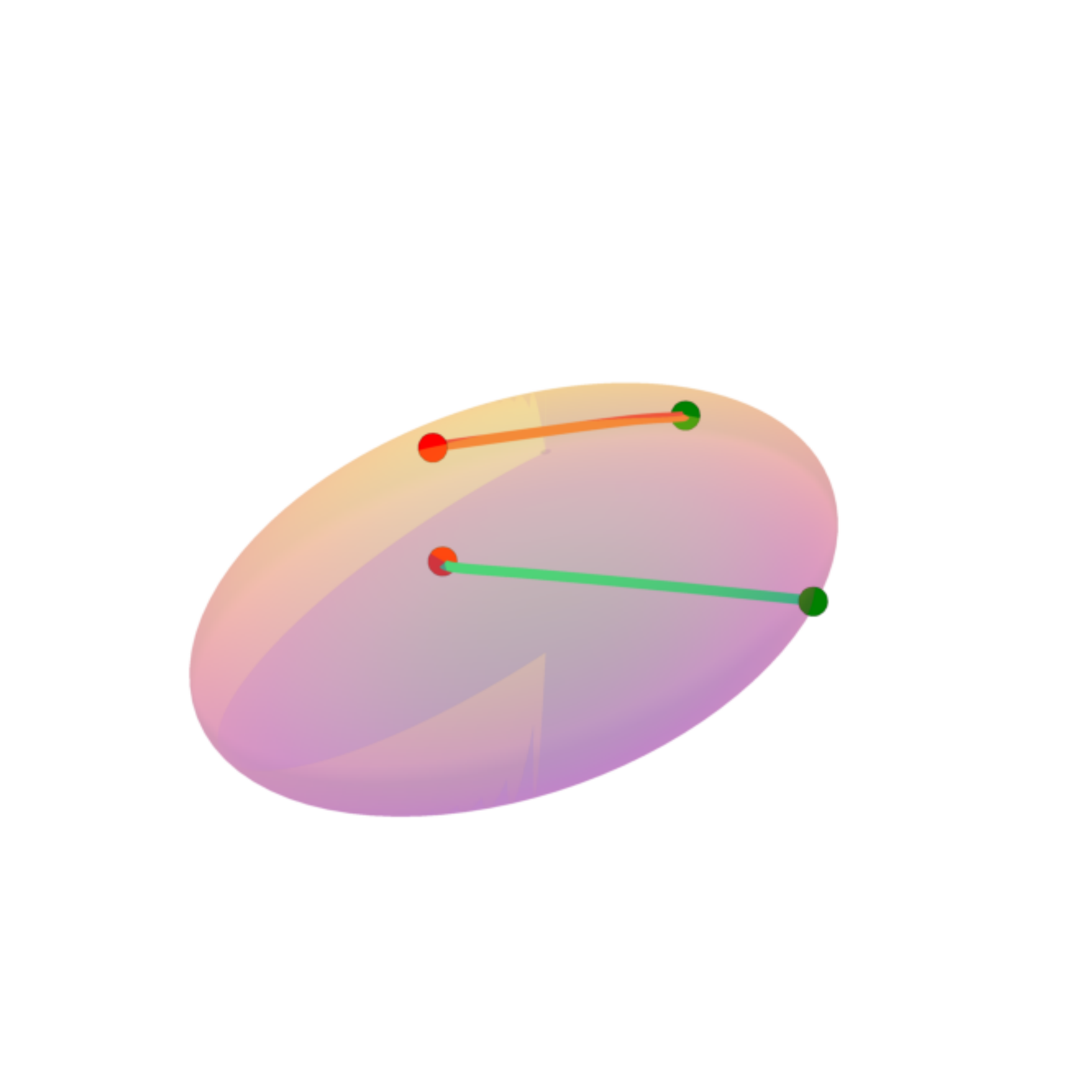}
    \end{minipage}
    \\[-30pt]

    \begin{minipage}[c]{0.27\textwidth}
    \centering
    \includegraphics[width=\textwidth]{fig/geovis_gt_torus.pdf}
    \end{minipage}
     \hspace*{-21pt}
    \begin{minipage}[c]{0.27\textwidth}
    \centering
    \includegraphics[width=\textwidth]{fig/geovis_ours_torus.pdf}
    \end{minipage}
     \hspace*{-21pt}
    \begin{minipage}[c]{0.27\textwidth}
    \centering
    \includegraphics[width=\textwidth]{fig/geovis_no_density_torus.pdf}
    \end{minipage}
     \hspace*{-21pt}
    \begin{minipage}[c]{0.28\textwidth}
    \centering
    \includegraphics[width=\textwidth]{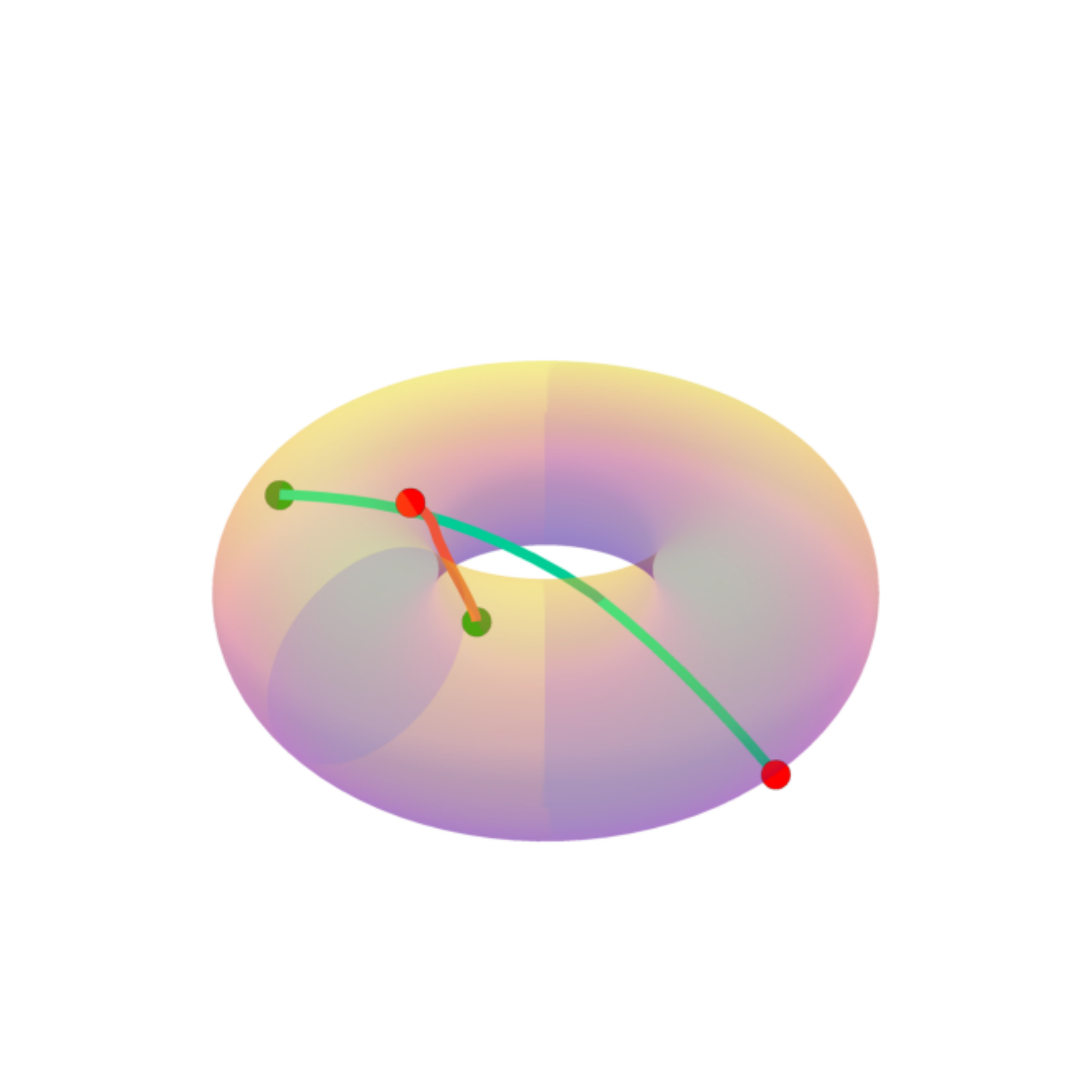}
    \end{minipage}
    \\[-30pt]

    \begin{minipage}[c]{0.27\textwidth}
    \centering
    \includegraphics[width=\textwidth]{fig/geovis_gt_saddle.pdf}
    \end{minipage}
     \hspace*{-21pt}
    \begin{minipage}[c]{0.27\textwidth}
    \centering
    \includegraphics[width=\textwidth]{fig/geovis_ours_saddle.pdf}
    \end{minipage}
     \hspace*{-21pt}
    \begin{minipage}[c]{0.27\textwidth}
    \centering
    \includegraphics[width=\textwidth]{fig/geovis_no_density_saddle.pdf}
    \end{minipage}
     \hspace*{-21pt}
    \begin{minipage}[c]{0.27\textwidth}
    \centering
    \includegraphics[width=\textwidth]{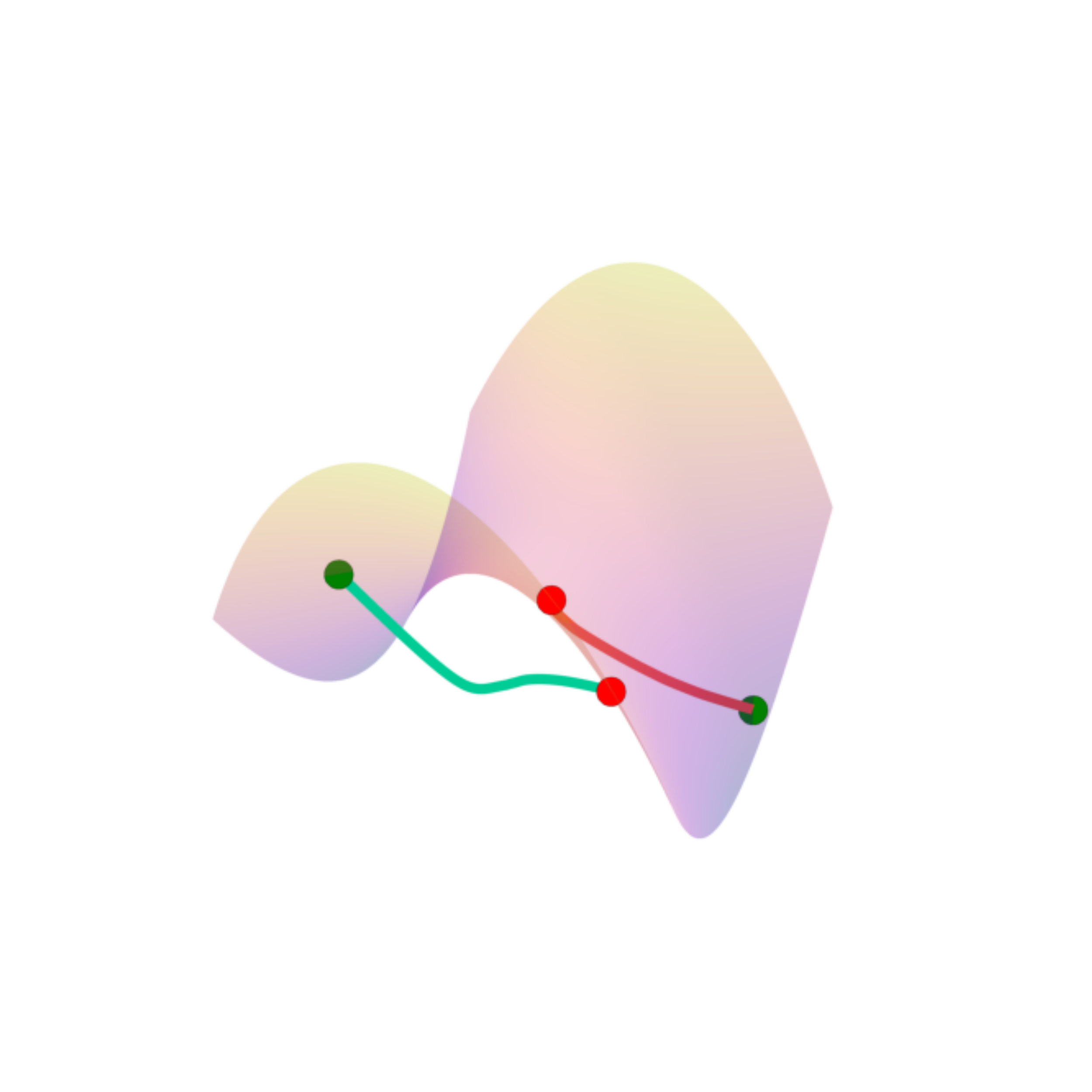}
    \end{minipage}
    \\[-30pt]

    \begin{minipage}[c]{0.27\textwidth}
    \centering
    \includegraphics[width=\textwidth]{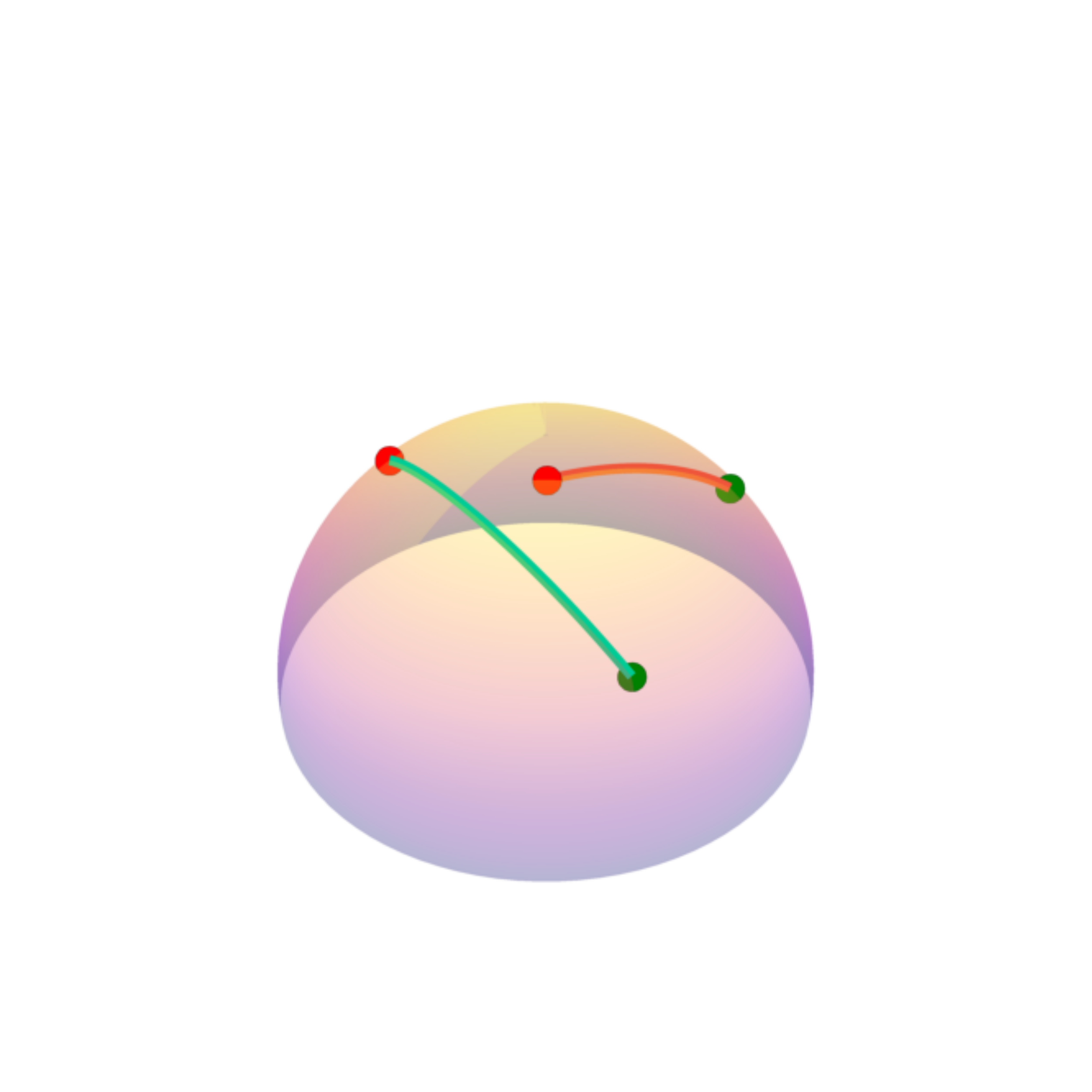}
    \end{minipage}
     \hspace*{-21pt}
    \begin{minipage}[c]{0.27\textwidth}
    \centering
    \includegraphics[width=\textwidth]{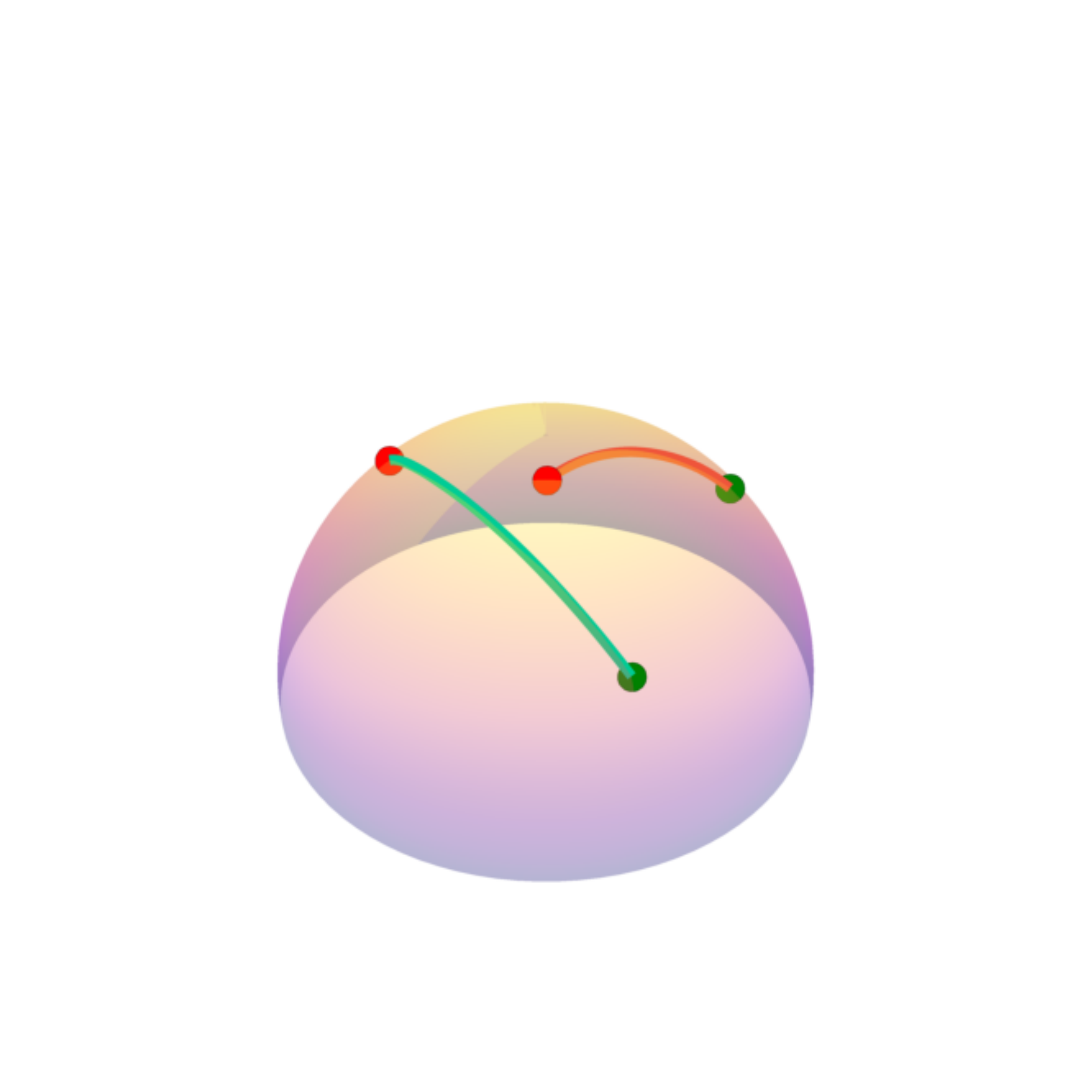}
    \end{minipage}
     \hspace*{-21pt}
    \begin{minipage}[c]{0.27\textwidth}
    \centering
    \includegraphics[width=\textwidth]{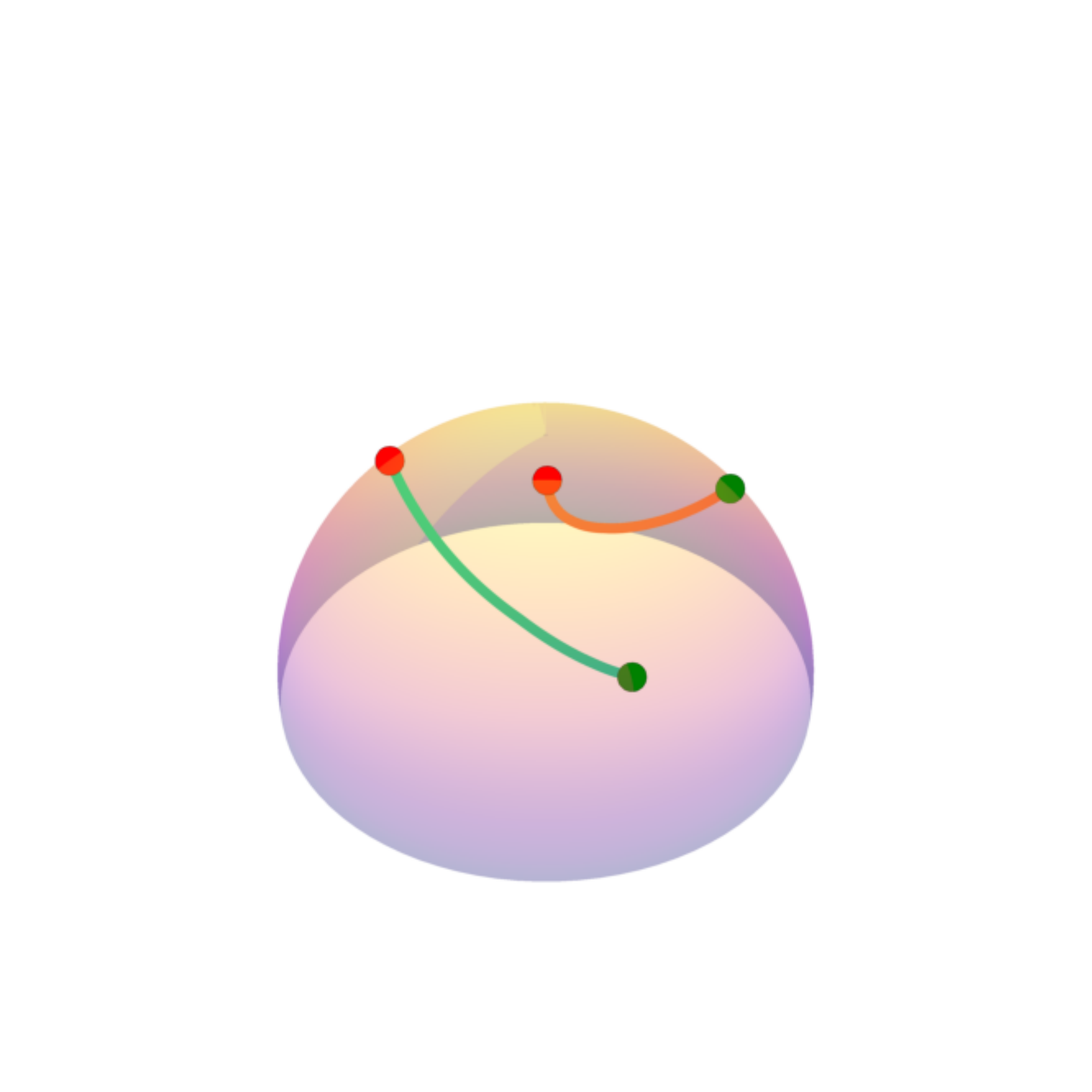}
    \end{minipage}
     \hspace*{-21pt}
    \begin{minipage}[c]{0.27\textwidth}
    \centering
    \includegraphics[width=\textwidth]{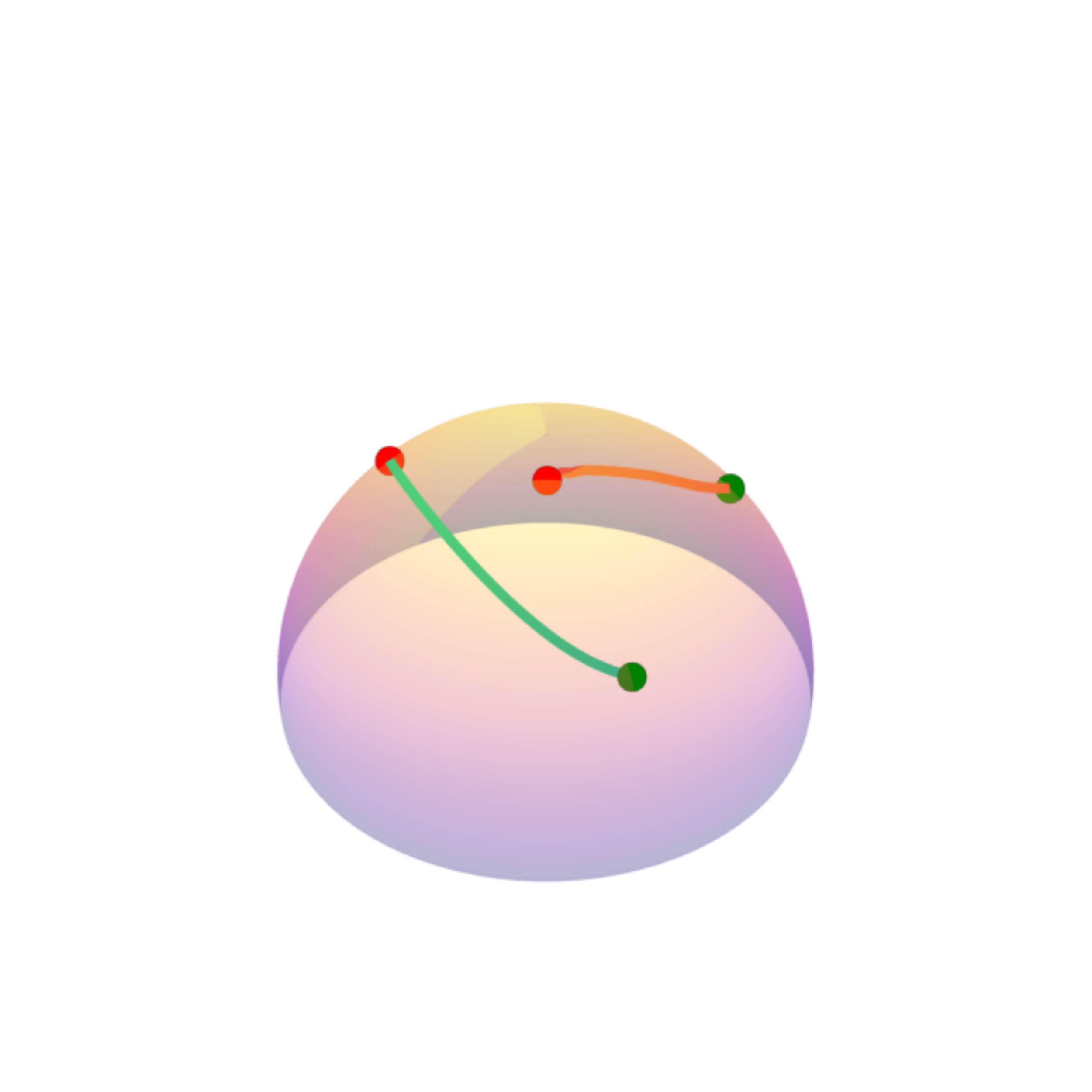}
    \end{minipage}
    
\caption{
    Comparison of geodesics.
    From left to right columns: 1) ground truth, 2) GAGA, 3) local metric, 4) density regularization.
}
\label{fig:geovis_full}
\end{figure}

\subsection{Single-cell trajectory inference}\label{appdx:results_single_cell_traj_inference}
Single-cell trajectory inference, a central task in cellular dynamics, aims to predict the continuous trajectories of cells over time. Specifically, we conducted left-one-timepoint-out experiment in which cells at one specific timepoint were excluded, and the goal is to predict the left-out cells using the cells from the remaining timepoints~\citep{TrajectoryNet}.

We repurposed the Cite and Multi single-cell datasets from the Multimodal Single-cell Integration Challenge at NeurIPS 2022~\citep{CITE_and_Multi}. Following the experiment setup in~\citep{SFSFM}, we trained and evaluated GAGA on donor 13176. For the Cite dataset, we combined both train and test inputs to obtain 29394 cells spanning from days 2, 3, 4, 7. For the Multi dataset, we used the train targets to obtain 35396 cells from days 2, 3, 4, 7. 

To perform left-one-timepoint-out experiment, we excluded day 3 and day 4, respectively, and used the remaining cells to infer the left-out populations. The train and test split ratio is 9:1, and the left-out timepoint was excluded from the training set. Our models were trained on the training set and evaluated on the test set. To reconstruct the left-out cells $X_{t}$ at time $t$ in the test set, GAGA generates the population level trajectories between $X_{t-1}$ and ${X_{t+1}}$ in the test set, and we use the points generated along the trajectories as the predicted cells $\hat{X_{t}}$. We ran experiments on 50 and 100 PCA dimensions of cells and the average Wasserstain-1 distance across the left-out timepoints was reported. The numbers listed for other methods were taken from the corresponding work.

\end{appendix}
\pagestyle{pagenumbers}


\end{document}